\newtheorem{theorem}{Theorem}
\newtheorem{proposition}{Proposition}
\newtheorem{lemma}{Lemma}
\newtheorem{corollary}[theorem]{Corollary}
\newtheorem{definition}{Definition}
\newtheorem{remark}{Remark}
\let\oldremark\remark
\renewcommand{\remark}{\oldremark\normalfont}
\def\indicator{\mathrm I}
\def\indicator{\mathds I}
\def\indicator{\mathbb I}
\def\iPR{\text{iPR}}
\def\aPR{\text{aPR}}
\def\rob{\text{rob}}
\def\AR{\text{AR}}
\def\cal{\text{cal}}
\def\tr{\text{tr}}
\def\test{\text{test}}
\def\and{\mathrm{and}}
\def\calX{\mathcal X}
\def\calE{\mathcal E}
\def\calC{\mathcal C}
\def\calY{\mathcal Y}
\def\calP{\mathcal P}
\def\calD{\mathcal D}
\def\E{\mathbb E}
\def\P{\mathbb P}
\def\R{\mathbb R}
\begin{document}

\newcommand{\swap}[3][-]{#3#1#2} 

\title{Probabilistically Robust Conformal Prediction}

\author[1=]{Subhankar Ghosh}
\author[1=]{Yuanjie Shi}
\author[1]{Taha Belkhouja}
\author[1]{Yan Yan}
\author[1]{Janardhan Rao Doppa}
\author[2]{Brian Jones}
\affil[1]{%
    School of Electrical Engineering and Computer Science\\
    Washington State University
}
\affil[2]{%
    Proofpoint Inc.   
}  

\maketitle

\begin{abstract}
Conformal prediction (CP) is a framework to quantify uncertainty of machine learning classifiers including deep neural networks. Given a testing example and a trained classifier, CP produces a prediction set of candidate labels with a user-specified  coverage (i.e., true class label is contained with high probability). Almost all the existing work on CP assumes clean testing data and there is not much known about the robustness of CP algorithms w.r.t natural/adversarial perturbations to testing examples. This paper studies the problem of probabilistically robust conformal prediction (PRCP) which ensures robustness to most perturbations around clean input examples. PRCP generalizes the standard CP (cannot handle perturbations) and adversarially robust CP (ensures robustness w.r.t worst-case perturbations) to achieve better trade-offs between nominal performance and robustness.  We propose a novel adaptive PRCP (aPRCP) algorithm to achieve probabilistically robust coverage. The key idea 
behind aPRCP is to determine two parallel thresholds, one for data samples and another one for the perturbations on data (aka ``{\em quantile-of-quantile}'' design). We provide theoretical analysis to show that aPRCP algorithm achieves robust coverage. Our experiments on CIFAR-10, CIFAR-100, and ImageNet datasets using deep neural networks demonstrate that aPRCP achieves better trade-offs than state-of-the-art CP and adversarially robust CP algorithms.

\end{abstract}

\section{Introduction}

\begin{figure*}[t]
\centering
\includegraphics[width=.45\linewidth]{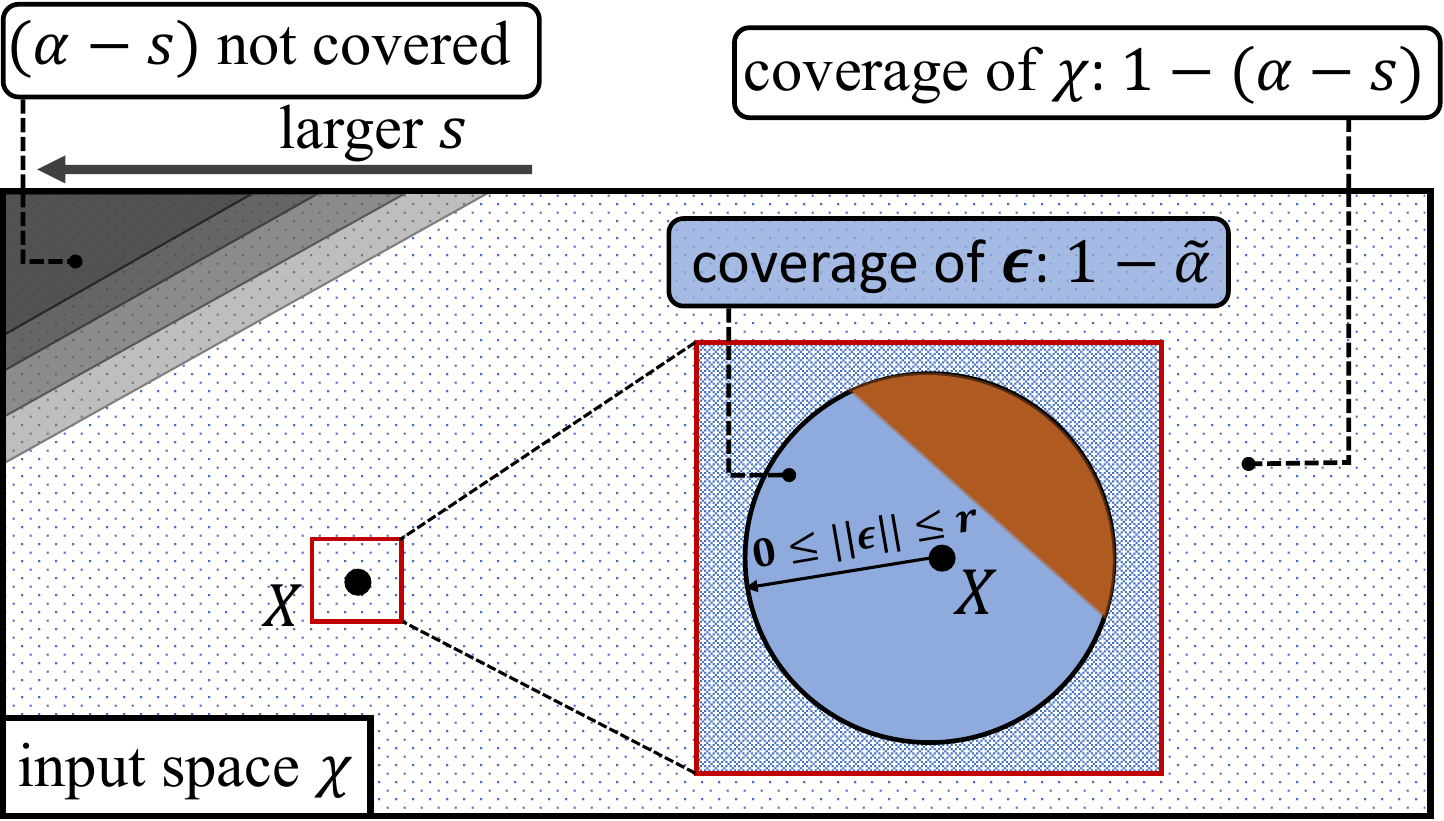}
\caption{Conceptual illustration of the adaptive PRCP setting. 
The goal is to improve the robustness of the CP framework to handle perturbations $\epsilon$ bounded by $r$ for every input $X\in\mathcal{X}$. 
The robust quantile corresponding to 1-$\tilde{\alpha}$ region (blue circle around $X$) is computed by accounting for most of the perturbed data $X+\epsilon$ (see (\ref{eq:robust_quantile_x})). 
$s$ is a conservativeness parameter for the robust quantile that can be varied to achieve the target marginal coverage $1-\alpha + s$ (see (\ref{eq:aPRCP_threshold})). 
Adaptive PRCP can find a trade-off between the marginal coverage on feature space $(X, Y)$ and the robustness for perturbation $\epsilon$ by changing the value of $\tilde \alpha$ and $s$ to achieve probabilistically robust coverage (See Definition \ref{definition:prob_robust_coverage}).
}
\label{fig:illustration}
\end{figure*}

Deep learning has shown significant success in diverse real-world applications.
However, to deploy these deep models in safety-critical applications (e.g, autonomous driving and medical diagnosis), we need uncertainty quantification (UQ) tools to capture the deviation of the prediction from the ground-truth output. For example, producing a subset of candidate labels referred to as {\em prediction set} for classification tasks. Conformal prediction (CP) \citep{vovk1999machine,vovk2005algorithmic,shafer2008tutorial} is a framework for UQ that provides formal guarantees for a user-specified {\em coverage}: ground-truth output is contained in the prediction set with a high probability $1 - \alpha$ (e.g., 90\%).
There are two key steps in CP. First, in the prediction step, we use a black-box classifier (e.g., deep neural network) to compute 
{\it (non-)conformity} scores which measure similarity between calibration examples and a testing input. Second, in the calibration step, we use the conformity scores on a set of calibration examples to find a threshold to construct prediction set which meets the coverage constraint (e.g., $1 - \alpha$=90\%). The {\em efficiency} of CP \citep{sadinle2019least} is measured in terms of size of the prediction set (the smaller the better) which is important for human-ML collaborative systems \citep{rastogi2022unifying}. 

In spite of the recent successes of CP \citep{vovk2005algorithmic}, there is little known about the robustness of CP to adversarial perturbations of clean inputs. Most CP methods \citep{cauchois2020robust,gibbs2021adaptive,tibshirani2019conformal,podkopaev2021distribution,guan2022prediction} are brittle as they assume clean input examples and cannot handle {\em any} perturbations. 
The recent work on adversarially robust CP \citep{gendler2022adversarially} ensures robustness to {\em all} perturbations bounded by a norm ball with radius $r$. 
However, this conservative approach of dealing with {\em worst-case} perturbations can degrade the nominal performance (evaluation on only clean inputs) of the CP method. 
For example, the prediction set size can be large even for clean and easy-to-classify inputs, which increases the burden of human expert in human-ML collaborative systems \citep{cai2019human,rastogi2022unifying}. The main research question of this paper is: {\em how can we develop probably correct CP
algorithms for ensuring robustness to most perturbations
for (pre-trained) deep classifiers?} \footnote{$= $Equal contribution by first two authors}

To answer this question, we present a general notion for probabilistically robust coverage that balances the standard conformal coverage and the adversarial (worst-case) coverage as the fundamental setting.
To address this challenge, we develop the adaptive PRCP algorithm (aPRCP) which is based on the principle of "{\em quantile-of-quantile}" design: consists of two parallel quantiles as illustrated in Figure \ref{fig:illustration}: one defined in the perturbed noise space (see (\ref{eq:robust_quantile_x})), the other one in the data space (\ref{eq:aPRCP_threshold}). 
Our analysis fixes one quantile probability as a given hyper-parameter, and finds the other one to achieve the target probabilistically robust coverage.
We provide theoretical analysis for probabilistic correctness of aPRCP at the population level and the approximation error of empirical quantiles as a function of the number of samples. As a result, aPRCP achieves improved trade-offs between nominal performance (evaluation on clean inputs) and robust performance (evaluation on perturbation inputs) for both probabilistic and worst-case settings as illustrated in Figure \ref{All_ARCP_PRCP_mainPaper1}, which is analogous to the recent work on probabilistically robust learning \cite{robey2022probabilistically}.

\noindent {\bf Contributions.} The key contribution of this paper is the development, theoretical analysis, and empirical evaluation of the aPRCP algorithm. Our specific contributions include:

\begin{itemize}
\item A general notion of probabilistically robust coverage for conformal prediction against perturbations of clean input examples.
\item Development of the adaptive PRCP algorithm based on the principle of "\textit{quantile-of-quantile}" design.
\item Theory to show that aPRCP algorithm achieves probabilistically robust coverage for adversarial examples.
\item Experimental evaluation of aPRCP method on classification benchmarks using deep models to demonstrate its efficacy over prior CP methods on CIFAR-10, CIFAR-100, and ImageNet.
\end{itemize}

\begin{figure*}[!h]
    \centering
    \begin{minipage}{.98\linewidth}
        \begin{minipage}{\linewidth}
            \centering
            \includegraphics[width=0.6\linewidth]{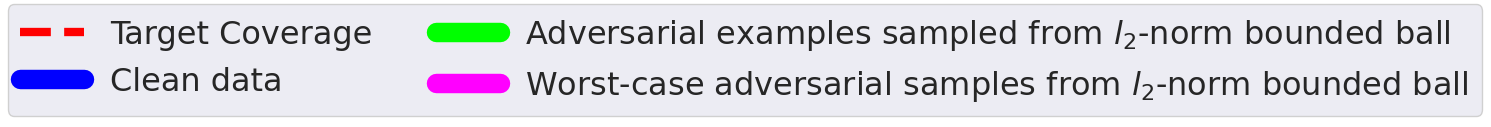}
        \end{minipage}     
        \begin{minipage}{.24\linewidth}
            \centering
            (a)
        \end{minipage}
        \hfill
        \begin{minipage}{.24\linewidth}
            \centering
            (b)
        \end{minipage} 
        \begin{minipage}{.24\linewidth}
            \centering
            (c)
        \end{minipage}
        \hfill
        \begin{minipage}{.24\linewidth}
            \centering
            (d)
        \end{minipage}
        \hfill
        \begin{minipage}{.24\linewidth}
            \includegraphics[width=\linewidth]{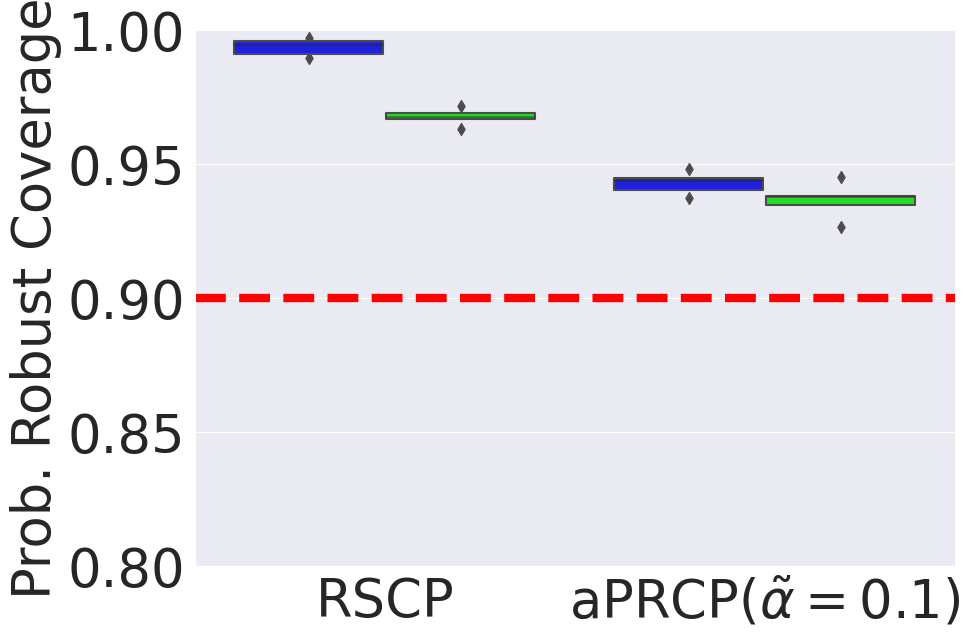}
        \end{minipage}
        \hfill
        \begin{minipage}{.24\linewidth}
            \centering
            \includegraphics[width=\linewidth]{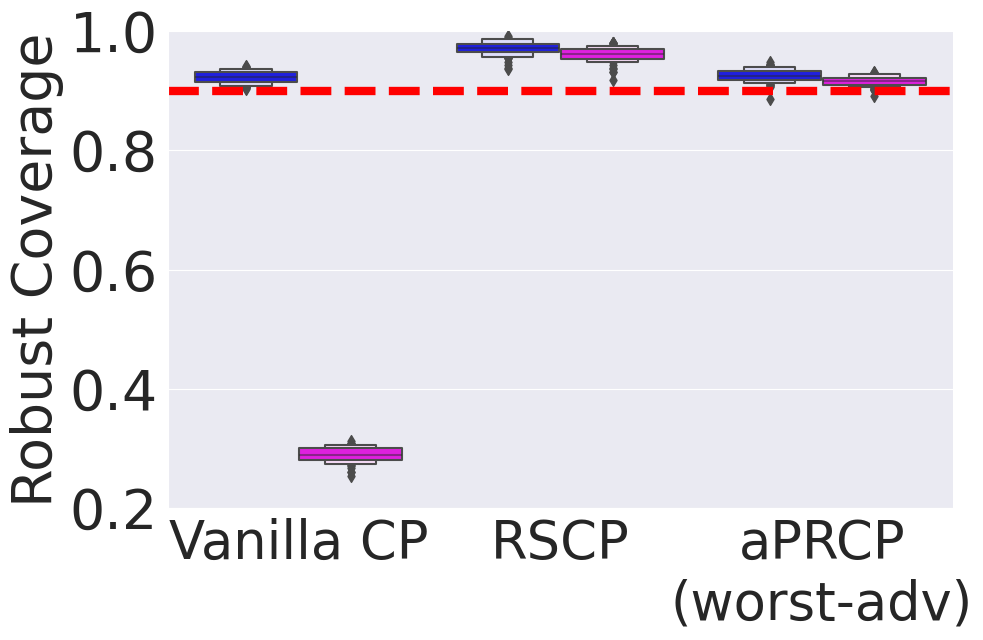}
        \end{minipage}    
        \hfill
        \begin{minipage}{.24\linewidth}
            \includegraphics[width=\linewidth]{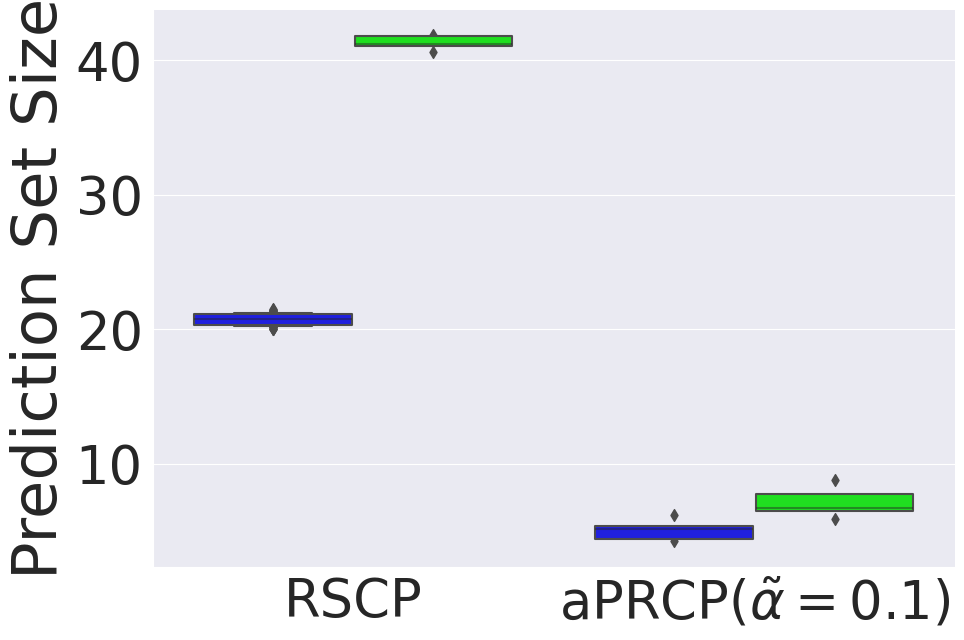}
        \end{minipage}
        \hfill
        \begin{minipage}{.24\linewidth}
            \centering
            \includegraphics[width=\linewidth]{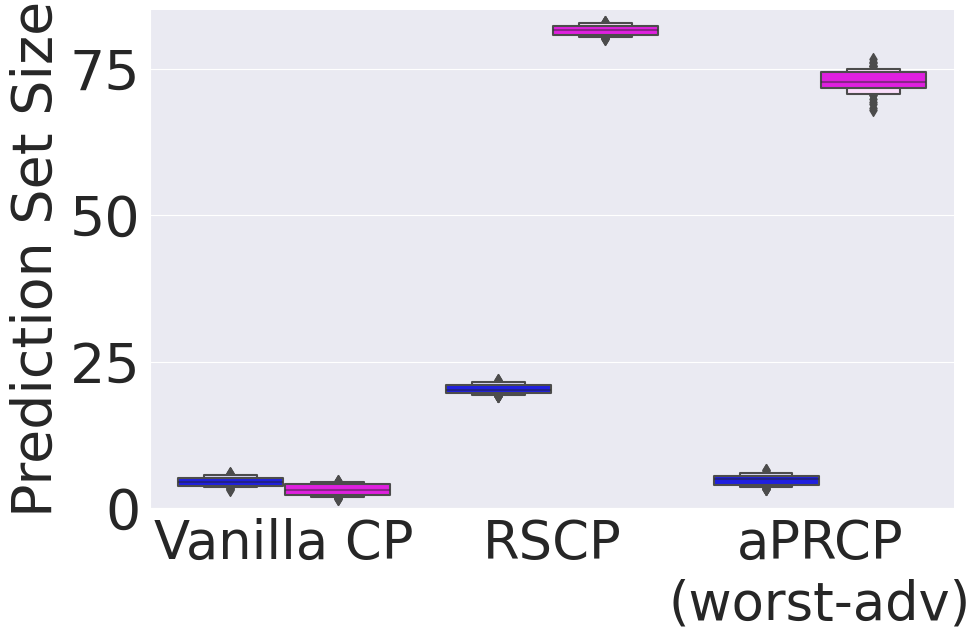}
        \end{minipage}    
    \end{minipage}
    \caption{Results on CIFAR100 dataset using a ResNet model to illustrate the trade-offs between nominal performance (evaluation on clean data) and robust performance (evaluation on adversarial examples) for Vanilla CP, RSCP, and variants of the aPRCP algorithm. (a) and (c) show the evaluation against clean examples and their corresponding noisy samples (i.e., $\tilde{X} = X + \epsilon; ||\epsilon||_2 \leq r$) w.r.t probabilistic robustness. (b) and (d) show the evaluation against clean examples and their corresponding bounded adversarial examples. aPRCP(worst-adv) is the variant of aPRCP that works for worst adversarial data. Vanilla CP fails to achieve coverage for worst-case adversarial data. RSCP achieves a robust coverage much higher than the target (nominal) coverage, resulting in large prediction sets. aPRCP achieves better results (tighter coverage and smaller prediction set size) than vanilla CP and RSCP in terms of the joint performance on clean, noisy, and worst-adversarial data.}
    \label{All_ARCP_PRCP_mainPaper1}
\end{figure*}

\section{Background and Problem Setup}

We consider the problem of uncertainty quantification (UQ) of pre-trained deep models for classification tasks in the presence of adversarial perturbations. Suppose $(X, Y)$ is a data sample where $X$ is an input from the space $\mathcal{X}$ and $Y \in \mathcal{Y}$ is the corresponding ground-truth output. For classification tasks, $\mathcal{Y}$ is a set of $C$ discrete class-labels $\{1, 2, \cdots, C\}$. 
Let $\epsilon$ denote the $l_2$-norm bounded noise, i,e,. $\calE_r = \{ \epsilon \in \calX : \| \epsilon \|_2 \leq r \}$ that is independent from data sample $(X, Y)$.
Let $\calP_{X, Y}$ and $\calP_\epsilon$  denote the underlying distribution of $(X, Y)$ and $\epsilon$, respectively. We also define $Z = (X, Y, \epsilon)$ as the joint random variable and the perturbed input example $\widetilde X = X + \epsilon$ for notational simplicity.

\noindent {\bf Uncertainty Quantification.} Let $\calD_\tr$ and $\calD_\cal$ correspond to sets of training and calibration examples drawn from a target distribution $\calP_{X, Y}$.
We assume the availability of a pre-trained deep model $F_{\theta}: \mathcal{X} \mapsto \mathcal{Y}$, where $\theta$ stands for the parameters of the deep model. For a given testing input $\widetilde X$, we want to compute UQ of the deep model $F_{\theta}$ in the form of a prediction set $\mathcal{C}(\widetilde X)$, a subset of candidate class-labels $\{1, 2, \cdots, C\}$. The performance of UQ for clean data samples (i.e.,  $\epsilon$=0) is measured using two metrics. First, the (marginal) {\em coverage} is defined as the probability that the ground-truth output $Y$ is contained in $\mathcal{C}(X)$ for a testing example $(X, Y)$ from the same data distribution $\calP_{X, Y}$, i.e., $\mathbb{P}(Y \in \mathcal{C}(X))$. The empirical coverage \texttt{Cov} is measured over a given set of testing examples $\calD_\test$. Second, {\em efficiency}, denoted by \texttt{Eff}, measures the cardinality of the prediction set $\mathcal{C}(X)$. Smaller prediction set means higher efficiency. It is easy to achieve the desired coverage (say 90\%) by always outputting $\mathcal{C}(X)$=$\mathcal{Y}$ at the expense of poor efficiency.

\noindent {\bf Conformal Prediction (CP).} 
CP is a framework that allows us to compute UQ for any given predictor through a conformalization step. 
The key element of CP is a score function $S$ that computes the {\em conformity} (or {\em non-conformity}) score, measures similarity between labeled examples, which is used to compare a given testing input to the calibration set $\calD_\cal$.  
Since any non-conformity score can be intuitively converted to a conformity measure \citep{vovk2005algorithmic}, we use non-conformity measure for ease of technical exposition. Let $S(X, Y)$ denote the non-conformity score function of data sample $(X, Y)$. For a sample $(X_i, Y_i)$ from the calibration set $\calD_\cal$, we use $S_i = S(X_i, Y_i)$ as a shorthand notation of its non-conformity score.

A typical method based on split conformal prediction has a threshold $\tau$ to compute UQ in the form of prediction set for a given testing input $X$ and deep model $F_{\theta}$. A small set of calibration examples $\calD_\cal$ are used to select the threshold $t$ for achieving the given coverage $1-\alpha$ (say 90\%) empirically on  $\calD_\cal$. 
Let $Q(\alpha) := \min \{ t : \P_{X, Y} \{ S(X, Y) \leq t \} \geq 1 - \alpha \}$ be the true quantile of the conformity score for $(X, Y)$.
Let $\calD_\cal = \{(X_i, Y_i)\}_{i=1}^n$ denote a calibration set with $n$ exchangeably drawn random samples from the underlying distribution $\calP_{X, Y}$.
We denote the ($1-\alpha$)-quantile derived from $\{S_i\}_{i=1}^n$ by $Q(\alpha; \{S_i\}_{i=1}^n)$ = $S_{(\lceil (1-\alpha) (n+1) \rceil)}$. The prediction set for a new testing input $X$ is given by $\mathcal{C}(X)$=$\{y: S(X, y) \le \tau \}$ using a threshold $\tau$. 
CP provides valid guarantees that $\mathcal{C}(X)$ has coverage $1-\alpha$ on future examples drawn from the same distribution $\calP_{X, Y}$.

For classification, several non-conformity scores can be employed. The homogeneous prediction sets (HPS) score is defined \citep{vovk2005algorithmic, lei2013distribution} as follows:
\begin{equation}
\label{HPS_eq}
    S^{\text{HPS}}(X, y ) = 1 - F_{\theta}(X)_y,
\end{equation}
where $F_{\theta}(X)_y \in [0, 1]$ is the probability corresponding to the true class $y$ using the deep model $F_{\theta}$. Recent work has proposed the adaptive prediction sets (APS) \citep{romano2020classification} score that is based on ordered probabilities. The score function of APS is defined as follows:
\begin{align}
\label{APS_eq}
    S^{\text{APS}}(X, y ) = &\sum_{y^{'} \in \mathcal{Y}} F_{\theta}(X)_{y^{'}}\mathds{1}\left\{ F_{\theta}(X)_{y^{'}} >  F_{\theta}(X)_y\right\}
\nonumber\\
&+ u. F_{\theta}(X)_y,
\end{align}
where $u$ is a random variable uniformly distributed over $[0, 1]$ and $\mathds{1}$ is the indicator function.


\noindent {\bf Problem Definition.} The high-level goal of this paper is to study methods to improve the robustness of the standard CP framework to adversarial/noisy examples of the form $\widetilde X = X + \epsilon$, where $\epsilon$ is the additive perturbation from $\calE_r = \{ \epsilon \in \R^d : \| \epsilon \|_p \leq r\}$. 
Specifically, we propose a novel adaptive probabilistically robust conformal prediction (aPRCP) algorithm which accounts for $(1-\tilde \alpha)$ (see $\tilde \alpha$ for robust quantile in (\ref{eq:robust_quantile_x})) fraction of perturbations in $\calE_r$ for each data $(X, Y)$.
Setting $\tilde \alpha = 0$ as an extreme case makes aPRCP handle all perturbations (i.e., worst-case), similar to RSCP \citep{gendler2022adversarially}.
We theoretically and empirically analyze aPRCP to demonstrate improved trade-offs between nominal performance (evaluation on clean inputs) and robust performance (evaluation on perturbation inputs). 
Figure \ref{fig:illustration} conceptually illustrates the PRCP problem setting.

\section{Robust Conformal Prediction}
\label{section:RCP}

This section describes our proposed adaptive probabilistically robust conformal prediction (aPRCP) algorithm. First, we introduce the notion of adversarially robust coverage and extend it to  probabilistically robust coverage. Next, we motivate the significance of aPRCP algorithm and study the theoretical connection between aPRCP and adversarially robust CP setting \citep{gendler2022adversarially} in terms of probabilistically robust coverage and prediction set size. Finally, we analyze the gap between empirical and population level quantiles in terms of the number of data samples.

\subsection{Probabilistically Robust Coverage}

This section introduces the expanded notation of inflation condition on the conformity scoring function from the worst-case adversarial robustness setting to the more general probabilistic robustness setting. We start with the following definitions that are originally introduced for the ARCP setting \citep{gendler2022adversarially} and capture the inflation property of the score function for deriving adversarial robustness.

\begin{definition}
\label{definition:adv_robust_coverage}
(Adversarially robust coverage)
A prediction set $\calC(\widetilde X)$ provides ($1-\alpha$)-adversarially robust coverage if for a desired coverage probability $1-\alpha \in (0,1)$:
\begin{align}
\label{eq:adv_robust_coverage}
\P_{X, Y} \{ Y \in \calC(\widetilde X = X + \epsilon), \forall \epsilon \in \calE_r \} \geq 1 - \alpha .
\end{align}
\end{definition}

\begin{definition}
\label{definition:adv_inflated_score}
($M_r$-adversarially inflated score function)
$S: \calX \times \calY \rightarrow \R$ is an $M_r$-adversarially inflated score function if the following inequality holds:
\begin{align}
\label{eq:adv_inflated_score}
&
S(X + \epsilon, Y) 
\leq 
S(X, Y) + M_r,
\nonumber\\
&
\qquad \qquad \qquad \qquad  
\forall X \in \calX, Y \in \calY \text{ and } \epsilon \in \calE_r .
\end{align}
\end{definition}

The strategy of RSCP algorithm \citep{gendler2022adversarially} for the ARCP setting is to directly add an inflated quantity $M_r$ to the quantile determined from the clean data $(X, Y)$, 
\begin{align}
\label{eq:ar_threshold}
\tau^\AR(\alpha) := Q(\alpha) + M_r,
\end{align}
and construct a prediction set with $\calC^\AR(X) = \{ y \in \calY : S(X + \epsilon, y) \leq \tau^\AR(\alpha) \}$.
To this end, since $Q(\alpha)$ provides $(1-\alpha)$ marginal coverage on clean data $(X, Y)$, $\tau^\AR(\alpha)$ thus guarantees $(1-\alpha)$-adversarially robust coverage on adversarial data $(X + \epsilon, Y)$.

This result is summarized in the following proposition.
\begin{proposition}
\label{proposition:AR_coverage_ARCP}
(Adversarially robust coverage of RSCP, Theorem 1 in \citep{gendler2022adversarially})
Assume the score function $S$ is $M_r$-adversarially inflated.
Let $\calC^\AR(\widetilde X) = \{ y \in \calY : S(\widetilde X, y) \leq \tau^\AR(\alpha) \}$ be the prediction set for a testing sample $\widetilde X$.
Then RSCP achieves ($1-\alpha$)-adversarially robust coverage.
\end{proposition}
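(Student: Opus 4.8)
The plan is to prove the coverage bound by establishing a set inclusion between the clean-coverage event and the adversarially robust coverage event, and then invoking the defining property of the quantile $Q(\alpha)$. The crucial leverage is Definition \ref{definition:adv_inflated_score}: the $M_r$-inflation bound lets us dominate the worst perturbed score by the clean score plus $M_r$, which is precisely the slack that was baked into $\tau^\AR(\alpha) = Q(\alpha) + M_r$.

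First I would rewrite the target event pointwise. For a fixed realization $(X, Y)$, by the definition $\calC^\AR(X+\epsilon) = \{ y \in \calY : S(X+\epsilon, y) \leq \tau^\AR(\alpha) \}$, the event $\{ Y \in \calC^\AR(X+\epsilon),\ \forall \epsilon \in \calE_r \}$ holds if and only if $\sup_{\epsilon \in \calE_r} S(X+\epsilon, Y) \leq \tau^\AR(\alpha)$; that is, the universal quantifier over perturbations is equivalent to controlling the supremum over $\calE_r$.

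Next I would eliminate the adversary using the inflation property. Since $S(X+\epsilon, Y) \leq S(X, Y) + M_r$ holds for \emph{every} $\epsilon \in \calE_r$ (Definition \ref{definition:adv_inflated_score}), taking the supremum over $\epsilon$ preserves the bound, giving $\sup_{\epsilon \in \calE_r} S(X+\epsilon, Y) \leq S(X, Y) + M_r$. Combining this with $\tau^\AR(\alpha) = Q(\alpha) + M_r$ yields the event inclusion
\[
\{ S(X, Y) \leq Q(\alpha) \} \subseteq \{ Y \in \calC^\AR(X+\epsilon),\ \forall \epsilon \in \calE_r \},
\]
because on the left-hand event one has $\sup_{\epsilon} S(X+\epsilon, Y) \leq S(X, Y) + M_r \leq Q(\alpha) + M_r = \tau^\AR(\alpha)$, which puts $(X,Y)$ into the right-hand event.

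Finally I would take $\P_{X,Y}$ on both sides. By monotonicity of probability under set inclusion together with the defining property of the quantile $Q(\alpha) = \min\{ t : \P_{X,Y}\{ S(X, Y) \leq t \} \geq 1-\alpha \}$, which guarantees $\P_{X,Y}\{ S(X, Y) \leq Q(\alpha) \} \geq 1-\alpha$, I obtain
\[
\P_{X,Y}\{ Y \in \calC^\AR(X+\epsilon),\ \forall \epsilon \in \calE_r \} \geq \P_{X,Y}\{ S(X, Y) \leq Q(\alpha) \} \geq 1-\alpha,
\]
which is exactly the $(1-\alpha)$-adversarially robust coverage of Definition \ref{definition:adv_robust_coverage}. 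The whole argument collapses to a short chain of inequalities once the inflation property is used to absorb the perturbation, so there is no substantial obstacle; the only points needing care are that the inflation bound must hold uniformly in $\epsilon$ so it survives the supremum (i.e.\ the ``$\forall \epsilon$'' quantifier), and that $Q(\alpha)$ is the \emph{population} quantile, so the statement is a population-level guarantee rather than a finite-sample one.
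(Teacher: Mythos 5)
Your proof is correct and follows essentially the same route as the paper's: use the $M_r$-inflation bound to reduce the perturbed-score event to the clean event $\{S(X,Y)\le Q(\alpha)\}$, then invoke the defining property of the population quantile. If anything, your explicit handling of the ``$\forall\epsilon$'' quantifier via the supremum is slightly more careful than the paper's chain of inequalities, which is written under $\P_Z$ and leaves the uniformity over $\epsilon$ implicit.
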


Now we extend the notion of adversarially robust coverage to the more general and relaxed condition, i.e., probabilistically robust coverage, by introducing the definition below.
\begin{definition}
\label{definition:prob_robust_coverage}
(Probabilistically robust coverage)
A prediction set $\calC(\widetilde X)$ provides ($1-\alpha$)-probabilistically robust coverage if for a desired coverage probability $1-\alpha \in (0, 1)$:
\begin{align}
\label{eq:prob_robust_coverage}
\P_{X, Y, \epsilon} \{ Y \in \calC(\widetilde X = X + \epsilon) \} \geq 1 - \alpha .
\end{align}

\end{definition}

We highlight that the key difference between adversarially robust coverage (Definition \ref{definition:adv_robust_coverage}) and probabilistically robust coverage (Definition \ref{definition:prob_robust_coverage}) is whether the distribution of the perturbation $\epsilon$ is involved in the comparison with the target probability $1-\alpha$:
probabilistically robust coverage goes though the joint distribution involving $\epsilon$, i.e., $\P_{X, Y, \epsilon}\{ \cdot \}$ in (\ref{eq:prob_robust_coverage}) instead of $\P_{X, Y}\{ \cdot, \forall \epsilon \in \calE_r \}$ in (\ref{eq:adv_robust_coverage}).
Based on this understanding, we can see that a conformal prediction method can achieve ($1-\alpha$)-probabilistically robust coverage if it can satisfy ($1-\alpha$)-adversarially robust coverage.
For the same target probability $(1-\alpha)$, adversarially robust coverage is more difficult to achieve than probabilistically robust coverage. Hence, the notion of probabilistic robustness for CP is more general and relaxed.

Naturally, we now extend the definition of the uniform inflated score function (Definition \ref{definition:adv_inflated_score}) to the following one.
\begin{definition}
\label{definition:prob_inflated_score}
($M_{r, \eta}$-probabilistically inflated score function)
$S : \calX \times \calY \rightarrow \R$ is an $M_{r, \eta}$-probabilistically inflated score function if the following inequality holds for $\eta \in [0, \alpha]$:
\begin{align}
\label{eq:prob_inflated_score}
\P_Z \big\{ 
S(X + \epsilon, Y) 
\leq 
S(X, Y) + M_{r, \eta}
\big\} 
\geq 
1 - \eta .
\end{align}
\end{definition}

The above definition regarding the inflation of the score function is general and includes (\ref{eq:adv_inflated_score}) given in Definition \ref{definition:adv_inflated_score} as a special case:
By simply setting $\eta = 0$, we get $\P_Z\{ S(X + \epsilon, Y) \leq S(X, Y) + M_{r, 0} \} \geq 1$, i.e., $M_{r, 0} = M_r$.

Again, we highlight that the above condition involves the joint distribution on $Z$, as in Definition \ref{definition:prob_robust_coverage}.

Based on the extension from adversarial to probabilistic robustness setting, it is easy to develop a similar principle on the {\it inflated} score function to derive probabilistically robust coverage, which we refer to as inflated probabilistically robust conformal prediction (\texttt{iPRCP}).
To this end, let
$$\tau^\iPR(\alpha; \eta) := Q(\alpha^*_\iPR) + M_{r, \eta},$$ 
where $\alpha^*_\iPR = 1 - ( 1 - \alpha ) / ( 1 - \eta )$.
$\tau^\iPR(\alpha; \eta)$ is the threshold determined by iPRCP that treats $\eta$ from probabilistically inflated score function as a hyper-parameter.
We use $\alpha^*_\iPR$ as the probability for deriving the quantile on clean data, as (\ref{eq:ar_threshold}) in ARCP.

\begin{proposition}
\label{proposition:PR_coverage_iPRCP}
(Probabilistically robust coverage of iPRCP)
Assume the score function $S$ is an $M_{r, \eta}$-probabilistically inflated.
Let $\calC^\iPR(\widetilde X) = \{ y \in \calY : S(\widetilde X, y) \leq \tau^\iPR(\alpha; \eta) \}$ be the prediction set for a testing sample $\widetilde X=X+\epsilon$. 
Then iPRCP achieves ($1-\alpha$)-probabilistically robust coverage.
\end{proposition}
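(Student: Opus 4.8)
The plan is to reduce the coverage statement to a single probability inequality and then combine the two structural facts at our disposal—the inflation property and the quantile definition—\emph{multiplicatively} rather than through a naive union bound. First I would rewrite the coverage event. Since $\calC^\iPR(\widetilde X) = \{ y \in \calY : S(\widetilde X, y) \leq \tau^\iPR(\alpha;\eta) \}$, membership $Y \in \calC^\iPR(\widetilde X)$ is equivalent to $S(\widetilde X, Y) \leq \tau^\iPR(\alpha;\eta)$, so with $\tau^\iPR(\alpha;\eta) = Q(\alpha^*_\iPR) + M_{r,\eta}$ the target becomes
\begin{align}
\P_{X,Y,\epsilon}\{ Y \in \calC^\iPR(\widetilde X) \} = \P_Z\{ S(X+\epsilon, Y) \leq Q(\alpha^*_\iPR) + M_{r,\eta} \},
\end{align}
and it suffices to lower bound the right-hand side by $1-\alpha$.

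Next I would introduce the two ``good'' events $A = \{ S(X+\epsilon,Y) \leq S(X,Y) + M_{r,\eta} \}$ (the inflation bound holds) and $B = \{ S(X,Y) \leq Q(\alpha^*_\iPR) \}$ (the clean score lies below its quantile). On $A \cap B$ we get $S(X+\epsilon,Y) \leq S(X,Y) + M_{r,\eta} \leq Q(\alpha^*_\iPR) + M_{r,\eta}$, so $A \cap B$ is contained in the target event and the desired probability is at least $\P_Z(A \cap B)$. Definition \ref{definition:prob_inflated_score} gives $\P_Z(A) \geq 1-\eta$, and the quantile definition $Q(\alpha^*_\iPR) = \min\{ t : \P_{X,Y}\{ S(X,Y) \leq t \} \geq 1 - \alpha^*_\iPR \}$ gives $\P_{X,Y}(B) \geq 1 - \alpha^*_\iPR = (1-\alpha)/(1-\eta)$.

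The crux—and the step I expect to be the main obstacle—is combining these two bounds to recover exactly $1-\alpha$. A naive union bound only yields $\P_Z(A \cap B) \geq \P_Z(A) + \P_{X,Y}(B) - 1 = (1-\alpha)/(1-\eta) - \eta$, and a short calculation shows this equals $1-\alpha - \eta(\alpha-\eta)/(1-\eta)$, which is strictly below $1-\alpha$ for every interior $\eta \in (0,\alpha)$; hence the additive bound is too weak. Instead I would exploit the product structure encoded in the choice $\alpha^*_\iPR = 1 - (1-\alpha)/(1-\eta)$ by conditioning on $(X,Y)$ and using the independence of $\epsilon$ from $(X,Y)$: writing
\begin{align}
\P_Z(A \cap B) = \E_{X,Y}\big[ \mathds{1}_B \, \P_\epsilon(A \mid X,Y) \big]
\end{align}
and invoking the inflation property in its per-sample form $\P_\epsilon(A \mid X,Y) \geq 1-\eta$, one obtains $\P_Z(A \cap B) \geq (1-\eta)\, \P_{X,Y}(B) \geq (1-\eta)\cdot (1-\alpha)/(1-\eta) = 1-\alpha$. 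This is precisely where the $1/(1-\eta)$ inflation of the quantile level pays off, converting the additive union-bound loss into a multiplicative factor that cancels.

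The delicate point I would be careful about is that the factorization needs the inflation condition at the conditional (per-data-point) level together with $\epsilon \perp (X,Y)$: the marginal form $\P_Z(A)\ge 1-\eta$ alone controls only $\E_{X,Y}[\P_\epsilon(A\mid X,Y)]$ and does not by itself license replacing $\P_\epsilon(A\mid X,Y)$ by $1-\eta$ inside the expectation weighted by $\mathds{1}_B$. Once the inflation bound is read conditionally (so that the mass of admissible perturbations is at least $1-\eta$ \emph{for each} $(X,Y)$), the chain of inequalities closes immediately and the proof is complete.
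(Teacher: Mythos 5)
Your proposal is correct and follows the same essential strategy as the paper's proof: restrict to the ``good'' inflation event, pick up a multiplicative factor of $(1-\eta)$, and let the inflated quantile level $\alpha^*_\iPR = 1 - (1-\alpha)/(1-\eta)$ cancel it exactly. The only real difference is where the conditioning happens. The paper conditions on the inflation event $A_{r,\eta}$ itself, writes $\P_Z\{S(X+\epsilon,Y)\leq\tau\} \geq \P(A_{r,\eta})\cdot\P\{S(X,Y)\leq Q(\alpha^*_\iPR)\mid A_{r,\eta}\}$, and then \emph{asserts} that this conditional probability equals the unconditional $\P_{X,Y}\{S(X,Y)\leq Q(\alpha^*_\iPR)\}$ --- a step that implicitly assumes the clean-score event is unaffected by conditioning on $A_{r,\eta}$, which does not follow from Definition 4 alone. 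You instead condition on $(X,Y)$ and need the per-sample form $\P_\epsilon(A\mid X,Y)\geq 1-\eta$, which likewise is strictly stronger than the marginal statement in Definition 4. So both arguments quietly require a strengthening of the stated hypothesis; the merit of your write-up is that you identify this explicitly (and your observation that the naive union bound falls short by $\eta(\alpha-\eta)/(1-\eta)$ correctly explains why the multiplicative route is necessary). If you wanted to fully close the gap, the cleanest fix is to state the inflation condition conditionally on $(X,Y)$ from the outset --- which is how it is effectively used both in your argument and, implicitly, in the paper's.
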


This result shows that we can guarantee the ($1-\alpha$)-probabilistically robust coverage if we use $\tau^\iPR(\alpha; \eta)$ to construct the prediction set $\calC^\iPR$.
While the idea is simple and follows the inflation quantile used in the ARCP setting, it implies that we {\it have to know $M_{r, \eta}$}, the inflated quantity on the clean quantile. This requires us to know the score function very well. Otherwise, we have to design a score function that satisfies the desired condition,
similar to how the randomly smoothed score function was designed by RSCP algorithm to work for the ARCP setting \citep{gendler2022adversarially}.
It was carefully designed to offer a uniform Lipschitz continuity with the requirement of an additional set of Gaussian random samples. This design may introduce additional restrictions, since extra samples are required every time the score function is applied, including each calibration and testing sample. Therefore, we would like to address the following question: {\em Can we design an adaptive algorithm to fit the underlying distribution without any prior knowledge or special design of the score function?}

\begin{algorithm}[t]

    \caption{adaptive PRCP (\texttt{aPRCP}) } 
    \label{alg:aPRCP}
    \begin{algorithmic}[1]
    
    \STATE \textbf{Input}: target probability $\alpha \in (0, 1)$; the hyper-parameter $s$; set $\tilde \alpha = 1- \frac{1-\alpha}{1-\alpha+s}$;
    split data into disjoint training set $\calD_\tr$ and calibration set $\calD_\cal$ with $|\calD_\cal| = n$.
    
    \STATE Train a classifier $F_\theta$ on $\calD_\tr$.
    
    \STATE Draw $\epsilon_{ij} \sim \calP_\epsilon$ where $i \in \{1,\cdots, n\}$ and $j \in \{1,\cdots, m\}$ denote the indices of data $(X_i, Y_i)$ and its $m$ perturbations.
    
    \STATE Compute scores: $S_{ij} = S(X_i + \epsilon_{ij}, Y_i)$, $\forall i, j$.
    
    \STATE Compute empirical robust quantiles: \\
    $\widehat Q^\rob_i = \widehat Q^\rob(X_i, Y_i; \tilde \alpha) = Q(\tilde \alpha, \{S_{ij}\}_{j=1}^m)$ via (\ref{eq:robust_quantile_x}), $\forall i$.
    \label{algorithm:line:empirical_robust_quantile}

    \STATE Determine threshold $\tau^\aPR(\alpha; s) = \widehat Q^\rob_{ ( \lceil (n+1) ( 1 - \alpha + s ) \rceil ) }$ from empirical robust quantiles according to (\ref{eq:aPRCP_threshold}).
    \label{algorithm:line:threshold_aPRCP}
    
    \STATE Receive $\widetilde{X}_{n+1}$ and construct prediction set:\\
    ~~~ $\calC(\widetilde{X}_{n+1}) = \{ y \in \mathcal{Y}: S(\widetilde{X}_{n+1}, y) \leq \tau^\aPR(\alpha; s) \}$.
    \end{algorithmic}
\end{algorithm}

\subsection{Adaptive PRCP Algorithm}
\label{subsection:aPRCP}

This section presents our adaptive algorithm for achieving probabilistically robust coverage (aPRCP).
We summarize it in Algorithm \ref{alg:aPRCP} and elaborate it below.
First, we define the $(1 - \tilde \alpha)$-{\it robust quantile} for a given $X$ as follows
\begin{align}
\label{eq:robust_quantile_x}
&
Q^\rob(X, Y; \tilde \alpha)
\nonumber\\
& \qquad \quad
:=
\min\{ t : \P_\epsilon \{ S(\widetilde X, Y) \leq t \} \geq 1 - \tilde \alpha \}.
\end{align}
Given $(X, Y$) and $\tilde \alpha$, $Q^\rob(X, Y; \tilde \alpha)$ returns the quantile from all randomly perturbed $\widetilde X=X + \epsilon$ over $\epsilon \in \calE_r$.
It acquires the inflated quantity from a local region of $X$ as $\tilde \alpha$ indicates how conservative this inflation can be.
We denote the empirical robust quantile (in Line \ref{algorithm:line:empirical_robust_quantile} of Algorithm \ref{alg:aPRCP}) by $\widehat Q^\rob$.

Next, we define the threshold of the proposed adaptive PRCP (\texttt{aPRCP}) for a hyper-parameter $s \in [0, \alpha]$ as follows.
\begin{align}
\label{eq:aPRCP_threshold}
&
\tau^\aPR(\alpha; s)
= 
\min\{ t : 
\nonumber\\
& \qquad 
\P_{X, Y} \{ Q^\rob(X, Y; \alpha^*_\aPR) \leq t \} \geq 1 - \alpha + s \} ,
\end{align}
where
$\alpha^*_\aPR
=
1 - ( 1 - \alpha ) / ( 1 - \alpha + s )$ is a conservativeness parameter for the robust quantile in (\ref{eq:robust_quantile_x}) that depends on the target probability $\alpha$ and the hyper-parameter $s$.
In practice, the empirical threshold $\widehat \tau^\aPR = \widehat Q^\rob_{( \lceil (n+1) ( 1 - \alpha + s ) \rceil ) }$ is selected from empirical robust quantiles $\{\widehat Q^\rob_i\}_{i=1}^n$ (in Line \ref{algorithm:line:threshold_aPRCP} of Algorithm \ref{alg:aPRCP}).
Our aPRCP algorithm is adaptive since it finds $\alpha^*_\aPR$ that is adaptive to the underlying distribution of $(X, Y)$ as long as $\alpha$ and $s$ are fixed apriori.
The following formal result guarantees the probabilistically robust coverage for the aPRCP algorithm.
\begin{theorem}
\label{theorem:prob_robust_coverage_aPRCP}
(Probabilistically robust coverage of aPRCP)
Let $\calC^\aPR(\widetilde X = X + \epsilon) = \{ y \in \calY : S(\widetilde X, y) \leq \tau^\aPR(\alpha; s) \}$ be the prediction set for a testing sample $\widetilde X$.
Then aPRCP achieves ($1-\alpha$)-probabilistically robust coverage.

\end{theorem}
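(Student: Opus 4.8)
The plan is to prove the equivalent population-level statement
\[
\P_{X, Y, \epsilon} \big\{ S(\widetilde X, Y) \leq \tau^\aPR(\alpha; s) \big\} \geq 1 - \alpha,
\]
which is exactly the probabilistically robust coverage of Definition \ref{definition:prob_robust_coverage} because $Y \in \calC^\aPR(\widetilde X)$ if and only if $S(\widetilde X, Y) \leq \tau^\aPR(\alpha; s)$. Writing $\tau := \tau^\aPR(\alpha; s)$ and $\alpha^* := \alpha^*_\aPR = 1 - (1-\alpha)/(1-\alpha+s)$ for brevity, my first step is to exploit that $\tau$ is a fixed, data-independent threshold and that $\epsilon$ is independent of $(X,Y)$. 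The tower rule then factorizes the joint probability as $\E_{X,Y}\big[ g(X,Y) \big]$, where $g(X,Y) := \P_\epsilon\{ S(X+\epsilon, Y) \leq \tau \mid X, Y \}$ is the conditional $\epsilon$-coverage of the data point $(X,Y)$ at threshold $\tau$. Everything reduces to lower-bounding this expectation.

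The heart of the argument is the two-level ``quantile-of-quantile'' comparison. I would introduce the good event $A := \{(X,Y) : Q^\rob(X, Y; \alpha^*) \leq \tau \}$. By the defining property of the aPRCP threshold in (\ref{eq:aPRCP_threshold}), the outer (data-level) quantile satisfies $\P_{X,Y}(A) \geq 1 - \alpha + s$. On the event $A$, I use monotonicity of the inner conditional CDF $t \mapsto \P_\epsilon\{ S(\widetilde X, Y) \leq t \}$ together with $\tau \geq Q^\rob(X,Y;\alpha^*)$ to get $g(X,Y) \geq \P_\epsilon\{ S(\widetilde X, Y) \leq Q^\rob(X,Y;\alpha^*) \}$, and the robust-quantile definition (\ref{eq:robust_quantile_x}) at level $\alpha^*$ lower-bounds the latter by $1 - \alpha^*$. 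Hence $g(X,Y) \geq 1 - \alpha^*$ for every $(X,Y) \in A$. Combining the two levels,
\[
\E_{X,Y}\big[ g(X,Y) \big]
\geq
(1 - \alpha^*)\, \P_{X,Y}(A)
\geq
(1 - \alpha^*)(1 - \alpha + s),
\]
and substituting $1 - \alpha^* = (1-\alpha)/(1-\alpha+s)$ makes the product telescope to exactly $1 - \alpha$, closing the proof.

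I do not expect a serious obstacle: the whole content is the product identity $(1-\alpha^*)(1-\alpha+s) = 1-\alpha$ induced by the choice of $\alpha^*_\aPR$, mirroring the multiplicative split between the perturbation-level and data-level coverages. The only technical care needed is to ensure both minima defining $Q^\rob(X,Y;\alpha^*)$ in (\ref{eq:robust_quantile_x}) and $\tau$ in (\ref{eq:aPRCP_threshold}) are \emph{attained}, which follows from right-continuity of the respective CDFs and guarantees the two inequalities $\P_\epsilon\{S \leq Q^\rob\} \geq 1-\alpha^*$ and $\P_{X,Y}(A) \geq 1-\alpha+s$ hold in non-strict form, and to verify measurability of $(X,Y) \mapsto Q^\rob(X,Y;\alpha^*)$ and of $g$ so the tower rule is legitimate. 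These are routine and can be relegated to a remark; note also that the argument is purely population-level and does not invoke any finite-sample concentration, which is treated separately from the empirical threshold $\widehat\tau^\aPR$.
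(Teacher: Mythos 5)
Your argument is correct and is essentially the paper's own proof: the paper likewise defines the event $B=\{(X,Y): Q^\rob(X,Y;\alpha^*_\aPR)\leq\tau^\aPR(\alpha;s)\}$ (your $A$), lower-bounds the joint probability by $\P_{X,Y}(B)\cdot\P_{\epsilon\mid(X,Y)\in B}\{S(X+\epsilon,Y)\leq Q^\rob(X,Y;\alpha^*_\aPR)\}\geq(1-\alpha+s)(1-\alpha^*_\aPR)$, and concludes with the same telescoping identity $(1-\alpha+s)(1-\alpha^*_\aPR)=1-\alpha$. Your tower-rule phrasing via $g(X,Y)$ and the measurability/attainment remarks are just a slightly more careful rendering of the paper's two-term conditional decomposition.
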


\begin{remark}
In fact, $\tau^\aPR(\alpha; s)$ is the ($1-\alpha+s$)-th quantile (going through $(X, Y)$) of the ($1-\alpha^*_\aPR$)-robust quantiles (going through $\epsilon$).
One benefit of aPRCP is the transfer of the inflation from the score function to the specified probability (i.e., an $s$ increase in probability). Therefore, it is not required to have a prior knowledge of either $M_r$ as in ARCP or  $M_{r, \eta}$ as in iPRCP.
Instead, aPRCP requires finding a feasible and a good value for $\alpha^*_\aPR$ by treating $s$ as a hyper-parameter, though it inflates the specified probability, i.e., $1-\alpha+s \geq 1 - \alpha$, and $1-\alpha^*_\aPR \geq 1 - \alpha$.
\end{remark}

\begin{theorem}
\label{theorem:prob_robust_coverage_aPRCP_cross_domain_noise}
(Probabilistically robust coverage of aPRCP for cross-domain noise)
Let $\calP_\epsilon^{test}$ and $\calP_\epsilon^{cal}$ denote different distributions of $\epsilon$ during the testing and calibration phases, respectively.
Assume $\P_{\epsilon \sim \calP_\epsilon^{cal}}\{\epsilon\} - \P_{\epsilon \sim \calP_\epsilon^{test}}\{\epsilon\} \leq d$ for all $\| \epsilon \| \leq r$.
Set $\alpha^*_\aPR = 1 - d - ( 1 - \alpha) / (1 - \alpha + s )$ in (\ref{eq:aPRCP_threshold}).
Let $\calC^\aPR(\widetilde X = X + \epsilon) = \{ y \in \calY : S(\widetilde X, y) \leq \tau^\aPR(\alpha; s) \}$ be the prediction set for a testing sample $\widetilde X$.
Then aPRCP achieves ($1-\alpha$)-probabilistically robust coverage.
\end{theorem}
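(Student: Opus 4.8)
The plan is to mirror the two-stage \emph{quantile-of-quantile} argument behind Theorem \ref{theorem:prob_robust_coverage_aPRCP}, inserting one extra step that pays for the mismatch between the calibration noise $\calP_\epsilon^{cal}$ and the test noise $\calP_\epsilon^{test}$. Writing the target coverage through the joint distribution and conditioning on the clean sample $(X,Y)$,
\[
\P_{X,Y,\epsilon\sim\calP_\epsilon^{test}}\{ S(\widetilde X, Y) \leq \tau^\aPR(\alpha;s) \}
= \E_{X,Y}\!\left[ \P_{\epsilon\sim\calP_\epsilon^{test}}\{ S(X+\epsilon, Y) \leq \tau^\aPR(\alpha;s) \mid X,Y \} \right],
\]
so it suffices to lower-bound the inner conditional probability on a set of $(X,Y)$ of large measure. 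As in the base case, I would introduce the good event $A = \{(X,Y) : Q^\rob(X,Y;\alpha^*_\aPR)\leq \tau^\aPR(\alpha;s)\}$, where the robust quantile $Q^\rob$ of (\ref{eq:robust_quantile_x}) is the one actually computed during calibration, i.e.\ taken with respect to $\calP_\epsilon^{cal}$. By the definition (\ref{eq:aPRCP_threshold}) of the threshold, $\P_{X,Y}(A)\geq 1-\alpha+s$.

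The new ingredient is to transfer the calibration-time guarantee to a test-time guarantee on the event $A$. On $A$, monotonicity of the sublevel sets together with the definition (\ref{eq:robust_quantile_x}) of the robust quantile gives $\P_{\epsilon\sim\calP_\epsilon^{cal}}\{ S(X+\epsilon,Y)\leq \tau^\aPR(\alpha;s)\} \geq \P_{\epsilon\sim\calP_\epsilon^{cal}}\{S(X+\epsilon,Y)\leq Q^\rob(X,Y;\alpha^*_\aPR)\} \geq 1-\alpha^*_\aPR$. I would then invoke the distribution-shift assumption applied to the sublevel-set event $\{\epsilon : S(X+\epsilon,Y)\leq \tau^\aPR(\alpha;s)\}\subseteq\calE_r$, which yields $\P_{\epsilon\sim\calP_\epsilon^{test}}\{\cdot\} \geq \P_{\epsilon\sim\calP_\epsilon^{cal}}\{\cdot\} - d$ and hence $\P_{\epsilon\sim\calP_\epsilon^{test}}\{ S(X+\epsilon,Y)\leq \tau^\aPR(\alpha;s)\} \geq 1-\alpha^*_\aPR - d$ on $A$.

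Finally I would substitute the prescribed choice $\alpha^*_\aPR = 1 - d - (1-\alpha)/(1-\alpha+s)$, which is engineered precisely so that $1-\alpha^*_\aPR-d = (1-\alpha)/(1-\alpha+s)$; combining this with $\P_{X,Y}(A)\geq 1-\alpha+s$ in the conditional decomposition gives
\[
\P_{X,Y,\epsilon\sim\calP_\epsilon^{test}}\{ S(\widetilde X, Y)\leq \tau^\aPR(\alpha;s)\} \geq \frac{1-\alpha}{1-\alpha+s}\,(1-\alpha+s) = 1-\alpha,
\]
which is exactly $(1-\alpha)$-probabilistically robust coverage.

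I expect the main obstacle to be the middle step, namely making the hypothesis $\P_{\epsilon\sim\calP_\epsilon^{cal}}\{\epsilon\} - \P_{\epsilon\sim\calP_\epsilon^{test}}\{\epsilon\}\leq d$ rigorously control the difference of the two measures on the relevant sublevel-set events rather than merely pointwise in $\epsilon$; this amounts to reading the assumption as a one-sided uniform bound on the measure gap over the events of interest, and I would state it in that form before using it. A secondary point to verify is feasibility: the quantile level $1-\alpha^*_\aPR = d + (1-\alpha)/(1-\alpha+s)$ must not exceed $1$, i.e.\ the shift must satisfy $d\leq s/(1-\alpha+s)$, which I would record as the regime in which the statement is non-vacuous.
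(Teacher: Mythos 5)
Your proposal is correct and follows essentially the same route as the paper's proof: the same good event $B=\{(X,Y): Q^\rob(X,Y;\alpha^*_\aPR)\leq\tau^\aPR(\alpha;s)\}$ with mass at least $1-\alpha+s$, the same use of the calibration-time robust-quantile guarantee, and the same $d$-correction when passing from $\calP_\epsilon^{cal}$ to $\calP_\epsilon^{test}$, with the choice of $\alpha^*_\aPR$ cancelling everything to $1-\alpha$. The ``main obstacle'' you flag --- upgrading the pointwise density gap to a bound on the measure gap over sublevel-set events --- is precisely what the paper isolates as its Lemma 1 (proved by integrating the density difference against the indicator of the event), so your instinct to restate the hypothesis at the event level before using it matches the paper's treatment; your added feasibility remark $d\leq s/(1-\alpha+s)$ is a sensible observation the paper omits.
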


\begin{remark}
The key assumption we make is $\P_{\epsilon \sim \calP_\epsilon^{cal}}\{\epsilon\} - \P_{\epsilon \sim \calP_\epsilon^{test}}\{\epsilon\} \leq d$, which is analogous to $L^1$-distance used in the domain adaptation literature \citep{redko2020survey,ben2006analysis}.
One can interpret it as the maximal gap of the density probability between the calibration and testing distributions when fixing $\epsilon$.
As per our analysis, when this gap can be bounded by a sufficiently small constant $d$, with an inflated nominated coverage in the robust quantile (i.e., setting $\alpha^*_\aPR = 1 - d - (1-\alpha)/(1-\alpha+s)$ in (\ref{eq:aPRCP_threshold})), we can guarantee probabilistically robust coverage for aPRCP.
\end{remark}

\subsection{Connection Between ARCP and PRCP }
\label{subsection:connection}

Although ARCP algorithm can achieve adversarially robust coverage, we can still connect ARCP and PRCP in the sense of {\it probabilistically robust coverage} and understand their performance in terms of {\it efficiency}. Recall that efficiency of conformal prediction algorithms refers to the measured size of prediction sets for testing samples when some desired coverage is achieved.
For example, for the same target probability $1-\alpha$, a smaller threshold indicates better efficiency. The following result shows the possibly improved efficiency of iPRCP and aPRCP when compared to ARCP after that their hyper-parameters were tuned properly (i.e., $\eta$ for iPRCP and $s$ for aPRCP).

\begin{corollary}
\label{corollary:compare_aPRCP_ARCP}
To achieve the same ($1-\alpha$)-probabilistically robust coverage on $Z$, the following inequalities hold: \begin{align*}
\min_{ \eta \in [0, \alpha] } \tau^\iPR(\alpha; \eta) \leq \tau^\AR(\alpha), ~~
\min_{ s \in [0, \alpha] }  \tau^\aPR(\alpha; s) \leq \tau^\AR(\alpha) .
\end{align*}
\end{corollary}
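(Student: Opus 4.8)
The plan is to prove both inequalities by evaluating each left-hand minimum at the convenient endpoint of its parameter range and showing that the resulting threshold is no larger than $\tau^\AR(\alpha)$. Since $\min_{\eta \in [0,\alpha]} \tau^\iPR(\alpha;\eta) \leq \tau^\iPR(\alpha; 0)$ and $\min_{s \in [0,\alpha]} \tau^\aPR(\alpha;s) \leq \tau^\aPR(\alpha; 0)$, it suffices to bound the two endpoint thresholds $\tau^\iPR(\alpha; 0)$ and $\tau^\aPR(\alpha; 0)$. Intuitively, $\eta = 0$ (resp. $s = 0$) recovers the worst-case inflation, so the ARCP threshold is exactly the $\eta = 0$ member of the iPRCP family and an upper bound on the $s = 0$ member of the aPRCP family; tuning the hyper-parameter away from this extreme can only shrink the threshold, and hence improve efficiency.

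For the iPRCP inequality, I would simply substitute $\eta = 0$ into $\tau^\iPR(\alpha;\eta) = Q(\alpha^*_\iPR) + M_{r,\eta}$. Here $\alpha^*_\iPR = 1 - (1-\alpha)/(1-0) = \alpha$, and the excerpt already records that setting $\eta = 0$ in Definition \ref{definition:prob_inflated_score} forces $M_{r,0} = M_r$. Therefore $\tau^\iPR(\alpha; 0) = Q(\alpha) + M_r = \tau^\AR(\alpha)$ exactly, and the first inequality follows immediately because $0$ lies in the feasible set $[0,\alpha]$.

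For the aPRCP inequality the argument is one step longer. Setting $s = 0$ gives $\alpha^*_\aPR = 1 - (1-\alpha)/(1-\alpha) = 0$, so by (\ref{eq:robust_quantile_x}) the robust quantile $Q^\rob(X,Y; 0) = \min\{ t : \P_\epsilon\{ S(\widetilde X, Y) \leq t\} \geq 1 \}$ is the essential supremum of $S(X+\epsilon, Y)$ over the support of $\calP_\epsilon$. Since this support lies in $\calE_r$, the $M_r$-adversarial inflation property (Definition \ref{definition:adv_inflated_score}) yields the pointwise bound $Q^\rob(X,Y; 0) \leq S(X,Y) + M_r$. I would then test the candidate $t = Q(\alpha) + M_r$ against the constraint defining $\tau^\aPR(\alpha; 0)$ in (\ref{eq:aPRCP_threshold}): by monotonicity of probability and the previous bound,
\begin{align*}
\P_{X,Y}\{ Q^\rob(X,Y; 0) \leq Q(\alpha) + M_r \}
\geq
\P_{X,Y}\{ S(X,Y) \leq Q(\alpha) \}
\geq
1 - \alpha,
\end{align*}
where the last step is the definition of $Q(\alpha)$. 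Hence $t = Q(\alpha) + M_r$ is feasible for the minimization in (\ref{eq:aPRCP_threshold}) at $s = 0$, so $\tau^\aPR(\alpha; 0) \leq Q(\alpha) + M_r = \tau^\AR(\alpha)$, which establishes the second inequality.

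There is no deep obstacle here; the only points requiring care are the boundary evaluations. Specifically, I must verify that the two hyper-parameter reparametrizations $\alpha^*_\iPR$ and $\alpha^*_\aPR$ degenerate correctly at the endpoint (to $\alpha$ and to $0$ respectively), and that the worst-case reduction of the robust quantile at $s = 0$ is legitimate, which in turn requires the support of $\calP_\epsilon$ to be contained in $\calE_r$ so that the adversarial inflation bound applies $\P_\epsilon$-almost surely. Granting these, the corollary is immediate, and the result makes precise the claim that ARCP sits at the most conservative ($\eta = 0$ / $s = 0$) extreme of both PRCP families.
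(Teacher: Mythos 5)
Your proposal is correct and follows the same overall strategy as the paper's proof: bound each minimum by its value at the endpoint ($\eta=0$ for iPRCP, $s=0$ for aPRCP) and compare that endpoint value to $\tau^\AR(\alpha)$. The iPRCP half is identical to the paper's: $M_{r,0}=M_r$ and $\alpha^*_\iPR=\alpha$, so $\tau^\iPR(\alpha;0)=\tau^\AR(\alpha)$ exactly. Where you genuinely diverge is the aPRCP half. The paper asserts the \emph{equality} $\tau^\aPR(\alpha;0)=\tau^\AR(\alpha)$ on the informal grounds that $\alpha^*_\aPR=0$ forces the $1$-robust quantile, which ``is equivalent to deriving the adversarial $S(X+\epsilon,Y)$ for all $(X,Y)$''; it never relates that worst-case quantile back to $Q(\alpha)+M_r$. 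You instead prove only the inequality $\tau^\aPR(\alpha;0)\leq Q(\alpha)+M_r$ by a feasibility argument: the $1$-robust quantile is the essential supremum of the perturbed score, which the $M_r$-adversarial inflation property bounds pointwise by $S(X,Y)+M_r$, so $t=Q(\alpha)+M_r$ satisfies the constraint defining $\tau^\aPR(\alpha;0)$ in (\ref{eq:aPRCP_threshold}). This is both all that the corollary needs and arguably the more defensible claim: when $M_r$ is not the tight inflation constant, the $(1-\alpha+s)$-quantile of $\mathrm{ess\,sup}_\epsilon\, S(X+\epsilon,Y)$ at $s=0$ can be strictly smaller than $Q(\alpha)+M_r$, so the paper's stated equality need not hold, whereas your inequality always does. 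You are also right to flag the two hypotheses your version makes explicit --- that $S$ is $M_r$-adversarially inflated and that $\calP_\epsilon$ is supported in $\calE_r$ --- both of which the paper's argument uses only implicitly.
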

When all three algorithms achieve ($1-\alpha$)-probabilistically robust coverage,
smaller thresholds yield better efficiency, i.e., iPRCP and aPRCP.
The idea of the above result is to particularly set $\eta=0$ and $s=0$, which makes iPRCP and aPRCP degenerate to ARCP, resulting in the same threshold.
For aPRCP with $s=0$, we have $\alpha^*_\aPR=0$, i.e., $1$-robust quantile 
for each $(X, Y)$ used, which recovers ARCP.

\subsection{Approximation Error of Empirical Quantiles}

In the above sections, we presented algorithms and their analysis directly in the population sense, including the true quantile $Q(\alpha)$ and $Q^\rob(X; \alpha)$. However, when executing a given conformal prediction method on exchangeable samples $\calD_\cal$, we employ empirical quantiles in practice. To close this gap between theory and practice, we additionally discuss the concentration inequalities for empirical approximation to these quantities (i.e., the gap between empirical and true quantiles) as a function of the number of samples.

\begin{proposition}
\label{proposition:empirical_quantile_concentration}
(Concentration inequality for quantiles)
Let $Q(\alpha) = \max\{ t : \P_V\{ V \leq t \} \geq 1 - \alpha \}$ be the true quantile of a random variable $V$ given $\alpha$,
and $\widehat Q_n(\alpha) = V_{ ( \lceil (n+1) ( 1 - \alpha ) \rceil ) }$ be the empirical quantile estimated by $n$ randomly sampled set $\{V_1, ..., V_n\}_{i=1}^n$.
Then with probability at least $1-\delta$, we have
$
\widehat Q_n(\alpha + \tilde O(1/\sqrt{n}))
\leq
Q(\alpha)
\leq
\widehat Q_n(\alpha - \tilde O(1/\sqrt{n}))
$   
where $\tilde O$ hides the logarithmic factor.
\end{proposition}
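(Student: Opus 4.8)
\emph{Proof proposal.} The plan is to reduce the two-sided quantile bound to a single uniform control on the empirical CDF via the Dvoretzky--Kiefer--Wolfowitz (DKW) inequality, and then to invert this control through the definition of the order statistics. Write $F(t) = \P_V\{ V \leq t \}$ for the true CDF and $\widehat F_n(t) = \frac{1}{n} \sum_{i=1}^n \mathds{1}\{ V_i \leq t \}$ for the empirical CDF built from $\{V_i\}_{i=1}^n$. The DKW--Massart inequality states $\P\big( \sup_t | \widehat F_n(t) - F(t) | > \gamma \big) \leq 2 e^{-2 n \gamma^2}$, so choosing $\gamma_n = \sqrt{ \ln(2/\delta) / (2n) } = \tilde O(1/\sqrt n)$ guarantees that, with probability at least $1-\delta$, the event $\mathcal{A} = \{ \sup_t |\widehat F_n(t) - F(t)| \leq \gamma_n \}$ holds. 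I would carry out all remaining steps on $\mathcal{A}$.

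First I would record the two elementary facts linking the order statistics to $\widehat F_n$. For a coverage level $\ell \in (0,1)$, the empirical quantile is the order statistic $V_{(k)}$ with $k = \lceil (n+1)\ell \rceil$, and at least $k$ of the samples lie at or below it, so $\widehat F_n(V_{(k)}) \geq k/n \geq (n+1)\ell/n \geq \ell$; conversely, $V_{(k)} \leq t$ holds if and only if $\widehat F_n(t) \geq k/n$. These are exactly the inequalities that make the conformal $(n+1)$-correction do its job, and they are also where the $O(1/n)$ ceiling terms enter, to be absorbed into $\tilde O(1/\sqrt n)$ at the end.

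For the \textbf{upper bound} $Q(\alpha) \leq \widehat Q_n(\alpha - c\gamma_n)$, set $t^\star = \widehat Q_n(\alpha - c\gamma_n)$, which is the empirical quantile at coverage level $1 - \alpha + c\gamma_n$. By the first fact $\widehat F_n(t^\star) \geq 1 - \alpha + c\gamma_n$, and on $\mathcal{A}$ this gives $F(t^\star) \geq \widehat F_n(t^\star) - \gamma_n \geq 1 - \alpha + (c-1)\gamma_n \geq 1 - \alpha$ once $c \geq 1$; by the definition of $Q(\alpha)$ this forces $t^\star \geq Q(\alpha)$. For the \textbf{lower bound} $\widehat Q_n(\alpha + c\gamma_n) \leq Q(\alpha)$, I would instead evaluate $\widehat F_n$ at $Q(\alpha)$: since $F(Q(\alpha)) \geq 1 - \alpha$, on $\mathcal{A}$ we get $\widehat F_n(Q(\alpha)) \geq 1 - \alpha - \gamma_n$, which (up to the $O(1/n)$ ceiling correction, dominated by $\gamma_n$ for $c > 1$ and $n$ large) is at least the level $1 - \alpha - c\gamma_n$ defining $\widehat Q_n(\alpha + c\gamma_n)$; by the second fact this places $Q(\alpha)$ at or above that empirical quantile, i.e.\ $\widehat Q_n(\alpha + c\gamma_n) \leq Q(\alpha)$.

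Combining the two one-sided conclusions on $\mathcal{A}$, and folding the constant $c$ together with the $O(1/n)$ corrections from the ceilings and the $(n+1)$-factor into the $\tilde O(1/\sqrt n)$ notation, yields the claimed sandwich with probability at least $1-\delta$. I expect the only delicate point to be the bookkeeping around the generalized inverse: one must check that the $\min$/$\max$ convention in the definition of $Q(\alpha)$, possible ties among the $V_i$, and the direction of each inequality all line up, since $F$ need not be continuous or strictly increasing. The DKW step itself is the clean engine; the genuine care lies in translating its uniform two-sided deviation into the correct one-sided quantile comparisons above.
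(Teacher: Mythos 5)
Your proposal is correct, and it reaches the same sandwich through the same two-stage skeleton as the paper --- concentrate the empirical CDF, then invert through the monotonicity of the order statistics --- but with a different concentration engine. The paper does not use DKW: it fixes the single point $t = Q(\alpha)$, treats $Z_i = \mathds{1}[V_i \leq Q(\alpha)]$ as Bernoulli variables with mean $1-\alpha$, and applies a multiplicative Chernoff bound to get $\bigl|\widehat F_n(Q(\alpha)) - (1-\alpha)\bigr| \leq \sqrt{3(1-\alpha)\log(2/\delta)/n}$, after which it defines the shifted levels $\alpha' = \alpha + \varepsilon(1-\alpha)$ and $\alpha'' = \alpha - \varepsilon(1-\alpha)$ and inverts exactly as you do. Your uniform DKW--Massart bound is strictly stronger than what is needed (control at the one point $Q(\alpha)$ plus the deterministic order-statistic facts suffice for the two one-sided comparisons), but it buys you two things: the deviation $\sqrt{\log(2/\delta)/(2n)}$ is fully distribution-free and, more substantively, your argument only ever uses $F(Q(\alpha)) \geq 1-\alpha$, whereas the paper's claim that $\P\{Z_i = 1\}$ equals $1-\alpha$ exactly implicitly assumes $F$ is continuous at $Q(\alpha)$; your route is therefore cleaner for general (possibly atomic) score distributions, which is precisely the delicate point you flag. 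The paper's multiplicative bound is marginally tighter when $1-\alpha$ is small, but both land at the same $\tilde O(1/\sqrt{n})$ rate, and your handling of the $\lceil (n+1)(\cdot) \rceil$ corrections (absorbing the $O(1/n)$ terms into the constant $c$) matches the paper's $+1/n$ shift of $\alpha'$.
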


The above result shows that more data samples from the underlying distribution for $(X, Y)$ or $\epsilon$ will help in improving the approximation of empirical quantiles on score function $S$ at a rate of $\tilde O(1/\sqrt{n})$, where $n$ is number of samples.
Note that we only use this proposition to fill the gap between empirical and true quantiles.
Some prior work also studied similar concentration results \citep{vovk2012conditional}.

\section{Experiments and Results}

In this section, we present the empirical evaluation of our proposed aPRCP algorithm along different dimensions.

\begin{figure*}[!h]
    \centering
    \begin{minipage}{.92\linewidth}
        \begin{minipage}{\linewidth}
            \centering
            \includegraphics[width=.35\linewidth]{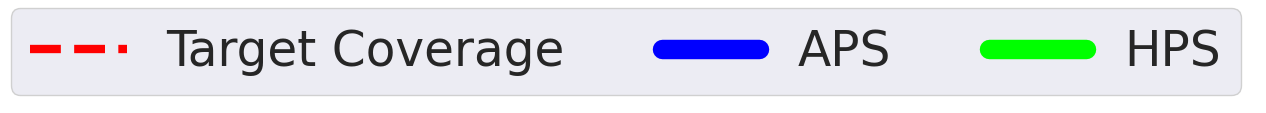}
        \end{minipage}     
        \begin{minipage}{.33\linewidth}
            \centering
            (a) CIFAR10
        \end{minipage}
        \hfill
        \begin{minipage}{.33\linewidth}
            \centering
            (b) CIFAR100
        \end{minipage} 
        \begin{minipage}{.33\linewidth}
            \centering
            (a) ImageNet
        \end{minipage}
        \hfill
        \begin{minipage}{.33\linewidth}
            \includegraphics[width=\linewidth]{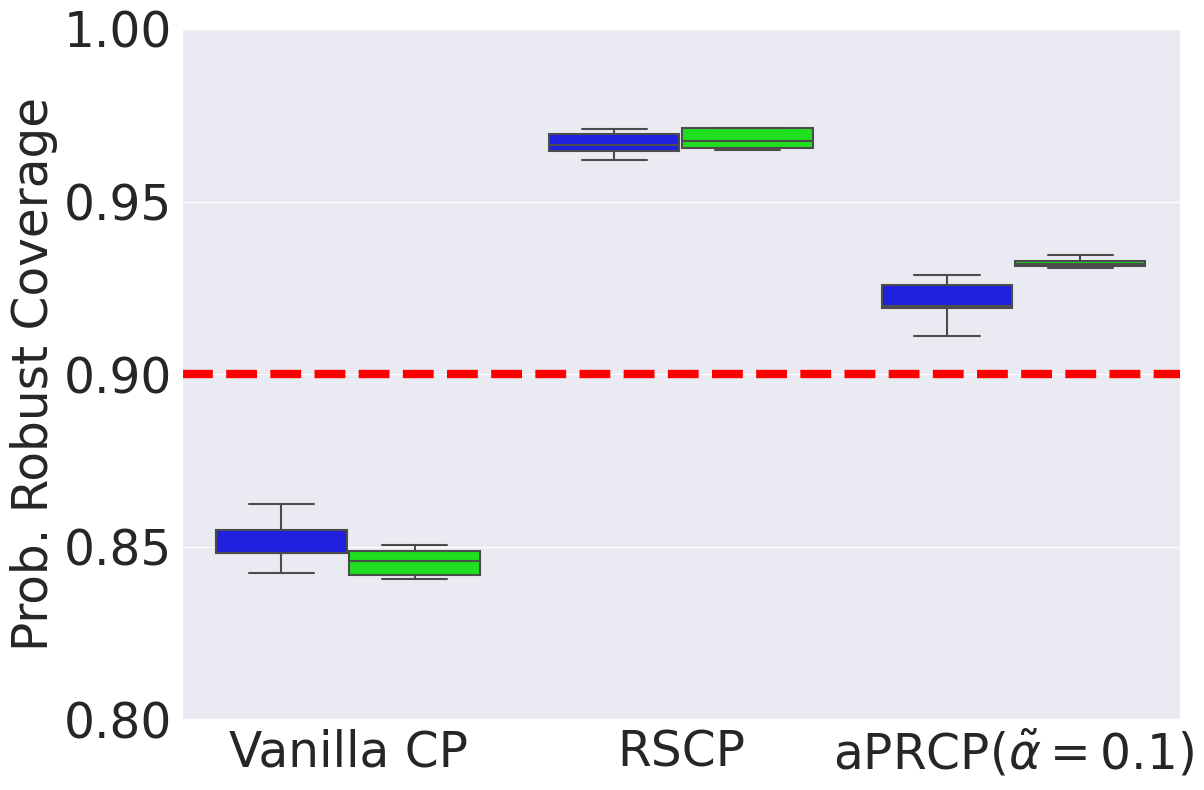}
        \end{minipage}
        \hfill
        \begin{minipage}{.33\linewidth}
            \centering
            \includegraphics[width=\linewidth]{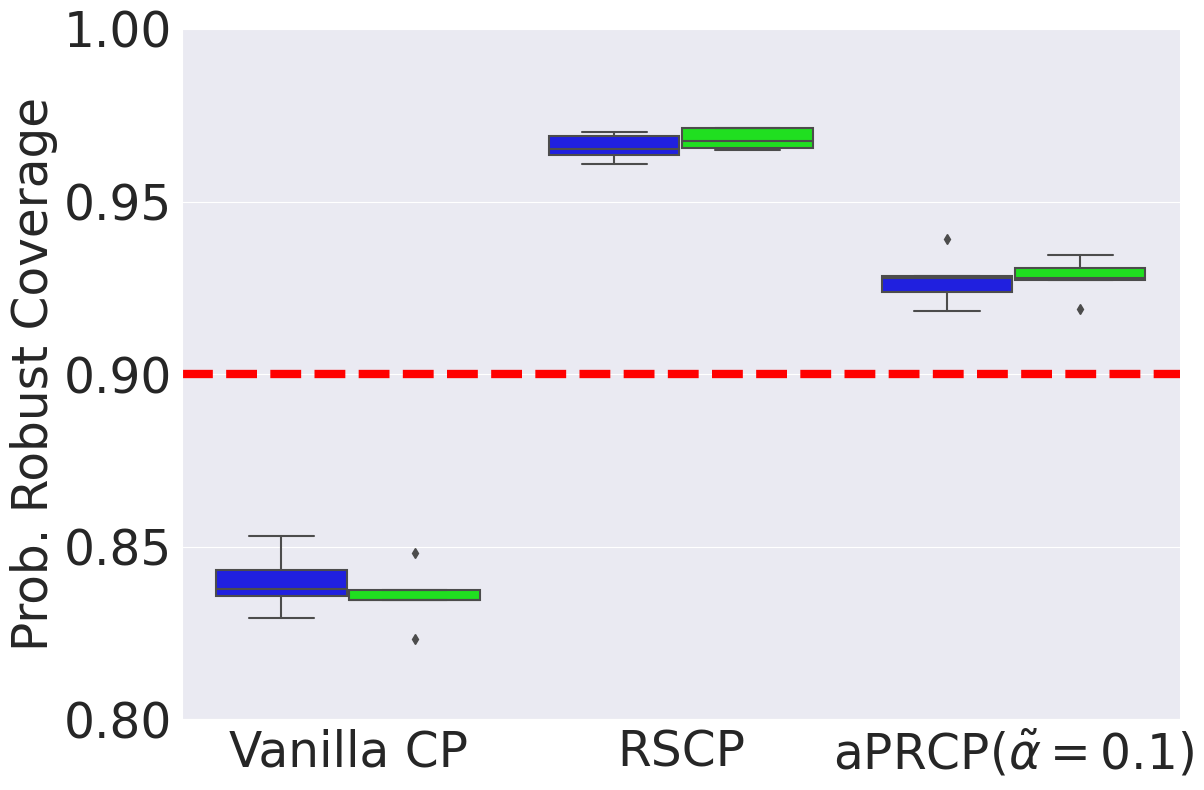}
        \end{minipage}    
        \hfill
        \begin{minipage}{.33\linewidth}
            \includegraphics[width=\linewidth]{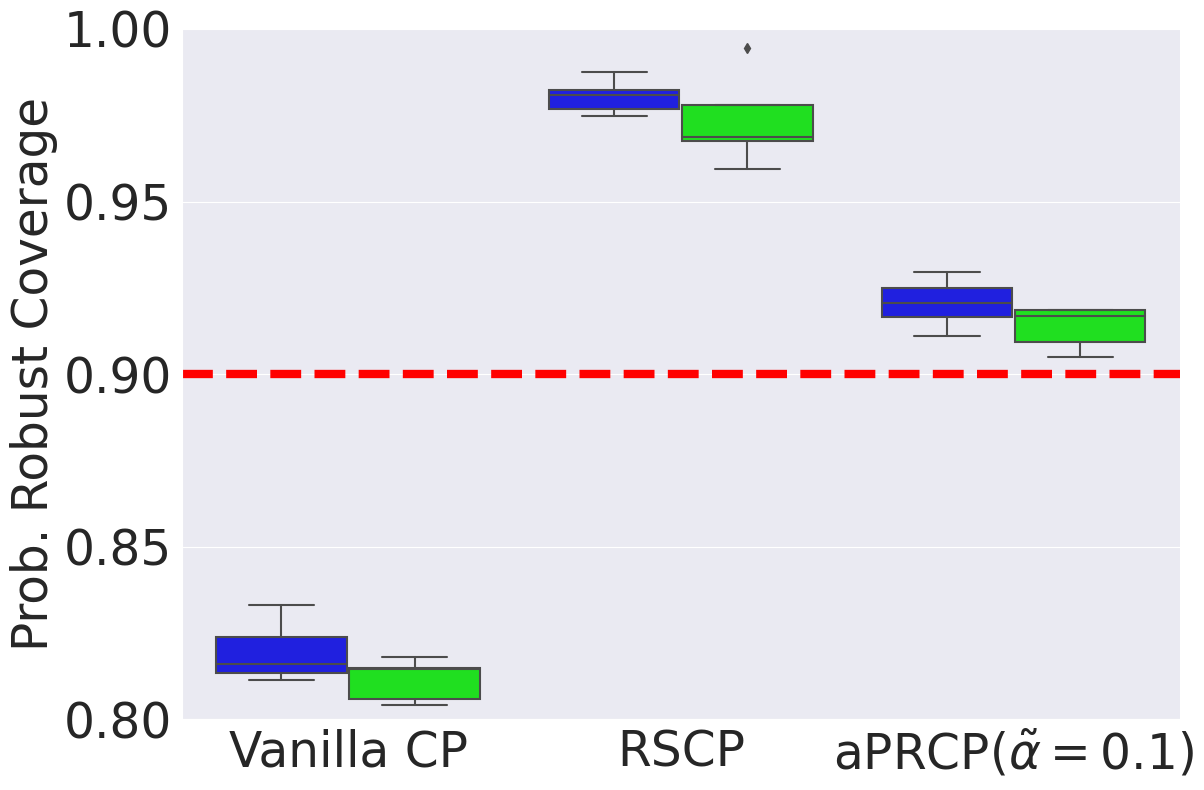}
        \end{minipage}
        \hfill
        \begin{minipage}{.33\linewidth}
            \centering
            \includegraphics[width=\linewidth]{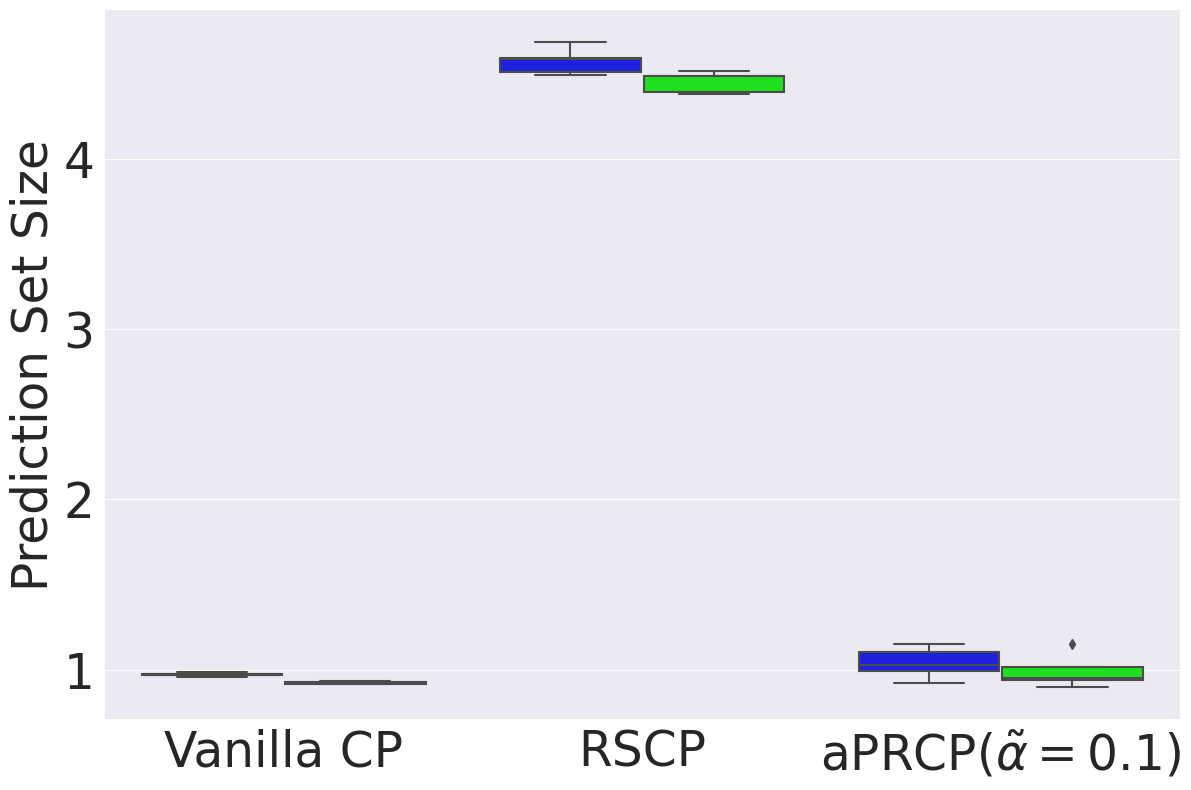}
        \end{minipage}    
        \hfill
        \begin{minipage}{.33\linewidth}
            \includegraphics[width=\linewidth]{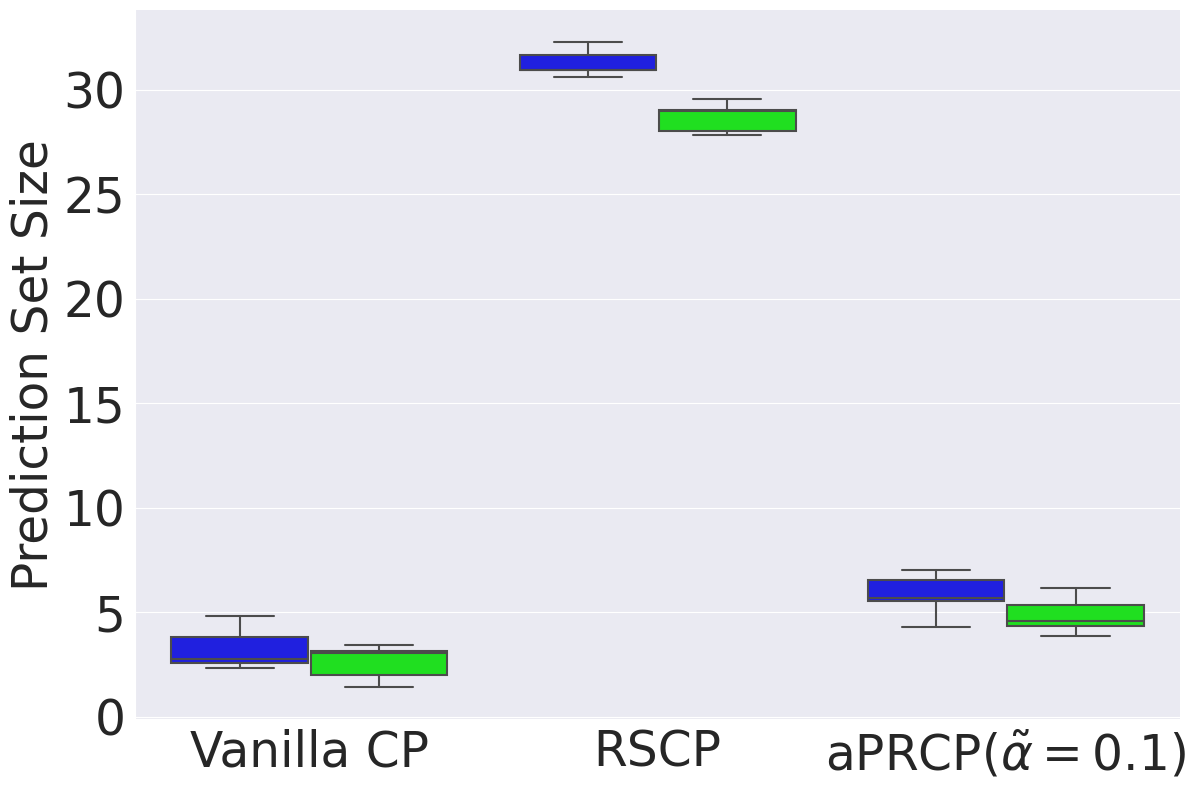}
        \end{minipage}
        \hfill
        \begin{minipage}{.33\linewidth}
            \includegraphics[width=\linewidth]{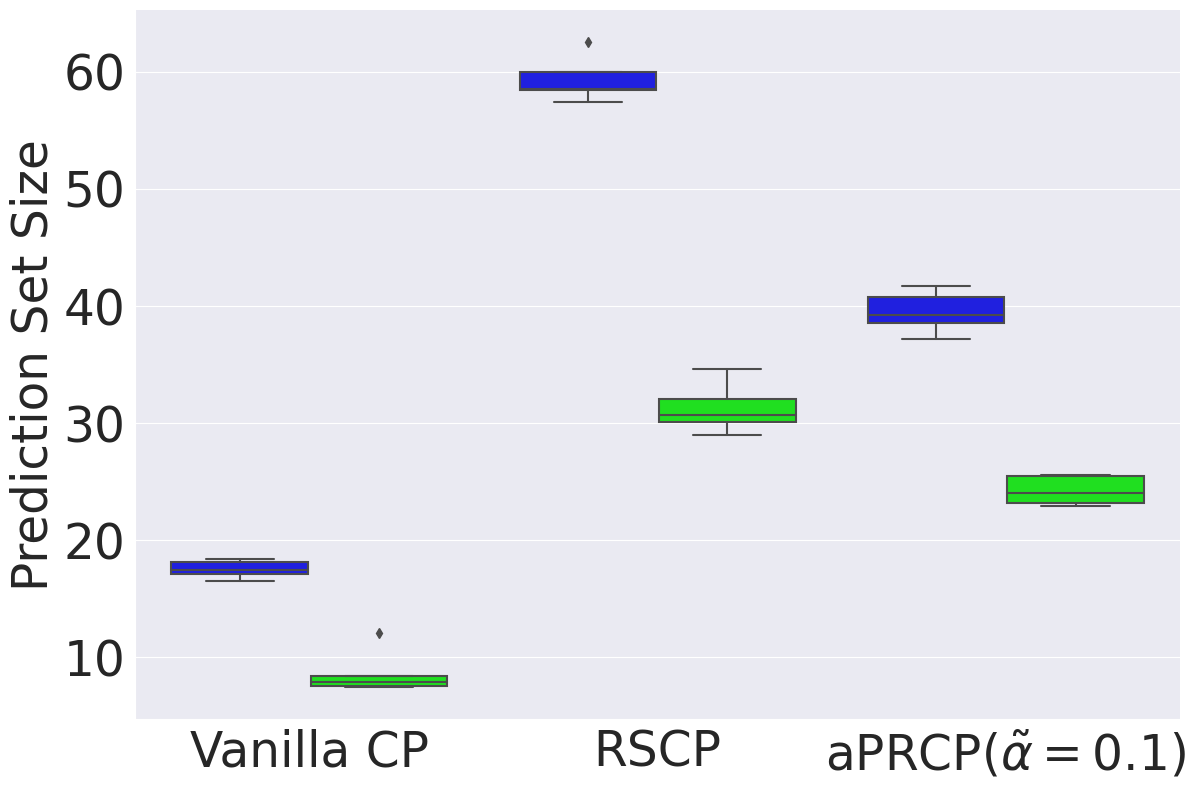}
        \end{minipage}
    \end{minipage}
    \caption{Probabilistic robust coverage (top) and prediction set size (bottom) constructed by \texttt{Vanilla CP}, \texttt{RSCP}, and \texttt{aPRCP}($\tilde{\alpha} = 0.1$) using HPS and APS scoring functions (target coverage is $90\%$). Results are reported over 50 runs.}
    \label{All_PRCP_mainPaper1}
\end{figure*}

\subsection{Experimental Setup}

\noindent {\bf Classification Datasets.} We consider three benchmark datasets for evaluation: CIFAR10 \citep{krizhevsky2009learning}, CIFAR100  \citep{krizhevsky2009learning}, and ImageNet  \citep{deng2009imagenet} using the standard training and test split. 

\noindent {\bf Deep Neural Network Models.} We consider ResNet-110  \citep{he2016deep} as the main model architecture for CIFAR10 and CIFAR100 and ResNet-50 for ImageNet in our experiments. We provide results on additional deep neural networks in the Appendix due to space constraints noting that we find similar patterns.
We train each model using two different approaches : {\em 1) Standard training:} The training is only performed using clean training examples; and {\em 2) Gaussian augmented training:} The training procedure employs Gaussian augmented examples  \citep{gendler2022adversarially} parameterized by a given standard deviation $\sigma=0.125$.

\noindent {\bf Methods and Baselines.} We consider two relevant state-of-the-art CP algorithms as our baselines. First, we employ \texttt{Vanilla CP} \citep{NEURIPS2020_244edd7e} designed for clean input examples. Second, we use randomly smooth conformal prediction (\texttt{RSCP})  \citep{gendler2022adversarially} which is designed to handle worst-case adversarial examples. We employ the publicly available implementations of \texttt{Vanilla CP}\footnote{\url{https://github.com/msesia/arc}} and \texttt{RSCP}\footnote{\url{https://github.com/Asafgendler/RSCP}} using the best settings suggested by their authors. 

We consider different configurations of our proposed adaptive probabilistically robust CP (\texttt{aPRCP}) algorithm. \texttt{aPRCP}(worst-adv) refers to the configuration where the evaluation of \texttt{aPRCP} is performed over adversarial examples generated using an adversarial attack algorithm. \texttt{aPRCP}($\tilde{\alpha}$) refers to the configuration where the evaluation is performed over noisy examples with a bounded perturbation on the test data. We provide additional results using different values for $\tilde{\alpha}$ in the Appendix.

\noindent{\bf Adversarial Attack Algorithms.}
To generate adversarial examples, we employ the white-box \texttt{PGD} attack algorithm  \citep{gendler2022adversarially} to evaluate \texttt{Vanilla CP} algorithm. For \texttt{RSCP} and \texttt{aPRCP(worst-adv)}, we employ an adapted \texttt{PGD} algorithm for smoothed classifiers as proposed in \citet{salman2019provably}. We provide additional results using different adversarial algorithms in the Appendix.

\noindent {\bf Evaluation Methodology.}  We present all our experimental results for desired coverage as $(1-\alpha)$=90\%. 
We report the average metrics (coverage and prediction set size) over 50 different runs for all datasets. We consider two different evaluation settings at the inference time as described below.

(a) \textbf{Probabilistic robustness evaluation}: We randomly sample $n_s = 128$ examples for each clean testing input: $X^{j}=X+\epsilon_j$ ($j$=1 to $n_s$), where $||\epsilon_j||_2 \leq r = 0.125$ for the CIFAR data and $||\epsilon_j||_2 \leq r = 0.25$ for the ImageNet data.  For a better span during the sampling procedure for each clean testing input, we sample two perturbations $\epsilon_j$ for each $r^{(k)}$ in $0<r^{(1)}<\cdots<r^{(k)}\le r$ such that $\|\epsilon_j\|_2=r^{(k)}$. 

We define both coverage and prediction set size metrics to adapt to the probabilistic robustness setting as follows:
{\em Coverage}: fraction of examples for which prediction set contains the ground-truth output.
\begin{equation}
\small
    \text{\normalsize Coverage} = \frac{1}{n_s} \sum_{j=1}^{n_s} \mathbbm{1}[Y_{n+1} \in \tilde{C}(X_{n+1} + \epsilon_j)].
    \label{eq:cvg_prob}
\end{equation}

{\em Efficiency}: average prediction set size, small values mean high efficiency.
\begin{equation}
    \text{Prediction Set Size} = \frac{1}{n_s}\sum_{j=1}^{n_s}\lvert\tilde{C}(X_{n+1}+\epsilon_j)\rvert,
    \label{eq:set_prob}
\end{equation}
where $||\epsilon_j||_2 \leq r = 0.125$ for CIFAR dataset, and $||\epsilon_j||_2 \leq r = 0.25$ for the ImageNet dataset. These re-defined metrics allow us to evaluate \texttt{aPRCP}($\tilde \alpha$) with different values of probability parameters $\tilde{\alpha}$ for probabilistic robustness. We provide additional results explaining the impact of the choice of the sampling distributions in the Appendix.

(b) \textbf{Worst-case evaluation:} We employ adversarial attack algorithms as mentioned above to create one worst-case adversarial example ($\tilde{X}$) for each clean testing input ($X$). We define both metrics for this setting as follows:
\begin{equation}
    \text{Coverage} = \mathbbm{1}[Y_{n+1} \in \tilde{C}(\tilde{X}_{n+1})].
    \label{eq:cvg}
\end{equation}
\begin{equation}
    \text{Prediction Set Size} = \lvert\tilde{C}(\tilde{X}_{n+1})\rvert.
    \label{eq:set}
\end{equation}

\begin{figure*}[!h]
    \centering
    \begin{minipage}{.92\linewidth}
        \begin{minipage}{\linewidth}
            \centering
            \includegraphics[width=.35\linewidth]{MainPaper/legend.png}
        \end{minipage}     
        \begin{minipage}{.33\linewidth}
            \centering
            (a) CIFAR10
        \end{minipage}
        \hfill
        \begin{minipage}{.33\linewidth}
            \centering
            (b) CIFAR100
        \end{minipage} 
        \begin{minipage}{.33\linewidth}
            \centering
            (a) ImageNet
        \end{minipage}
        \hfill
        \begin{minipage}{.33\linewidth}
            \includegraphics[width=\linewidth]{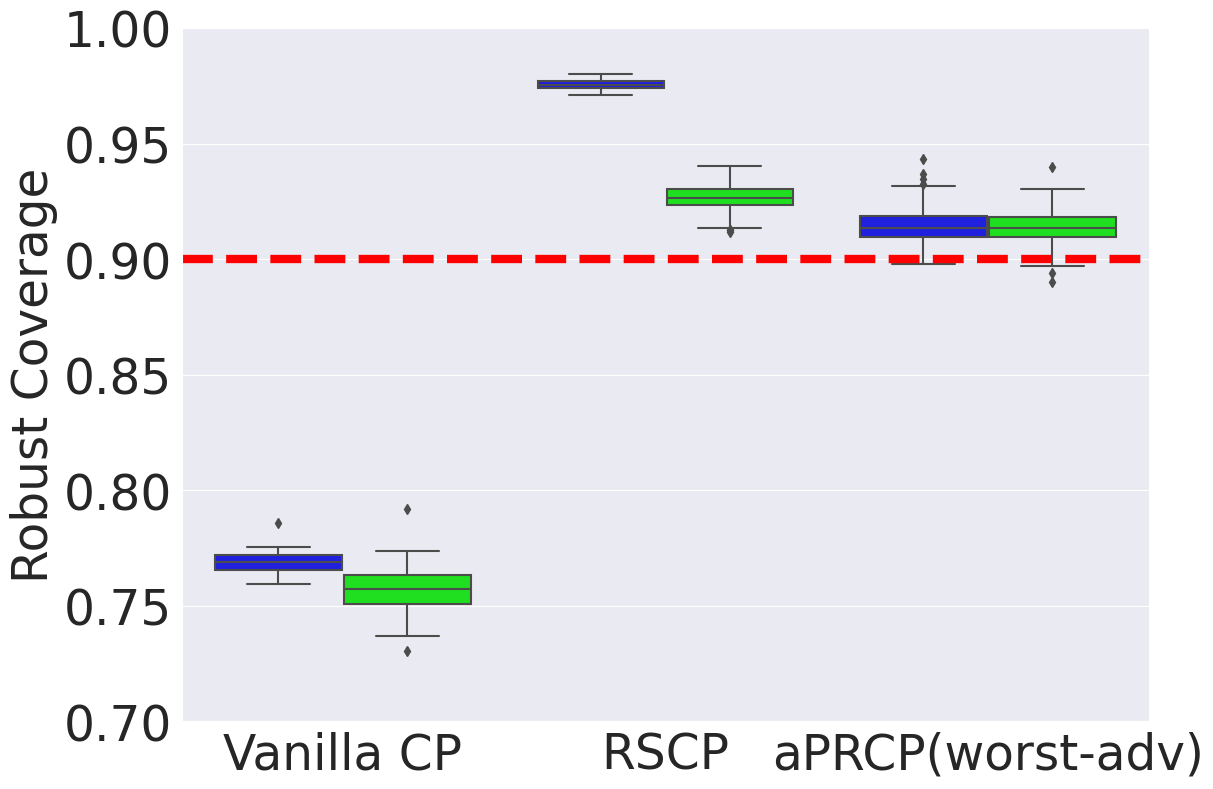}
        \end{minipage}
        \hfill
        \begin{minipage}{.33\linewidth}
            \centering
            \includegraphics[width=\linewidth]{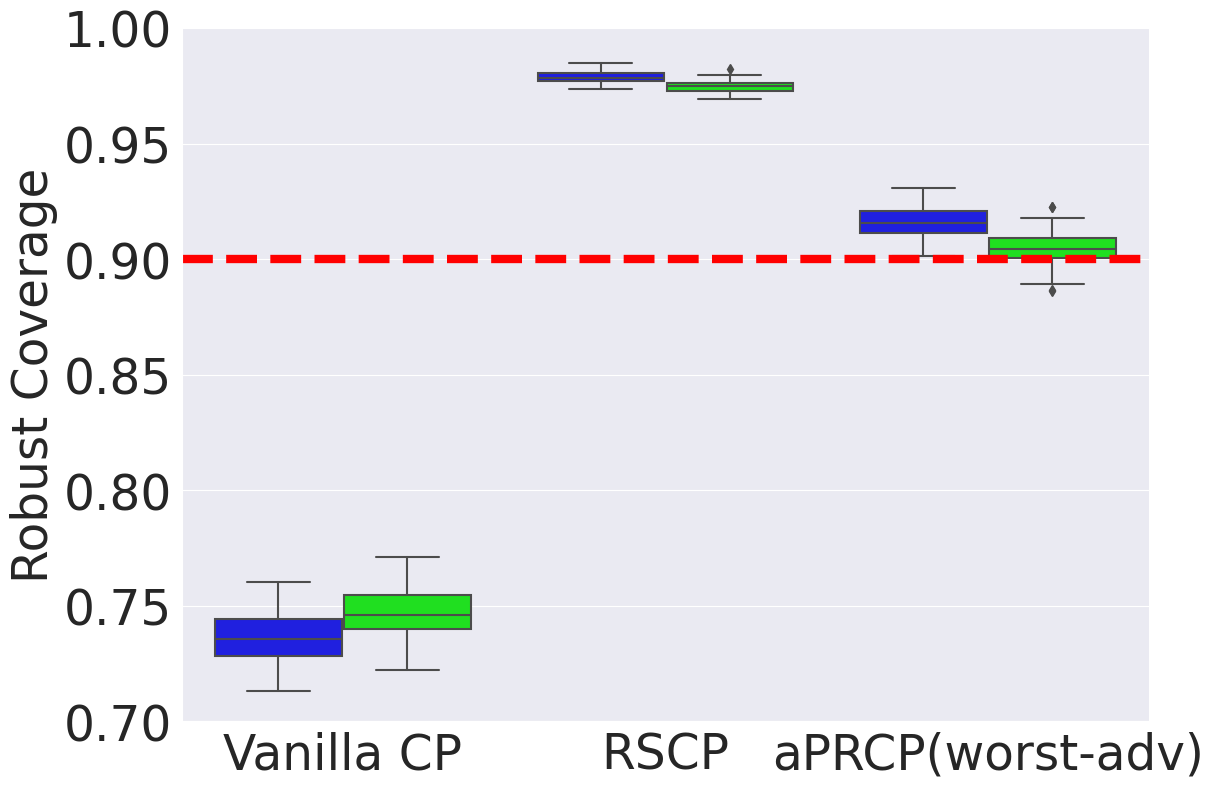}
        \end{minipage}    
        \hfill
        \begin{minipage}{.33\linewidth}
            \includegraphics[width=\linewidth]{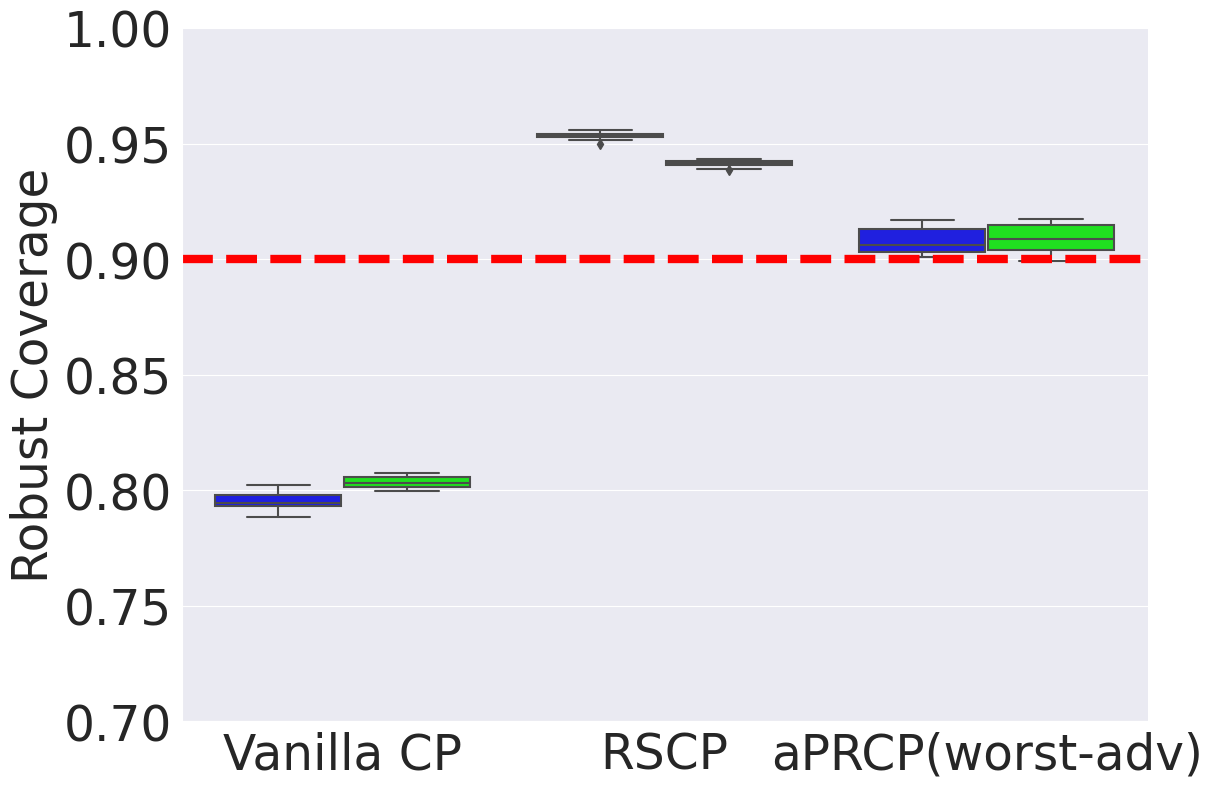}
        \end{minipage}
        \hfill
        \begin{minipage}{.33\linewidth}
            \centering
            \includegraphics[width=\linewidth]{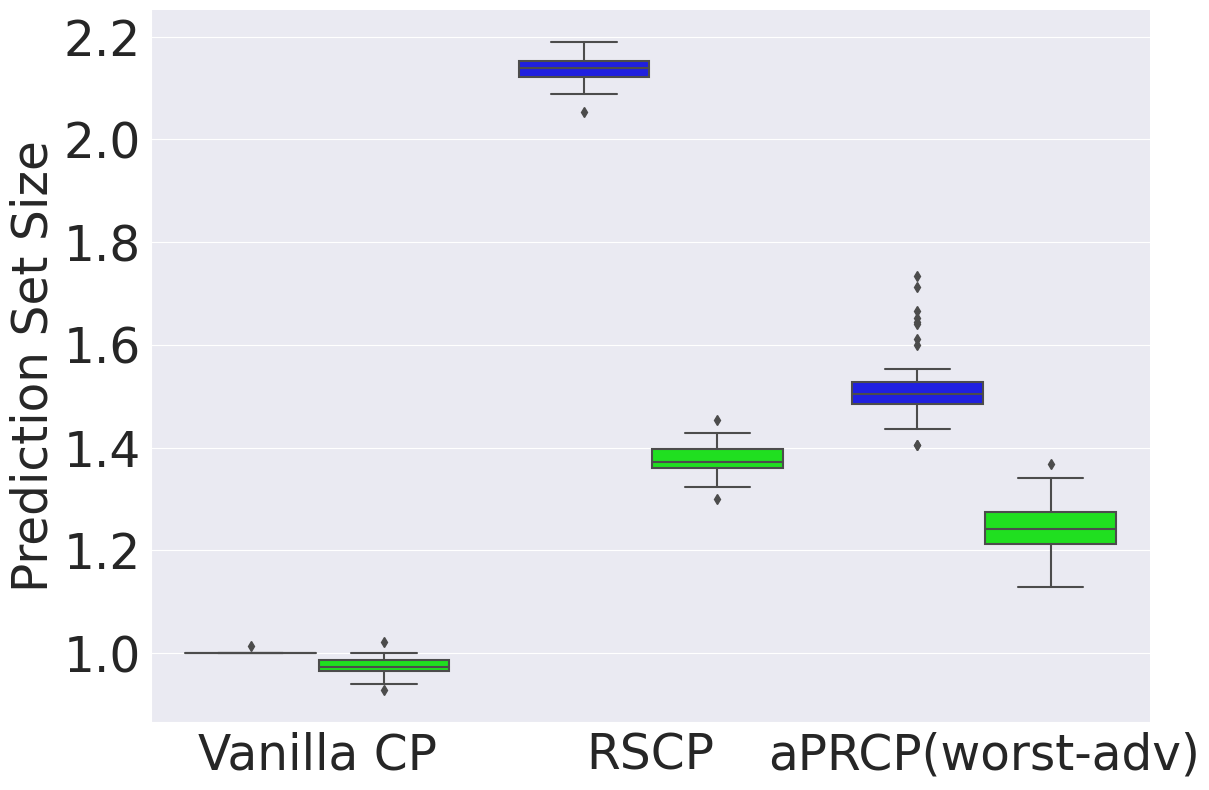}
        \end{minipage}    
        \hfill
        \begin{minipage}{.33\linewidth}
            \includegraphics[width=\linewidth]{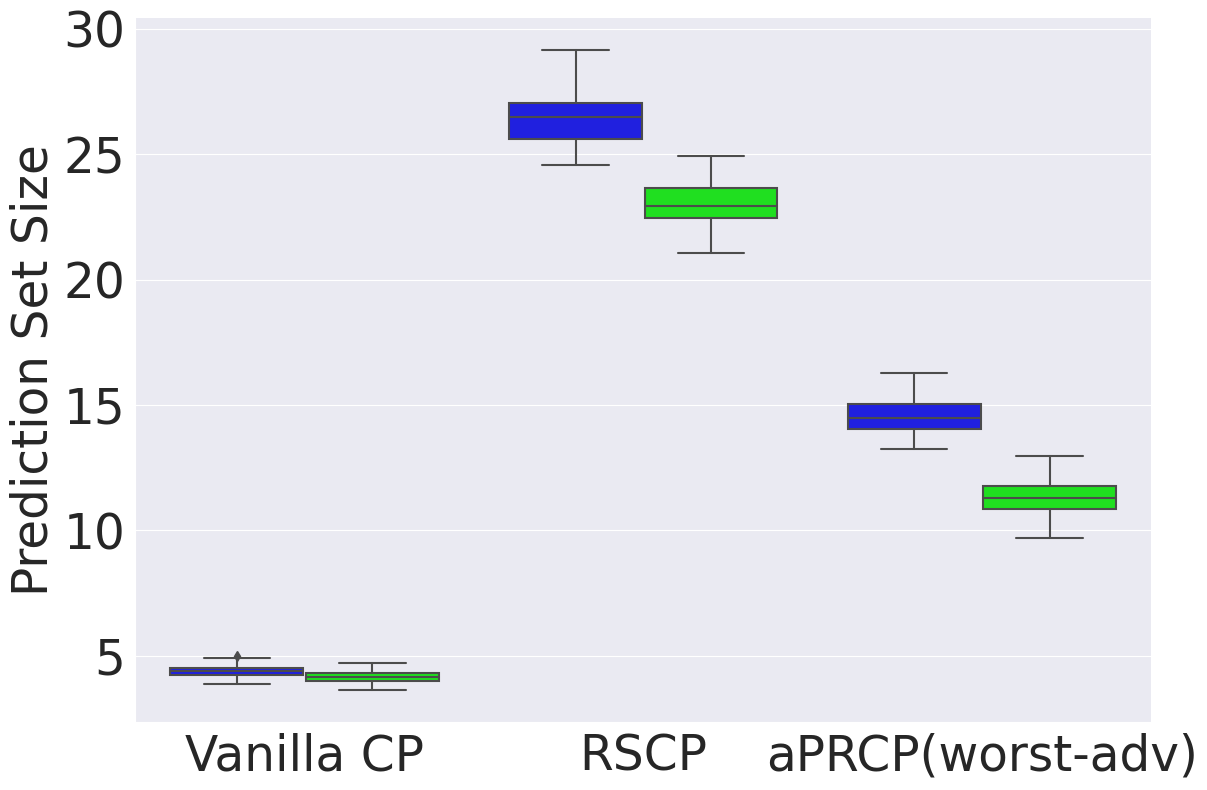}
        \end{minipage}
        \hfill
        \begin{minipage}{.33\linewidth}
            \includegraphics[width=\linewidth]{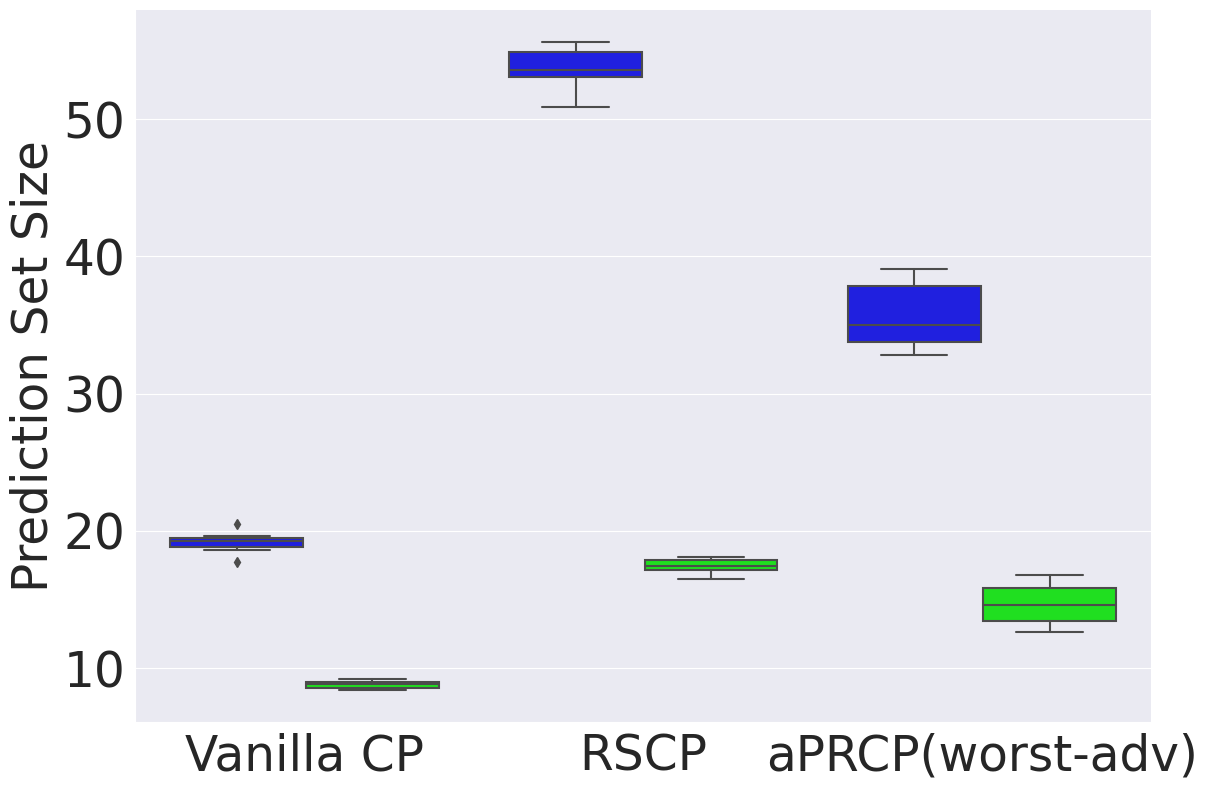}
        \end{minipage}
    \end{minipage}
    \caption{Adversarially robust coverage (top) and prediction set size (bottom) constructed by Vanilla CP, RSCP, and aPRCP(worst-adv) using HPS and APS scoring functions (target coverage is $90\%$). Results are reported over 50 runs.}
    \label{All_ARCP_mainPaper1}
\end{figure*}

\subsection{Results and Discussion}

{\bf Probabilistic Robust Coverage Performance.} 
Figure \ref{All_PRCP_mainPaper1} shows the probabilistic robustness performance (in terms of coverage and prediction set size) obtained by \texttt{Vanilla CP}, \texttt{RSCP}, and \texttt{aPRCP}($\tilde{\alpha} = 0.1$) for all three datasets using standard training. We make the following observations. 1) \texttt{Vanilla CP} algorithm fails in achieving the target probabilistic robust coverage. 2) \texttt{RSCP} algorithm achieves the desired probabilistic coverage, but has an empirical coverage significantly larger then 90\%. This yields very large prediction sets. Using APS, \texttt{RSCP} yields on average a prediction set of 30 labels for CIFAR100 and 60 for ImageNet. 3) \texttt{aPRCP}($\tilde{\alpha} = 0.1$) produces smaller prediction sets by keeping the actual coverage close to the target coverage. \texttt{aPRCP}($\tilde{\alpha} = 0.1$) reduces the prediction set by an average of 20 labels for CIFAR100 and ImageNet compared to RSCP method using any of the two non-conformity scores.

{\bf Adversarially Robust Coverage Performance.}
Figure \ref{All_ARCP_mainPaper1} shows the robust coverage and prediction set size obtained by \texttt{Vanilla CP}, \texttt{RSCP}, and \texttt{aPRCP}(worst-adv) achieved on the worst-case examples for three different datasets using Gaussian augmented training. We observe similar patterns as the probabilistic robust coverage results. 1) \texttt{Vanilla CP} fails to achieve the target coverage empirically. For all datasets, it achieves empirical coverage lower then 80\%. 2) Similar to the probabilistic robustness  results, \texttt{RSCP} method achieves an empirical coverage larger then 95\% for all datasets, yielding significantly large prediction sets for all datasets. 3) \texttt{aPRCP}(worst-adv) produces smaller prediction sets by keeping the actual coverage close to the target coverage (by a margin of 2\%) on worst-case adversarial examples. \texttt{aPRCP(worst-adv)} reduces the prediction set by more then 10 labels for CIFAR100 and ImageNet compared to RSCP method using any of the two non-conformity scores (HPS and APS).

\section{Related Work}
\textbf{Conformal Prediction.} CP is a general framework for uncertainty quantification that provides marginal coverage guarantees without any assumptions on the underlying data distribution \citep{shafer2008tutorial}. CP can be used for regression \citep{vovk2018cross,lei2018distribution,romano2019conformalized,izbicki2019flexible,guan2019conformal,gupta2022nested,kivaranovic2020adaptive,barber2021predictive,foygel2021limits} to produce prediction intervals and for classification \citep{lei2013distribution,sadinle2019least,romano2020classification,angelopoulos2021uncertainty,NCP} to produce prediction sets. Prior work has also considered instantiations of the CP framework to handle the differences between training and test distributions that is caused by long-term distribution shift \citep{gibbs2021adaptive}, covariate shift\citep{tibshirani2019conformal}, and label-distribution shift \citep{podkopaev2021distribution}. However, none of these existing works focus on the robustness setting where the distributional shift is caused by a bounded  adversarial perturbation. While using adversarial training seems intuitive to mitigate this problem, it was shown that vanilla CP cannot achieve the target coverage on adversarial data \citep{gendler2022adversarially}.

\textbf{Robust Conformal Prediction.} CP methods for robust coverage due to natural or adversarial perturbations is a new line of research that requires theoretical and empirical analysis.
Very few works have proposed variants of CP to handle adversarial robust settings.  The work on cautious deep learning \citep{hechtlinger2018cautious} proposed a CP-based prediction set construction that accounts for adversarial examples. However, this method does not provide any theoretical guarantees. Recently, randomly smoothed conformal prediction (RSCP) \citep{gendler2022adversarially} was proposed as a generalization for adversarial examples using randomized smoothing. This generalization is achieved by introducing a constant inflation condition that adjusts the CP quantile to adversarial perturbations. This adjustment is proportional to the potential adversarial perturbations that can affect the test data. Hence, RSCP is prone to produce large prediction sets along with high marginal coverage to achieve  robustness. 

We study the general setting of probabilistically robust CP and develop probably correct algorithms to achieve improved trade-offs for nominal and robust performance over vanilla CP and RSCP. The key differences between our work (aPRCP) and RSCP are: 1) aPRCP uses a {\em quantile-of-quantile} design and does not require finding a score inflation constant like RSCP. 2) RSCP requires the design of a specialized scoring function while aPRCP can employ any existing score function. 3) aPRCP does not have test-time overhead unlike RSCP due to the generation of samples.

\section{Summary and Future Work}

This paper studied the novel problem of probabilistic robustness for conformal prediction (PRCP) based uncertainty quantification of deep classifiers. We developed the adaptive PRCP (aPRCP) algorithm based on the principle of quantile-of-quantile design and theoretically analyzed its effectiveness to achieve improved trade-offs between performance on clean data and robustness to adversarial examples. Our experiments on multiple image datasets using deep classifiers demonstrated the effectiveness of aPRCP over vanilla CP methods and adversarially robust CP methods. Future work should study and analyze end-to-end PRCP algorithms.

\section*{Acknowledgements}

This research is supported in part by Proofpoint Inc. and the
AgAID AI Institute for Agriculture Decision Support, supported by the National Science Foundation and United States
Department of Agriculture - National Institute of Food and
Agriculture award \#2021-67021-35344. The authors would
like to thank the feedback from anonymous reviewers who
provided suggestions to improve the paper.

\newpage
\bibliography{reference}

\begin{thebibliography}{38}
\providecommand{\natexlab}[1]{#1}
\providecommand{\url}[1]{\texttt{#1}}
\expandafter\ifx\csname urlstyle\endcsname\relax
  \providecommand{\doi}[1]{doi: #1}\else
  \providecommand{\doi}{doi: \begingroup \urlstyle{rm}\Url}\fi

\bibitem[Angelopoulos et~al.(2021)Angelopoulos, Bates, Jordan, and
  Malik]{angelopoulos2021uncertainty}
Anastasios~Nikolas Angelopoulos, Stephen Bates, Michael Jordan, and Jitendra
  Malik.
\newblock Uncertainty sets for image classifiers using conformal prediction.
\newblock In \emph{International Conference on Learning Representations(ICLR)},
  2021.
\newblock URL \url{https://openreview.net/forum?id=eNdiU_DbM9}.

\bibitem[Barber et~al.(2021)Barber, Candes, Ramdas, and
  Tibshirani]{barber2021predictive}
Rina~Foygel Barber, Emmanuel~J Candes, Aaditya Ramdas, and Ryan~J Tibshirani.
\newblock Predictive inference with the jackknife+.
\newblock \emph{The Annals of Statistics}, 2021.

\bibitem[Ben-David et~al.(2006)Ben-David, Blitzer, Crammer, and
  Pereira]{ben2006analysis}
Shai Ben-David, John Blitzer, Koby Crammer, and Fernando Pereira.
\newblock Analysis of representations for domain adaptation.
\newblock \emph{Advances in neural information processing systems}, 19, 2006.

\bibitem[Cai et~al.(2019)Cai, Reif, Hegde, Hipp, Kim, Smilkov, Wattenberg,
  Viegas, Corrado, Stumpe, et~al.]{cai2019human}
Carrie~J Cai, Emily Reif, Narayan Hegde, Jason Hipp, Been Kim, Daniel Smilkov,
  Martin Wattenberg, Fernanda Viegas, Greg~S Corrado, Martin~C Stumpe, et~al.
\newblock Human-centered tools for coping with imperfect algorithms during
  medical decision-making.
\newblock In \emph{Proceedings of the 2019 chi conference on human factors in
  computing systems}, pages 1--14, 2019.

\bibitem[Cauchois et~al.(2020)Cauchois, Gupta, Ali, and
  Duchi]{cauchois2020robust}
Maxime Cauchois, Suyash Gupta, Alnur Ali, and John~C Duchi.
\newblock Robust validation: Confident predictions even when distributions
  shift.
\newblock \emph{arXiv preprint arXiv:2008.04267}, 2020.

\bibitem[Deng et~al.(2009)Deng, Dong, Socher, Li, Li, and
  Fei-Fei]{deng2009imagenet}
Jia Deng, Wei Dong, Richard Socher, Li-Jia Li, Kai Li, and Li~Fei-Fei.
\newblock Imagenet: A large-scale hierarchical image database.
\newblock In \emph{2009 IEEE conference on computer vision and pattern
  recognition}, pages 248--255. Ieee, 2009.

\bibitem[Foygel~Barber et~al.(2021)Foygel~Barber, Candes, Ramdas, and
  Tibshirani]{foygel2021limits}
Rina Foygel~Barber, Emmanuel~J Candes, Aaditya Ramdas, and Ryan~J Tibshirani.
\newblock The limits of distribution-free conditional predictive inference.
\newblock \emph{Information and Inference: A Journal of the IMA}, 2021.

\bibitem[Gendler et~al.(2022)Gendler, Weng, Daniel, and
  Romano]{gendler2022adversarially}
Asaf Gendler, Tsui-Wei Weng, Luca Daniel, and Yaniv Romano.
\newblock Adversarially robust conformal prediction.
\newblock In \emph{International Conference on Learning Representations}, 2022.

\bibitem[Ghosh et~al.(2023)Ghosh, Belkhouja, Yan, and Doppa]{NCP}
Subhankar Ghosh, Taha Belkhouja, Yan Yan, and Janardhan~Rao Doppa.
\newblock Improving uncertainty quantification of deep classifiers via
  neighborhood conformal prediction: Novel algorithm and theoretical analysis.
\newblock \emph{CoRR}, abs/2303.10694, 2023.

\bibitem[Gibbs and Candes(2021)]{gibbs2021adaptive}
Isaac Gibbs and Emmanuel Candes.
\newblock Adaptive conformal inference under distribution shift.
\newblock \emph{Advances in Neural Information Processing Systems},
  34:\penalty0 1660--1672, 2021.

\bibitem[Guan(2019)]{guan2019conformal}
Leying Guan.
\newblock Conformal prediction with localization.
\newblock \emph{arXiv preprint arXiv:1908.08558}, 2019.

\bibitem[Guan and Tibshirani(2022)]{guan2022prediction}
Leying Guan and Robert Tibshirani.
\newblock Prediction and outlier detection in classification problems.
\newblock \emph{Journal of the Royal Statistical Society. Series B, Statistical
  Methodology}, 84\penalty0 (2):\penalty0 524, 2022.

\bibitem[Gupta et~al.(2022)Gupta, Kuchibhotla, and Ramdas]{gupta2022nested}
Chirag Gupta, Arun~K Kuchibhotla, and Aaditya Ramdas.
\newblock Nested conformal prediction and quantile out-of-bag ensemble methods.
\newblock \emph{Pattern Recognition}, 2022.

\bibitem[He et~al.(2016)He, Zhang, Ren, and Sun]{he2016deep}
Kaiming He, Xiangyu Zhang, Shaoqing Ren, and Jian Sun.
\newblock Deep residual learning for image recognition.
\newblock In \emph{Proceedings of the IEEE conference on computer vision and
  pattern recognition}, pages 770--778, 2016.

\bibitem[Hechtlinger et~al.(2018)Hechtlinger, P{\'o}czos, and
  Wasserman]{hechtlinger2018cautious}
Yotam Hechtlinger, Barnab{\'a}s P{\'o}czos, and Larry Wasserman.
\newblock Cautious deep learning.
\newblock \emph{arXiv preprint arXiv:1805.09460}, 2018.

\bibitem[Iandola et~al.(2014)Iandola, Moskewicz, Karayev, Girshick, Darrell,
  and Keutzer]{iandola2014densenet}
Forrest Iandola, Matt Moskewicz, Sergey Karayev, Ross Girshick, Trevor Darrell,
  and Kurt Keutzer.
\newblock Densenet: Implementing efficient convnet descriptor pyramids.
\newblock \emph{arXiv preprint arXiv:1404.1869}, 2014.

\bibitem[Izbicki et~al.(2019)Izbicki, Shimizu, and Stern]{izbicki2019flexible}
Rafael Izbicki, Gilson~T Shimizu, and Rafael~B Stern.
\newblock Flexible distribution-free conditional predictive bands using density
  estimators.
\newblock \emph{arXiv preprint arXiv:1910.05575}, 2019.

\bibitem[Kivaranovic et~al.(2020)Kivaranovic, Johnson, and
  Leeb]{kivaranovic2020adaptive}
Danijel Kivaranovic, Kory~D Johnson, and Hannes Leeb.
\newblock Adaptive, distribution-free prediction intervals for deep networks.
\newblock In \emph{International Conference on Artificial Intelligence and
  Statistics(AISTATS)}. PMLR, 2020.

\bibitem[Krizhevsky et~al.(2009)Krizhevsky, Hinton,
  et~al.]{krizhevsky2009learning}
Alex Krizhevsky, Geoffrey Hinton, et~al.
\newblock Learning multiple layers of features from tiny images.
\newblock 2009.

\bibitem[Lei et~al.(2013)Lei, Robins, and Wasserman]{lei2013distribution}
Jing Lei, James Robins, and Larry Wasserman.
\newblock Distribution-free prediction sets.
\newblock \emph{Journal of the American Statistical Association}, 108\penalty0
  (501):\penalty0 278--287, 2013.

\bibitem[Lei et~al.(2018)Lei, G’Sell, Rinaldo, Tibshirani, and
  Wasserman]{lei2018distribution}
Jing Lei, Max G’Sell, Alessandro Rinaldo, Ryan~J Tibshirani, and Larry
  Wasserman.
\newblock Distribution-free predictive inference for regression.
\newblock \emph{Journal of the American Statistical Association}, 2018.

\bibitem[Li et~al.(2019)Li, Li, Wang, Zhang, and Gong]{Black_box}
Yandong Li, Lijun Li, Liqiang Wang, Tong Zhang, and Boqing Gong.
\newblock {NATTACK}: Learning the distributions of adversarial examples for an
  improved black-box attack on deep neural networks.
\newblock In Kamalika Chaudhuri and Ruslan Salakhutdinov, editors,
  \emph{Proceedings of the 36th International Conference on Machine Learning},
  volume~97 of \emph{Proceedings of Machine Learning Research}, pages
  3866--3876. PMLR, 09--15 Jun 2019.
\newblock URL \url{https://proceedings.mlr.press/v97/li19g.html}.

\bibitem[Podkopaev and Ramdas(2021)]{podkopaev2021distribution}
Aleksandr Podkopaev and Aaditya Ramdas.
\newblock Distribution-free uncertainty quantification for classification under
  label shift.
\newblock In \emph{Uncertainty in Artificial Intelligence}, pages 844--853.
  PMLR, 2021.

\bibitem[Rastogi et~al.(2022)Rastogi, Leqi, Holstein, and
  Heidari]{rastogi2022unifying}
Charvi Rastogi, Liu Leqi, Kenneth Holstein, and Hoda Heidari.
\newblock A unifying framework for combining complementary strengths of humans
  and ml toward better predictive decision-making.
\newblock \emph{arXiv preprint arXiv:2204.10806}, 2022.

\bibitem[Redko et~al.(2020)Redko, Morvant, Habrard, Sebban, and
  Bennani]{redko2020survey}
Ievgen Redko, Emilie Morvant, Amaury Habrard, Marc Sebban, and Youn{\`e}s
  Bennani.
\newblock A survey on domain adaptation theory: learning bounds and theoretical
  guarantees.
\newblock \emph{arXiv preprint arXiv:2004.11829}, 2020.

\bibitem[Robey et~al.(2022)Robey, Chamon, Pappas, and
  Hassani]{robey2022probabilistically}
Alexander Robey, Luiz~FO Chamon, George~J Pappas, and Hamed Hassani.
\newblock Probabilistically robust learning: Balancing average-and worst-case
  performance.
\newblock \emph{arXiv preprint arXiv:2202.01136}, 2022.

\bibitem[Romano et~al.(2019)Romano, Patterson, and
  Candes]{romano2019conformalized}
Yaniv Romano, Evan Patterson, and Emmanuel Candes.
\newblock Conformalized quantile regression.
\newblock \emph{Advances in Neural Information Processing Systems(NeurIPS)},
  2019.

\bibitem[Romano et~al.(2020{\natexlab{a}})Romano, Sesia, and
  Candes]{NEURIPS2020_244edd7e}
Yaniv Romano, Matteo Sesia, and Emmanuel Candes.
\newblock Classification with valid and adaptive coverage.
\newblock In H.~Larochelle, M.~Ranzato, R.~Hadsell, M.F. Balcan, and H.~Lin,
  editors, \emph{Advances in Neural Information Processing Systems(NeurIPS)},
  volume~33, pages 3581--3591. Curran Associates, Inc., 2020{\natexlab{a}}.
\newblock URL
  \url{https://proceedings.neurips.cc/paper/2020/file/244edd7e85dc81602b7615cd705545f5-Paper.pdf}.

\bibitem[Romano et~al.(2020{\natexlab{b}})Romano, Sesia, and
  Candes]{romano2020classification}
Yaniv Romano, Matteo Sesia, and Emmanuel Candes.
\newblock Classification with valid and adaptive coverage.
\newblock \emph{Advances in Neural Information Processing Systems},
  33:\penalty0 3581--3591, 2020{\natexlab{b}}.

\bibitem[Sadinle et~al.(2019)Sadinle, Lei, and Wasserman]{sadinle2019least}
Mauricio Sadinle, Jing Lei, and Larry Wasserman.
\newblock Least ambiguous set-valued classifiers with bounded error levels.
\newblock \emph{Journal of the American Statistical Association}, 2019.

\bibitem[Salman et~al.(2019)Salman, Li, Razenshteyn, Zhang, Zhang, Bubeck, and
  Yang]{salman2019provably}
Hadi Salman, Jerry Li, Ilya Razenshteyn, Pengchuan Zhang, Huan Zhang, Sebastien
  Bubeck, and Greg Yang.
\newblock Provably robust deep learning via adversarially trained smoothed
  classifiers.
\newblock \emph{Advances in Neural Information Processing Systems}, 32, 2019.

\bibitem[Shafer and Vovk(2008)]{shafer2008tutorial}
Glenn Shafer and Vladimir Vovk.
\newblock A tutorial on conformal prediction.
\newblock \emph{Journal of Machine Learning Research}, 2008.

\bibitem[Simonyan and Zisserman(2014)]{simonyan2014very}
Karen Simonyan and Andrew Zisserman.
\newblock Very deep convolutional networks for large-scale image recognition.
\newblock \emph{arXiv preprint arXiv:1409.1556}, 2014.

\bibitem[Tibshirani et~al.(2019)Tibshirani, Foygel~Barber, Candes, and
  Ramdas]{tibshirani2019conformal}
Ryan~J Tibshirani, Rina Foygel~Barber, Emmanuel Candes, and Aaditya Ramdas.
\newblock Conformal prediction under covariate shift.
\newblock \emph{Advances in neural information processing systems}, 32, 2019.

\bibitem[Vovk(2012)]{vovk2012conditional}
Vladimir Vovk.
\newblock Conditional validity of inductive conformal predictors.
\newblock In \emph{Asian conference on machine learning}, pages 475--490. PMLR,
  2012.

\bibitem[Vovk et~al.(2005)Vovk, Gammerman, and Shafer]{vovk2005algorithmic}
Vladimir Vovk, Alexander Gammerman, and Glenn Shafer.
\newblock \emph{Algorithmic learning in a random world}.
\newblock Springer Science \& Business Media, 2005.

\bibitem[Vovk et~al.(2018)Vovk, Nouretdinov, Manokhin, and
  Gammerman]{vovk2018cross}
Vladimir Vovk, Ilia Nouretdinov, Valery Manokhin, and Alexander Gammerman.
\newblock Cross-conformal predictive distributions.
\newblock In \emph{Conformal and Probabilistic Prediction and Applications}.
  PMLR, 2018.

\bibitem[Vovk et~al.(1999)Vovk, Gammerman, and Saunders]{vovk1999machine}
Volodya Vovk, Alexander Gammerman, and Craig Saunders.
\newblock Machine-learning applications of algorithmic randomness.
\newblock 1999.

\end{thebibliography}
\newpage
\appendix
\onecolumn
\section{Technical Proofs}

In this section, we prove the theoretical results in the main paper.
To make it complete and self-contained, we also include the proof of Proposition 1, i.e., Theorem 1 in \citep{gendler2022adversarially}, with the framework and notations used in our paper.

\begin{proposition}
\label{proposition:AR_coverage_ARCP_appendix}
(Proposition 1 restated, adversarially robust coverage of RSCP, Theorem 1 in \citep{gendler2022adversarially})
Assume the score function $S$ is $M_r$-adversarially inflated.
Let $\calC^\AR(\widetilde X) = \{ y \in \calY : S(\widetilde X, y) \leq \tau^\AR(\alpha) \}$ be the prediction set for a testing sample $\widetilde X$.
Then RSCP achieves ($1-\alpha$)-adversarially robust coverage.
\end{proposition}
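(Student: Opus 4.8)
The plan is to reduce the uniform-over-$\epsilon$ coverage event to a single clean-data event by exploiting the inflation property, and then invoke the definition of the population quantile $Q(\alpha)$.

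First I would unpack what it means for $Y$ to lie in $\calC^\AR(X+\epsilon)$ for every admissible perturbation. By construction of the prediction set, $Y \in \calC^\AR(X+\epsilon)$ is equivalent to $S(X+\epsilon, Y) \leq \tau^\AR(\alpha)$, so the target quantity is $\P_{X,Y}\{ S(X+\epsilon, Y) \leq \tau^\AR(\alpha), \forall \epsilon \in \calE_r \}$, and it suffices to lower bound this by $1-\alpha$.

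The key step is an event inclusion. Using the $M_r$-inflation bound (Definition \ref{definition:adv_inflated_score}), $S(X+\epsilon, Y) \leq S(X,Y) + M_r$ holds simultaneously for all $\epsilon \in \calE_r$, i.e. the perturbation quantifier is absorbed into a single deterministic inequality. Since $\tau^\AR(\alpha) = Q(\alpha) + M_r$, whenever the clean score satisfies $S(X,Y) \leq Q(\alpha)$ we automatically obtain $S(X+\epsilon, Y) \leq Q(\alpha) + M_r = \tau^\AR(\alpha)$ for every $\epsilon$. Hence the clean event $\{ S(X,Y) \leq Q(\alpha) \}$ is contained in the robust event $\{ S(X+\epsilon, Y) \leq \tau^\AR(\alpha), \forall \epsilon \in \calE_r \}$, and monotonicity of probability gives the lower bound $\P_{X,Y}\{ S(X+\epsilon, Y) \leq \tau^\AR(\alpha), \forall \epsilon \} \geq \P_{X,Y}\{ S(X,Y) \leq Q(\alpha) \}$.

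Finally I would close the argument by the definition of the quantile: since $Q(\alpha) = \min\{ t : \P_{X,Y}\{ S(X,Y) \leq t \} \geq 1-\alpha \}$, right-continuity of the CDF ensures the minimum is attained, so $\P_{X,Y}\{ S(X,Y) \leq Q(\alpha) \} \geq 1-\alpha$, which yields $(1-\alpha)$-adversarially robust coverage. There is no genuine obstacle here; the only point requiring care is to apply the inflation bound uniformly over $\epsilon$ \emph{before} passing to probabilities, so that the ``for all $\epsilon$'' quantifier collapses into the deterministic comparison $S(X,Y) \leq Q(\alpha)$ rather than surviving inside the probability statement.
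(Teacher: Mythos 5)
Your proposal is correct and follows essentially the same route as the paper's proof: apply the $M_r$-inflation bound to reduce the robust event to the clean event $\{S(X,Y)\leq Q(\alpha)\}$, cancel the $M_r$ against the inflated threshold $\tau^{\text{AR}}(\alpha)=Q(\alpha)+M_r$, and conclude via the definition of the quantile. If anything, your version is slightly cleaner in handling the ``$\forall \epsilon\in\mathcal{E}_r$'' quantifier as an event inclusion before passing to probabilities, whereas the paper writes the chain of inequalities under $\mathbb{P}_Z$; the substance is identical.
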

\begin{proof}
(of Proposition \ref{proposition:AR_coverage_ARCP_appendix})

After reviewing the inflated quantile in the adversarial sense, we extend it to the following probabilistic sense.
\begin{align*}
\P_Z \{ 
S(X + \epsilon, Y) \leq \tau^\AR(\alpha)
\}
\geq &
\P_Z \{
S(X, Y) + M_r
\leq 
\tau^\AR(\alpha)
\}
\\
= &
\P_Z \{
S(X, Y) + M_r
\leq 
Q(\alpha) + M_r
\}
\\
= &
\P_Z \{ S(X, Y) \leq Q(\alpha) \}
\\
= &
\P_{X,Y} \{ S(X, Y) \leq Q(\alpha) \}
\geq 
1 - \alpha ,
\end{align*}
where the first inequality is due to the condition of $M_r$-adversarially inflated conformity score function (Definition 2), the first equality is due to the setting of the inflated threshold $\tau^\AR(\alpha) = Q(\alpha) + M_r$,
and the last inequality is due to the definition of quantile $Q(\alpha)$.
\end{proof}

\begin{proposition}
\label{proposition:PR_coverage_iPRCP_appendix}
(Proposition 2 restated, probabilistically robust coverage of iPRCP)
Assume the score function $S$ is $M_{r, \eta}$-probabilistically inflated.
Let $\calC^\iPR(\widetilde X) = \{ y \in \calY : S(\widetilde X, y) \leq \tau^\iPR(\alpha; \eta) \}$ be the prediction set for a testing sample $\widetilde X=X+\epsilon$. 
Then iPRCP achieves ($1-\alpha$)-probabilistically robust coverage.
\end{proposition}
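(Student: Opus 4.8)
The goal is to show that $\P_{X,Y,\epsilon}\{ Y \in \calC^\iPR(\widetilde X) \} = \P_Z\{ S(X+\epsilon, Y) \leq \tau^\iPR(\alpha;\eta) \} \geq 1-\alpha$. My plan is to route the target event through two auxiliary events: the clean-quantile event $B = \{ S(X,Y) \leq Q(\alpha^*_\iPR) \}$, which depends only on $(X,Y)$, and the inflation event $A = \{ S(X+\epsilon,Y) \leq S(X,Y) + M_{r,\eta} \}$. The crucial observation is that on $A \cap B$,
\begin{align*}
S(X+\epsilon, Y) \leq S(X,Y) + M_{r,\eta} \leq Q(\alpha^*_\iPR) + M_{r,\eta} = \tau^\iPR(\alpha;\eta),
\end{align*}
so $A \cap B$ is contained in the target event. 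This mirrors the deterministic chain used in Proposition \ref{proposition:AR_coverage_ARCP_appendix}, but now the two constituent events must be combined probabilistically.

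First I would condition on $(X,Y)$ and exploit a product structure rather than a union bound. On the event $B$, the inflation property of Definition \ref{definition:prob_inflated_score}, read conditionally on $(X,Y)$, gives $\P_\epsilon\{ A \mid X,Y \} \geq 1-\eta$, whence
\begin{align*}
\P_Z\{ S(X+\epsilon,Y) \leq \tau^\iPR(\alpha;\eta) \}
\geq
\E_{X,Y}\big[ \mathds{1}_B \, \P_\epsilon\{ A \mid X,Y \} \big]
\geq
(1-\eta)\, \P_{X,Y}(B).
\end{align*}
By the definition of the quantile, $\P_{X,Y}(B) = \P_{X,Y}\{ S(X,Y) \leq Q(\alpha^*_\iPR) \} \geq 1-\alpha^*_\iPR$. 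The closing step is purely arithmetic: substituting $\alpha^*_\iPR = 1 - (1-\alpha)/(1-\eta)$ yields $1-\alpha^*_\iPR = (1-\alpha)/(1-\eta)$, so the product telescopes to $(1-\eta)\cdot(1-\alpha)/(1-\eta) = 1-\alpha$. The value of $\alpha^*_\iPR$ is calibrated precisely so that the two coverage budgets multiply back to the target, and the endpoint $\eta=0$ recovers $\tau^\iPR = Q(\alpha)+M_r$ as in ARCP.

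The main obstacle is the gap between the joint and conditional forms of the inflation property. Definition \ref{definition:prob_inflated_score} states inflation as a single joint probability $\P_Z(A) \geq 1-\eta$, whereas the multiplicative argument above needs $\P_\epsilon(A \mid X,Y) \geq 1-\eta$ for almost every $(X,Y)$ in $B$. This distinction is not cosmetic: from the joint version alone, a naive inclusion–exclusion bound gives only $\P_Z(A\cap B) \geq (1-\eta) + (1-\alpha^*_\iPR) - 1 = (1-\alpha)/(1-\eta) - \eta$, and a short optimization over the conditional inflation profile $g(X,Y)=\P_\epsilon(A\mid X,Y)$ shows this is in fact the best attainable lower bound, which is strictly below $1-\alpha$ for every $\eta\in(0,\alpha)$. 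Hence the product bound, not the union bound, is essential, and the conditional reading of the inflation property is genuinely needed. I would therefore adopt the conditional form of Definition \ref{definition:prob_inflated_score}, which is consistent with the conditional robust quantile $Q^\rob(X,Y;\tilde\alpha)$ defined via $\P_\epsilon$ in (\ref{eq:robust_quantile_x}) and used throughout the aPRCP analysis; justifying this interchange, which converts the joint budget into a per-sample budget, is where the real work lies.
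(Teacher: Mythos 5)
Your proof reaches the paper's bound by essentially the same route: both arguments reduce the coverage to the product of the two budgets and close with the identity $(1-\eta)\bigl(1-\alpha^*_\iPR\bigr)=1-\alpha$. The only structural difference is the order of conditioning. The paper conditions on the inflation event $A_{r,\eta}=\{Z: S(X+\epsilon,Y)\le S(X,Y)+M_{r,\eta}\}$, discards the complementary term, shrinks the target event to $\{S(X,Y)+M_{r,\eta}\le Q(\alpha^*_\iPR)+M_{r,\eta}\}$, and then asserts $\P_Z\{S(X,Y)\le Q(\alpha^*_\iPR)\mid Z\in A_{r,\eta}\}=\P_{X,Y}\{S(X,Y)\le Q(\alpha^*_\iPR)\}$; you instead condition on $(X,Y)$ and multiply the budgets the other way around. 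That asserted equality in the paper is exactly the joint-versus-conditional issue you flag: dropping the conditioning on $A_{r,\eta}$ is legitimate only if the inflation event is (conditionally) independent of the clean-quantile event $B$, which is what the conditional reading $\P_\epsilon\{A\mid X,Y\}\ge 1-\eta$ supplies. Your computation showing that the literal joint form of Definition \ref{definition:prob_inflated_score} supports only the union bound $(1-\alpha)/(1-\eta)-\eta$, which is strictly below $1-\alpha$ for $\eta\in(0,\alpha)$ and is tight over all conditional profiles $g(X,Y)=\P_\epsilon\{A\mid X,Y\}$, is correct and is a genuine observation the paper does not make. Under the conditional strengthening (the form consistent with how $Q^\rob(X,Y;\tilde\alpha)$ is defined via $\P_\epsilon$ in the aPRCP analysis) your argument is complete, and the paper's own proof should be read as implicitly invoking the same assumption at the step where it removes the conditioning on $A_{r,\eta}$.
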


\begin{proof}
(of Proposition \ref{proposition:PR_coverage_iPRCP_appendix})

Denote $A_{r, \eta} = \{ Z \in \calX \times \calY \times \calE_r : S(X + \epsilon, Y) \leq S(X, Y) + M_{r, \eta} \}$, which implies $\P_Z \{ Z \in A_{ r, \eta } \} \geq 1 - \eta$.
Recall $\tau^\iPR(\alpha'; \eta) = Q(\alpha') + M_{r, \eta}$ for $\alpha'$ and $\eta$.
\begin{align*}
&
\P_Z \{ S(X + \epsilon, Y) \leq \tau^\iPR(\alpha'; \eta) \}
\\
= &
\P\{ Z \in A_{r, \eta} \} \cdot \P_Z \{ S(X + \epsilon, Y) \leq \tau^\iPR(\alpha'; \eta) | Z \in A_{r, \eta} \}
\\
&
+ \P\{ Z \notin A_{r, \eta} \} \cdot \P_Z \{ S(X + \epsilon, Y) \leq \tau^\iPR(\alpha'; \eta) | Z \notin A_{r, \eta} \}
\\
\geq & 
( 1 - \eta ) \cdot \P_Z \{ S(X + \epsilon, Y) \leq \tau^\iPR(\alpha'; \eta) | Z \in A_{r, \eta} \}
\\
\geq &
( 1 - \eta ) \cdot \P_Z \{ S(X, Y) + M_{r, \eta} \leq Q(\alpha') + M_{r, \eta} | Z \in A_{r, \eta} \} 
\\
= &
( 1 - \eta ) \cdot \P_{X, Y} \{ S(X, Y) \leq Q(\alpha') \}
\\
\geq &
( 1 - \eta ) ( 1 - \alpha' ) ,
\end{align*}
where the first inequality is due to the non-negativity of probability and the definition of $A_{r, \eta}$, and
the second inequality is due to $M_{r, \eta}$-probabilistically inflated score function (7).

In this case, define $\alpha^*_\iPR(\alpha; \eta) := \max\{ \alpha' : (1-\eta)(1-\alpha') \geq 1-\alpha \}$, and we can use $\tau^\iPR(\alpha^*_\iPR(\alpha; \eta); \eta)$ as the threshold to derive $(1-\alpha)$-probabilistically robust coverage.
However, we have to know the conformity score function very well, so that we access the value of $M_{r, \eta}$ given $\eta$ to determine $\tau^*_\iPR(\alpha; \eta)$, which is not always possible in practice.
\end{proof}

\begin{theorem}
\label{theorem:appendix:prob_robust_coverage_aPRCP}
(Theorem 1 restated, probabilistically robust coverage of aPRCP)
Let $\calC^\aPR(\widetilde X = X + \epsilon) = \{ y \in \calY : S(\widetilde X, y) \leq \tau^\aPR(\alpha; s) \}$ be the prediction set for a testing sample $\widetilde X$.
Then aPRCP achieves ($1-\alpha$)-probabilistically robust coverage.
\end{theorem}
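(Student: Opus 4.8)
The plan is to prove the claim directly from the two defining relations — the robust quantile (\ref{eq:robust_quantile_x}) and the aPRCP threshold (\ref{eq:aPRCP_threshold}) — via a two-level conditioning argument that closely mirrors the iPRCP proof of Proposition \ref{proposition:PR_coverage_iPRCP}. Writing $\tau = \tau^\aPR(\alpha; s)$ for brevity, the target $\P_{X, Y, \epsilon}\{ Y \in \calC^\aPR(\widetilde X) \} \geq 1 - \alpha$ is, by the definition of $\calC^\aPR$, exactly $\P_Z\{ S(X + \epsilon, Y) \leq \tau \} \geq 1 - \alpha$, so the whole argument reduces to lower-bounding this single joint probability.

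First I would isolate the ``good'' event on the data space, $B := \{ (X, Y) : Q^\rob(X, Y; \alpha^*_\aPR) \leq \tau \}$. By the very definition of $\tau$ as the $(1 - \alpha + s)$-quantile of the robust quantiles in (\ref{eq:aPRCP_threshold}), we have $\P_{X, Y}\{ (X, Y) \in B \} \geq 1 - \alpha + s$. Then I would decompose the joint probability by conditioning on $B$ and discarding the non-negative contribution from its complement:
\[
\P_Z\{ S(X + \epsilon, Y) \leq \tau \} \geq \P_{X,Y}\{ (X,Y) \in B \} \cdot \P_Z\{ S(X + \epsilon, Y) \leq \tau \mid (X,Y) \in B \}.
\]
For the inner conditional probability, I would use the robust quantile pointwise: for each fixed $(X, Y) \in B$ we have $Q^\rob(X, Y; \alpha^*_\aPR) \leq \tau$, so monotonicity of the event together with (\ref{eq:robust_quantile_x}) gives $\P_\epsilon\{ S(X + \epsilon, Y) \leq \tau \} \geq \P_\epsilon\{ S(X + \epsilon, Y) \leq Q^\rob(X, Y; \alpha^*_\aPR) \} \geq 1 - \alpha^*_\aPR$. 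Because $\epsilon$ is independent of $(X,Y)$, averaging this uniform bound over $(X,Y) \in B$ yields $\P_Z\{ S(X+\epsilon, Y) \leq \tau \mid (X,Y) \in B \} \geq 1 - \alpha^*_\aPR$. Substituting $1 - \alpha^*_\aPR = (1 - \alpha)/(1 - \alpha + s)$ and combining, the two quantile levels telescope, $(1 - \alpha + s) \cdot (1 - \alpha)/(1 - \alpha + s) = 1 - \alpha$, which is the claim.

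The main obstacle — really the only delicate point — is justifying the factorization in the conditional step: conditioning the joint law of $Z = (X, Y, \epsilon)$ on the data-only event $\{(X,Y) \in B\}$ must leave the per-sample $\epsilon$-probability intact, so that a bound $1 - \alpha^*_\aPR$ holding pointwise over $B$ transfers to the conditional expectation. This is precisely where the stated independence of $\epsilon$ from $(X,Y)$ is used. Everything else is the algebraic cancellation built into the choice $\alpha^*_\aPR = 1 - (1-\alpha)/(1-\alpha+s)$, which is engineered so that the product of the outer data-level coverage $(1-\alpha+s)$ and the inner noise-level coverage $(1-\alpha^*_\aPR)$ returns exactly $1-\alpha$.
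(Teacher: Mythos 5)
Your proposal is correct and follows essentially the same route as the paper's proof: the same event $B$ on the data space, the same decomposition dropping the complement term, the same pointwise use of the robust quantile on $B$, and the same telescoping algebra $(1-\alpha+s)(1-\alpha^*_\aPR)=1-\alpha$. If anything, your explicit justification of why the pointwise $\epsilon$-bound transfers to the conditional probability (via independence of $\epsilon$ from $(X,Y)$) is slightly more careful than the paper's presentation of that step.
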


\begin{proof}
(of Theorem \ref{theorem:appendix:prob_robust_coverage_aPRCP})

Denote $B = \{ (X, Y) \in \calX \times \calY : Q^\rob(X, Y; \alpha^*_\aPR) \leq \tau^\aPR(\alpha; s)\}$,
which implies that 
\begin{align}\label{eq:prob_B}
\P_{X,Y}\{ (X,Y) \in B \} \geq 1-\alpha+s
\end{align}
due to the definition of $\tau^\aPR(\alpha; s)$ in (9).
We simply check whether $\tau^\aPR(\alpha; s)$ can give us probabilistically robust coverage as follows:
\begin{align}\label{eq:PRCP_coverage}
&
\P_Z \{ S(X + \epsilon, Y) \leq \tau^\aPR(\alpha; s) \}
\nonumber\\
= &
\P_{X, Y}\{ X, Y : Q^\rob(X, Y; \alpha^*_\aPR) \leq \tau^\aPR(\alpha; s) \} \cdot \P_{\epsilon | X, Y} \{ S(X + \epsilon, Y) \leq \tau^\aPR(\alpha; s) \}
\nonumber\\
&
+ \P_{X, Y}\{ X : Q^\rob(X, Y; \alpha^*_\aPR) > \tau^\aPR(\alpha; s) \} \cdot \P_{\epsilon | X, Y} \{ S(X + \epsilon, Y) \leq \tau^\aPR(\alpha; s) \}
\nonumber\\
\geq &
\P_{X, Y}\{ X, Y : Q^\rob(X, Y; \alpha^*_\aPR) \leq \tau^\aPR(\alpha; s) \} \cdot \P_{\epsilon | (X, Y) \in B} \{ S(X + \epsilon, Y) \leq \tau^\aPR(\alpha; s) \}
\nonumber\\
\geq &
\P_{X, Y}\{ (X, Y) \in B \} \cdot \P_{\epsilon | (X, Y) \in B} \{ S(X + \epsilon, Y) \leq Q^\rob(X, Y; \alpha^*_\aPR) \}
\nonumber\\
\geq &
( 1 - \alpha + s ) \cdot \P_{\epsilon | (X, Y) \in B } \{ S(X + \epsilon, Y) \leq Q^\rob(X, Y; \alpha^*_\aPR) \}
\\
\geq &
( 1 - \alpha + s ) ( 1 - \alpha^*_\aPR ),
\nonumber
\end{align}
where the first inequality is due to the non-negativity of probability,
the second inequality is due to $Q^\rob(X,Y;\alpha^\aPR(\alpha)) \leq \tau^\aPR(\alpha; s)$ for $(X,Y) \in B$,
the third inequality is due to (\ref{eq:prob_B}),
and the last inequality is due to the definition of robust quantile $Q^\rob(X, Y; \tilde \alpha)$ in (8).

Recall $\alpha^*_\aPR = 1 - (1-\alpha) / (1-\alpha + s)$, so $( 1 - \alpha + s ) ( 1 - \alpha^*_\aPR ) = 1-\alpha$, which shows
\begin{align*}
\P_Z \{ S(X + \epsilon, Y) \leq \tau^\aPR(\alpha; s) \}
\geq 
1 - \alpha .
\end{align*}
\end{proof}

\begin{lemma}
\label{lemma:cross_domain_noise_coverage}
(Inflated probability for cross domain noise)
Assume $ \P_{\epsilon \sim \calP_\epsilon^{cal}}\{\epsilon\} - \P_{\epsilon \sim \calP_\epsilon^{test}}\{\epsilon\} \leq d$ for all $\| \epsilon \| \leq r$.
Then, for any threshold $\tau$, the following inequality holds:
\begin{align}
\label{eq:lemma1}
\P_{ \epsilon \sim \calP_\epsilon^{cal} | X, Y } \{ S(X + \epsilon, Y) \leq \tau \}
-
\P_{ \epsilon \sim \calP_\epsilon^{test} | X, Y } \{ S(X + \epsilon, Y) \leq \tau \} 
\leq
d .
\end{align}
\end{lemma}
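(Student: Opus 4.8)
The plan is to reduce the statement to a direct comparison of the two noise measures evaluated on a single event. First I would fix the data point $(X, Y)$ and the threshold $\tau$, and introduce the acceptance region in noise space
\begin{align*}
A_\tau := \{ \epsilon \in \calE_r : S(X + \epsilon, Y) \leq \tau \} .
\end{align*}
With this notation, each of the two conditional probabilities appearing in (\ref{eq:lemma1}) is simply the mass that the corresponding noise distribution assigns to $A_\tau$; using independence of $\epsilon$ from $(X,Y)$ we have $\P_{\epsilon \sim \calP_\epsilon^{cal} | X, Y}\{ S(X+\epsilon, Y) \leq \tau\} = \calP_\epsilon^{cal}(A_\tau)$ and likewise for the test distribution. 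The left-hand side of (\ref{eq:lemma1}) is therefore exactly $\calP_\epsilon^{cal}(A_\tau) - \calP_\epsilon^{test}(A_\tau)$, the gap between the two measures taken on the \emph{same} measurable set $A_\tau \subseteq \calE_r$.

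Next I would express this gap through the densities and invoke the hypothesis. Writing $p^{cal}$ and $p^{test}$ for the densities of $\calP_\epsilon^{cal}$ and $\calP_\epsilon^{test}$, the difference becomes
\begin{align*}
\calP_\epsilon^{cal}(A_\tau) - \calP_\epsilon^{test}(A_\tau)
=
\int_{A_\tau} \big( p^{cal}(\epsilon) - p^{test}(\epsilon) \big) \, d\epsilon ,
\end{align*}
and the assumption controls the integrand pointwise over all $\|\epsilon\| \leq r$. Since the argument uses no structure of $S$ beyond measurability of $A_\tau$, it holds for an arbitrary fixed $\tau$, which is precisely the form needed to later instantiate $\tau$ at the calibration robust quantile $Q^\rob(X,Y;\alpha^*_\aPR)$ and transfer the robust-quantile guarantee from $\calP_\epsilon^{cal}$ to $\calP_\epsilon^{test}$ in the proof of Theorem \ref{theorem:prob_robust_coverage_aPRCP_cross_domain_noise}.

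The step I expect to be the main obstacle is making the passage from the pointwise density hypothesis to the event-level bound fully rigorous: bounding the integrand naively only gives $d \cdot \mathrm{vol}(A_\tau)$, so the collapse to a clean additive $d$ relies on reading the hypothesis at the level of the measures themselves, i.e. a total-variation / $L^1$-style control (consistent with the domain-adaptation analogy noted in the accompanying remark) rather than as an unnormalized density inequality. I would therefore phrase the working hypothesis as the uniform bound $\calP_\epsilon^{cal}(A) - \calP_\epsilon^{test}(A) \leq d$ over all measurable $A \subseteq \calE_r$, under which the conclusion follows immediately by taking $A = A_\tau$; the pointwise density statement then serves as the suggestive interpretation. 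The remaining details — measurability of $A_\tau$ and the reduction of the conditional noise probabilities to $\calP_\epsilon^{cal}(A_\tau)$ and $\calP_\epsilon^{test}(A_\tau)$ via independence of $\epsilon$ and $(X,Y)$ — are routine.
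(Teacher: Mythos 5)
Your proposal follows essentially the same route as the paper's proof: both rewrite the two probabilities as expectations of the indicator $\indicator[S(X+\epsilon,Y)\leq\tau]$, express these as integrals against the respective noise densities, subtract, and bound the resulting integrand using the hypothesis. The one place you diverge is exactly the place you flagged as the main obstacle, and your concern is well-founded: the paper's own final step reads
\begin{align*}
\int_\epsilon \Big( \P_{\epsilon \sim \calP_\epsilon^{cal}} \{ \epsilon \} - \P_{\epsilon \sim \calP_\epsilon^{test}} \{ \epsilon \} \Big) \cdot \indicator [ S(X + \epsilon, Y) \leq \tau ]\, d\epsilon
\leq
\int_\epsilon ( d \cdot 1 )\, d\epsilon
=
d ,
\end{align*}
which, if $\P\{\epsilon\}$ is read as a density and $d$ as a pointwise bound on the density gap, actually yields $d\cdot\mathrm{vol}(A_\tau)$ (or $d\cdot\mathrm{vol}(\calE_r)$), not $d$ --- the displayed equality $\int_\epsilon d\,d\epsilon = d$ implicitly assumes the integration region has unit volume. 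Your proposed repair --- reading the hypothesis at the level of events, i.e.\ $\calP_\epsilon^{cal}(A) - \calP_\epsilon^{test}(A) \leq d$ uniformly over measurable $A \subseteq \calE_r$ (a one-sided total-variation-type bound, consistent with the $L^1$/domain-adaptation analogy in Remark 2), under which the conclusion is immediate by taking $A = A_\tau$ --- is the cleanest way to make the lemma rigorous, and it is the interpretation under which the downstream use in Theorem \ref{theorem:prob_robust_coverage_aPRCP_cross_domain_noise} goes through unchanged. In short: same decomposition as the paper, but your version is the more careful one.
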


\begin{proof}
(of Lemma \ref{lemma:cross_domain_noise_coverage})

\begin{align*}
&
\P_{\epsilon \sim \calP_\epsilon^{cal}} \{ S(X + \epsilon, Y) \leq \tau \}
-
\P_{\epsilon \sim \calP_\epsilon^{test}} \{ S(X + \epsilon, Y) \leq \tau \}
\\
= &
\E_{\epsilon \sim \calP_\epsilon^{cal}} [ \indicator [ S(X + \epsilon, Y) \leq \tau ] ]
-
\E_{\epsilon \sim \calP_\epsilon^{test}} [ \indicator[ S(X + \epsilon, Y) \leq \tau ] ]
\\
= &
\int_\epsilon \P_{\epsilon \sim \calP_\epsilon^{cal}} \{ \epsilon \} \cdot \indicator [ S(X + \epsilon, Y) \leq \tau ] d\epsilon
- \int_\epsilon \P_{\epsilon \sim \calP_\epsilon^{test}} \{ \epsilon \} \cdot \indicator [ S(X + \epsilon, Y) \leq \tau ] d\epsilon
\\
= &
\int_\epsilon \Big( \P_{\epsilon \sim \calP_\epsilon^{cal}} \{ \epsilon \} - \P_{\epsilon \sim \calP_\epsilon^{test}} \{ \epsilon \} \Big) \cdot \indicator [ S(X + \epsilon, Y) \leq \tau ] d\epsilon
\\
\leq &
\int_\epsilon ( d \cdot 1 ) d\epsilon
=
d .
\end{align*}
\end{proof}

\begin{theorem}
\label{theorem:appendix:prob_robust_coverage_aPRCP_cross_domain_noise}
(Theorem 2 restated, probabilistically robust coverage of aPRCP for cross domain noise)
Let $\calP_\epsilon^{test}$ and $\calP_\epsilon^{cal}$ denote different distributions of $\epsilon$ during the testing and calibration phase, respectively.
Assume $\P_{\epsilon \sim \calP_\epsilon^{cal}}\{\epsilon\} - \P_{\epsilon \sim \calP_\epsilon^{test}}\{\epsilon\} \leq d$ for all $\| \epsilon \| \leq r$.
Set $\alpha^*_\aPR = 1 - d - ( 1 - \alpha) / (1 - \alpha + s )$ in (9).
Let $\calC^\aPR(\widetilde X = X + \epsilon) = \{ y \in \calY : S(\widetilde X, y) \leq \tau^\aPR(\alpha; s) \}$ be the prediction set for a testing sample $\widetilde X$.
Then aPRCP achieves ($1-\alpha$)-probabilistically robust coverage under $\calP_\epsilon^{test}$.
\end{theorem}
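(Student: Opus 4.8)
The plan is to mirror the proof of Theorem~\ref{theorem:appendix:prob_robust_coverage_aPRCP}, inserting Lemma~\ref{lemma:cross_domain_noise_coverage} at precisely the point where coverage is evaluated over the perturbation: instead of using the same distribution for computing the robust quantile and for evaluation, I pay a slack of $d$ for switching from $\calP_\epsilon^{cal}$ to $\calP_\epsilon^{test}$, and this slack is exactly compensated by the prescribed inflation of $\alpha^*_\aPR$. First I would reuse the good set $B = \{ (X,Y) : Q^\rob(X, Y; \alpha^*_\aPR) \leq \tau^\aPR(\alpha; s) \}$. Since the robust quantiles and the threshold $\tau^\aPR(\alpha; s)$ are both computed on the calibration data (hence with $\epsilon \sim \calP_\epsilon^{cal}$), the defining property of $\tau^\aPR(\alpha; s)$ in (\ref{eq:aPRCP_threshold}) still yields $\P_{X,Y}\{ (X,Y) \in B \} \geq 1 - \alpha + s$; this step is unaffected by the domain shift because its outer quantile ranges over $(X,Y)$ only.

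Next I would run the same decomposition according to whether $(X,Y) \in B$ as in (\ref{eq:PRCP_coverage}), now evaluating the inner conditional probability over $\epsilon \sim \calP_\epsilon^{test}$, to reach
$$
\P_{Z}\{ S(X + \epsilon, Y) \leq \tau^\aPR(\alpha; s) \}
\geq
(1 - \alpha + s) \cdot \P_{\epsilon \sim \calP_\epsilon^{test} \mid (X,Y) \in B}\{ S(X + \epsilon, Y) \leq Q^\rob(X, Y; \alpha^*_\aPR) \}.
$$
The only new work is to lower-bound this inner probability. I would fix $(X,Y)$, observe that $Q^\rob(X, Y; \alpha^*_\aPR)$ is then a deterministic constant, and apply Lemma~\ref{lemma:cross_domain_noise_coverage} with $\tau = Q^\rob(X, Y; \alpha^*_\aPR)$. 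The defining property of the robust quantile in (\ref{eq:robust_quantile_x}), which is taken under $\calP_\epsilon^{cal}$, gives $\P_{\epsilon \sim \calP_\epsilon^{cal}}\{ S(X + \epsilon, Y) \leq Q^\rob(X, Y; \alpha^*_\aPR) \} \geq 1 - \alpha^*_\aPR$, and combining with the Lemma yields $\P_{\epsilon \sim \calP_\epsilon^{test}}\{ S(X + \epsilon, Y) \leq Q^\rob(X, Y; \alpha^*_\aPR) \} \geq (1 - \alpha^*_\aPR) - d$. Substituting back gives the bound $(1 - \alpha + s)\big[ (1 - \alpha^*_\aPR) - d \big]$.

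Finally I would plug in the prescribed choice $\alpha^*_\aPR = 1 - d - (1-\alpha)/(1-\alpha+s)$, so that $(1 - \alpha^*_\aPR) - d = (1-\alpha)/(1-\alpha+s)$ and the product collapses to exactly $1-\alpha$, establishing $(1-\alpha)$-probabilistically robust coverage under $\calP_\epsilon^{test}$. The main obstacle is conceptual rather than computational: one must track carefully which distribution each probability is over, and in particular justify that the Lemma is applied \emph{pointwise} for each fixed $(X,Y)$ so that the data-dependent quantity $Q^\rob(X, Y; \alpha^*_\aPR)$ qualifies as a fixed threshold $\tau$ in the Lemma's statement. Once this is granted, the domain-shift penalty $d$ enters through a single application of Lemma~\ref{lemma:cross_domain_noise_coverage}, and the algebra is a direct adaptation of the baseline proof.
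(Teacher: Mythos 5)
Your proposal is correct and follows essentially the same route as the paper's proof: the same decomposition over the set $B$, a single application of Lemma~\ref{lemma:cross_domain_noise_coverage} with $\tau = Q^\rob(X,Y;\alpha^*_\aPR)$ to convert the calibration-noise guarantee into a test-noise guarantee at a cost of $d$, and the same algebraic cancellation from the prescribed $\alpha^*_\aPR$. Your explicit remark that the lemma must be applied pointwise in $(X,Y)$ so that $Q^\rob(X,Y;\alpha^*_\aPR)$ is a legitimate fixed threshold is a small but welcome clarification of a step the paper performs directly on the conditional probability given $B$.
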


\begin{proof}
(of Theorem \ref{theorem:appendix:prob_robust_coverage_aPRCP_cross_domain_noise})
We start with (\ref{eq:PRCP_coverage}) in the proof of Theorem \ref{theorem:appendix:prob_robust_coverage_aPRCP_cross_domain_noise} which only considers the noise $\epsilon$ drawn from the same distribution during calibration and testing as follows.
\begin{align*}
&
\P_{ X, Y, \epsilon \sim \calP_\epsilon^{test}} \{ S(X + \epsilon, Y) \leq \tau^\aPR(\alpha; s) \}
\\
\geq &
( 1 - \alpha + s ) \cdot \P_{\epsilon \sim \calP_\epsilon^{test} | (X, Y) \in B } \{ S(X + \epsilon, Y) \leq Q^\rob(X, Y; \alpha^*_\aPR) \}
\\
\geq &
( 1 - \alpha + s ) \cdot \Big( \P_{\epsilon \sim \calP_\epsilon^{cal} | (X, Y) \in B } \{ S(X + \epsilon, Y) \leq Q^\rob(X, Y; \alpha^*_\aPR) \} - d \Big)
\\
\geq &
( 1 - \alpha + s ) \cdot \Bigg( 1 - \Big( 1 - d - \frac{1-\alpha}{1-\alpha+s} \Big) - d \Bigg)
\\
= &
( 1 - \alpha + s ) \cdot \frac{ 1 - \alpha }{ 1 - \alpha + s }
= 
1 - \alpha ,
\end{align*}
where the first inequality follows (\ref{eq:PRCP_coverage}), 
the second inequality is due to inequality \ref{eq:lemma1} in Lemma \ref{lemma:cross_domain_noise_coverage}, and
the third inequality is due to the definition $Q^\rob(X, Y; \alpha^*_\aPR)$ in (8) with $\alpha^*_\aPR = 1 - d - (1-\alpha) / (1-\alpha+s)$.
\end{proof}

\begin{corollary}
\label{corollary:compare_aPRCP_ARCP_appendix}
(Corollary 3 restated)
To achieve the same ($1-\alpha$)-probabilistically robust coverage on $Z$, the following inequalities hold: \begin{align*}
\min_{ \eta \in [0, \alpha] } \tau^\iPR(\alpha; \eta) \leq \tau^\AR(\alpha), ~~
\min_{ s \in [0, \alpha] }  \tau^\aPR(\alpha; s) \leq \tau^\AR(\alpha) .
\end{align*}
\end{corollary}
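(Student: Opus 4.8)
The plan is to exploit that ARCP sits at the boundary of the hyper-parameter ranges of both iPRCP and aPRCP, so each minimum is bounded above by the threshold obtained at the endpoint $\eta=0$ (resp.\ $s=0$). Since $\min_{\eta\in[0,\alpha]}\tau^\iPR(\alpha;\eta)\le \tau^\iPR(\alpha;0)$ and $\min_{s\in[0,\alpha]}\tau^\aPR(\alpha;s)\le \tau^\aPR(\alpha;0)$ hold trivially, it suffices to show $\tau^\iPR(\alpha;0)\le\tau^\AR(\alpha)$ and $\tau^\aPR(\alpha;0)\le\tau^\AR(\alpha)$; these endpoints are feasible because adversarially robust coverage implies probabilistically robust coverage, as noted after Definition \ref{definition:prob_robust_coverage}.

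For the iPRCP inequality I would simply substitute $\eta=0$. The special-case remark after Definition \ref{definition:prob_inflated_score} gives $M_{r,0}=M_r$, and $\alpha^*_\iPR=1-(1-\alpha)/(1-0)=\alpha$, so $\tau^\iPR(\alpha;0)=Q(\alpha^*_\iPR)+M_{r,0}=Q(\alpha)+M_r=\tau^\AR(\alpha)$. Thus the first inequality holds, with equality at the endpoint.

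The aPRCP inequality is the only place that requires an argument. Setting $s=0$ gives $\alpha^*_\aPR=1-(1-\alpha)/(1-\alpha)=0$, so the inner quantile in (\ref{eq:aPRCP_threshold}) degenerates to the worst-case $1$-robust quantile $Q^\rob(X,Y;0)$. The key step is to bound this pointwise using the $M_r$-adversarially inflated property (Definition \ref{definition:adv_inflated_score}): since $S(X+\epsilon,Y)\le S(X,Y)+M_r$ for every $\epsilon\in\calE_r$, we have $\P_\epsilon\{S(X+\epsilon,Y)\le S(X,Y)+M_r\}=1$, hence $Q^\rob(X,Y;0)\le S(X,Y)+M_r$. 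I would then test $t=Q(\alpha)+M_r=\tau^\AR(\alpha)$ against the defining constraint of $\tau^\aPR(\alpha;0)$, using the set inclusion $\{S(X,Y)\le Q(\alpha)\}\subseteq\{Q^\rob(X,Y;0)\le Q(\alpha)+M_r\}$ to get $\P_{X,Y}\{Q^\rob(X,Y;0)\le Q(\alpha)+M_r\}\ge\P_{X,Y}\{S(X,Y)\le Q(\alpha)\}\ge 1-\alpha$. Because $t=\tau^\AR(\alpha)$ already meets the $(1-\alpha+s)=(1-\alpha)$ constraint and $\tau^\aPR(\alpha;0)$ is the smallest such $t$, we conclude $\tau^\aPR(\alpha;0)\le\tau^\AR(\alpha)$.

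The argument is short, and the only mild obstacle is the aPRCP endpoint: one must be careful that the degenerate inner quantile $Q^\rob(X,Y;0)$ is controlled by the adversarial inflation bound with $\P_\epsilon$-probability one (rather than merely in expectation), and that the outer quantile's infimum definition then yields an inequality rather than equality, since $Q^\rob(X,Y;0)$ may lie strictly below $S(X,Y)+M_r$ whenever the inflation constant $M_r$ is not tight.
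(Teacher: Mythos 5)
Your proposal is correct and follows essentially the same route as the paper: bound each minimum by the value at the endpoint $\eta=0$ (resp.\ $s=0$) and identify that endpoint with $\tau^\AR(\alpha)$. In fact your treatment of the aPRCP endpoint is slightly more careful than the paper's, which simply asserts $\tau^\aPR(\alpha;0)=\tau^\AR(\alpha)$, whereas you correctly derive only the inequality $\tau^\aPR(\alpha;0)\le\tau^\AR(\alpha)$ via $Q^\rob(X,Y;0)\le S(X,Y)+M_r$ and the feasibility of $t=Q(\alpha)+M_r$ in the defining minimum of (\ref{eq:aPRCP_threshold}) --- which is all the corollary needs and holds even when $M_r$ is not tight.
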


\begin{proof}
(of Corollary \ref{corollary:compare_aPRCP_ARCP_appendix})
For adaptive PRCP, if $s = 0$, to achieve ($1-\alpha$)-probabilistically robust coverage over $Z$, we must have $\alpha^*_\aPR = 0$.
Since $\alpha^*_\aPR$ controls how aggressively we derive the robust quantile for $(X, Y)$, it indicates that we have to consider $1$-robust quantile.
This is equivalent to deriving the adversarial $S(X+\epsilon, Y)$ for all $(X, Y)$.

For inflated PRCP, if $\eta=0$, to achieve ($1-\alpha$)-probabilistically robust coverage, we have $M_{\delta, \eta} = M_\delta$ and $\alpha^*_\iPR = \alpha$, recovering ARCP (adversarially robust conformal prediction).
This case is exactly the same with adpative PRCP with $s=0$.
Therefore, $\tau^\AR(\alpha) = \tau^\iPR(\alpha; 0) = \tau^\aPR(\alpha; 0)$.

Note that $\min_{s \in [0, \alpha]} \tau^\aPR(\alpha; s) \leq \tau^\aPR(\alpha; 0)$ 
and $\min_{\eta\in [0, \alpha]} \tau^\iPR(\alpha; \eta) \leq \tau^\iPR(\alpha; 0)$,
so by tuning the value of $s$ for aPRCP and the value of $\eta$ for iPRCP, to achieve the same probabilistically robust coverage $1-\alpha$, we can have a more efficient threshold than ARCP.
\end{proof}

\begin{proposition}
\label{proposition:empirical_quantile_concentration_appendix}
(Proposition 3 restated, concentration inequality for quantiles)
Let $Q(\alpha) = \max\{ t : \P_V\{ V \leq t \} \geq 1 - \alpha \}$ be the true quantile of a random variable $V$ given $\alpha$,
and $\widehat Q_n(\alpha) = V_{ ( \lceil (n+1) ( 1 - \alpha ) \rceil ) }$ be the empirical quantile estimated by $n$ randomly sampled set $\{V_1, ..., V_n\}_{i=1}^n$.
Then with probability at least $1-\delta$, we have
$
\widehat Q_n(\alpha + \tilde O(1/\sqrt{n}))
\leq
Q(\alpha)
\leq
\widehat Q_n(\alpha - \tilde O(1/\sqrt{n}))
$  
where $\tilde O$ hides the logarithmic factor.
\end{proposition}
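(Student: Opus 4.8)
The plan is to prove this by first establishing uniform control of the empirical cumulative distribution function (CDF) via the Dvoretzky--Kiefer--Wolfowitz (DKW) inequality, and then converting that uniform control into a two-sided sandwich on the quantile. Write $F(t) = \P_V\{V \leq t\}$ for the true CDF and $\widehat F_n(t) = \frac{1}{n}\sum_{i=1}^n \mathbbm{1}[V_i \leq t]$ for the empirical CDF. The DKW inequality (with the Massart constant) gives $\P\{\sup_t |\widehat F_n(t) - F(t)| > \varepsilon\} \leq 2 e^{-2 n \varepsilon^2}$. Setting the right-hand side equal to $\delta$ yields $\varepsilon = \sqrt{\ln(2/\delta)/(2n)}$, so on an event $E$ of probability at least $1-\delta$ we have $|\widehat F_n(t) - F(t)| \leq \varepsilon = \tilde O(1/\sqrt n)$ simultaneously for all $t$.

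Next I would translate this uniform CDF bound into a quantile bound. Working on the event $E$, the goal is to show $\widehat Q_n(\alpha + \varepsilon') \leq Q(\alpha) \leq \widehat Q_n(\alpha - \varepsilon')$ for a suitable $\varepsilon' = \tilde O(1/\sqrt n)$. Writing $\beta = 1-\alpha$, the empirical quantile $\widehat Q_n(\alpha) = V_{(\lceil (n+1)(1-\alpha)\rceil)}$ is, up to the order-statistic indexing, the smallest $t$ with $\widehat F_n(t) \geq \beta$. For the upper inequality, let $\hat t = \widehat Q_n(\alpha - \varepsilon')$; by construction $\widehat F_n(\hat t) \geq \beta + \varepsilon'$ (modulo the $(n+1)/n$ and ceiling corrections), and DKW then gives $F(\hat t) \geq \widehat F_n(\hat t) - \varepsilon \geq \beta + \varepsilon' - \varepsilon \geq \beta$ as soon as $\varepsilon' \geq \varepsilon$, which forces $\hat t \geq Q(\alpha)$ by the definition of the true $\beta$-quantile. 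The lower inequality is symmetric: with $\hat s = \widehat Q_n(\alpha + \varepsilon')$ one has $\widehat F_n(\hat s^-) < \beta - \varepsilon'$, hence $F(\hat s^-) < \beta$, so $\hat s \leq Q(\alpha)$. Monotonicity of $\widehat Q_n(\cdot)$ in $\alpha$ then lines these up with the claimed sandwich.

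The one point requiring care --- and the main obstacle --- is reconciling the two bookkeeping conventions for the empirical quantile: the statement uses the order statistic $V_{(\lceil (n+1)(1-\alpha)\rceil)}$, whereas DKW speaks about $\widehat F_n$ crossing a level. The gap between ``the $\lceil (n+1)\beta\rceil$-th order statistic'' and ``the smallest $t$ with $\widehat F_n(t) \geq \beta$'' is an additive shift of order $1/n$ in the effective probability level (coming from the ceiling and the $(n+1)$ versus $n$ normalization). Since $1/n$ is dominated by $\varepsilon = \tilde O(1/\sqrt n)$, one absorbs it by taking $\varepsilon' = \varepsilon + c/n$ for a suitable constant $c$, keeping $\varepsilon' = \tilde O(1/\sqrt n)$; I would make this offset explicit so the monotone quantile inequalities go through even when $F$ has flat pieces or jumps. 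As an alternative to DKW one could bound each relevant order statistic directly: $\P\{V_{(k)} \leq Q(\alpha)\}$ equals a Binomial tail $\P\{\mathrm{Bin}(n, F(Q(\alpha))) \geq k\}$, which a Hoeffding bound controls at the same $\tilde O(1/\sqrt n)$ rate; the DKW route is cleaner because it yields both inequalities uniformly from a single high-probability event.
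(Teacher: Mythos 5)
Your proof is correct, and it reaches the sandwich by a route that differs from the paper's in its key concentration step. The paper never invokes a uniform empirical-process bound: it defines the Bernoulli indicators $Z_i=\mathbb{I}[V_i\leq Q(\alpha)]$, applies a multiplicative Chernoff bound to their mean at the \emph{single} point $t=Q(\alpha)$ (obtaining deviation $\sqrt{3(1-\alpha)\log(2/\delta)/n}$), and then inverts through the monotonicity of the order statistics, absorbing the same $1/n$ ceiling correction you identify into the shifted level $\alpha'+1/n$. Your DKW/Massart argument instead controls $\sup_t|\widehat F_n(t)-F(t)|$ uniformly and evaluates the bound at the two data-dependent points $\widehat Q_n(\alpha\pm\varepsilon')$. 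For this particular statement the pointwise bound suffices---one only ever needs $\widehat F_n$ near the fixed threshold $Q(\alpha)$---so the paper's argument is marginally more elementary and gives a slightly sharper constant when $1-\alpha$ is small; your uniform version is no harder to state, yields both inequalities from a single high-probability event as you note, and would extend without change to a sandwich holding simultaneously over a range of levels $\alpha$. Your explicit treatment of the $\lceil(n+1)(1-\alpha)\rceil$ versus $\min\{t:\widehat F_n(t)\geq 1-\alpha\}$ bookkeeping (the additive $O(1/n)$ shift, dominated by $\varepsilon$) is also more careful than the paper's, which passes over this step quickly; and you are right to read the paper's $\max$ in the definition of $Q(\alpha)$ as the $\min$ used elsewhere in the text, since the quantile-inversion step requires it.
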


\begin{proof}
(of Proposition \ref{proposition:empirical_quantile_concentration_appendix})

Define $Z_i = \indicator{ [ V_i \leq Q(\alpha) ] }$ where $1 \leq i \leq n$ and $\indicator[\cdot]$ is an indicator function.
Then $Z_{i}$ is a Bernoulli random variable with $\P\{ Z_i = 1 \} = 1 - \alpha$ and $\P\{ Z_i = 0 \} = \alpha$ from the definition of $Q(\alpha)$.
Let $\widehat Z = \frac{1}{n} \sum_{i=1}^n Z_i$ and $\E[\widehat Z] = 1-\alpha.$

According to Chernoff bound, we know
\begin{align*}
\P\Bigg\{ \Bigg| \frac{1}{n} \sum_{i=1}^n Z_i - \E[\widehat Z] \Bigg| \geq \varepsilon \E[\widehat Z] \Bigg\}
\leq 
2 \exp\Bigg( - \E[\widehat Z] \varepsilon^2 / 3 \Bigg) 
=
2 \exp\Bigg( - n (1-\alpha) \varepsilon^2 / 3 \Bigg) .
\end{align*}

By setting $\delta = 2 \exp( - n (1-\alpha) \varepsilon^2 / 3 )$, i.e., $\varepsilon = \sqrt{ ( 3 \log(2/\delta) ) / ( ( 1 - \alpha ) n  ) }$, we have with probability at least $1-\delta$:
\begin{align}\label{eq:abs_bound}
\Bigg| \frac{1}{n} \sum_{i=1}^n \indicator[ V_i \leq Q(\alpha) ] - ( 1 - \alpha ) \Bigg| 
\leq 
\varepsilon ( 1 - \alpha )
=
\sqrt{ ( 3 ( 1 - \alpha )  \log(2/\delta) ) / n }
=
\tilde O(1 / \sqrt{n}) .
\end{align}

Recall the definition of the empirical quantile $\widehat Q_n(\alpha)$ given $\alpha$:
\begin{align*}
\widehat Q_n(\alpha) = \max\Bigg\{ t : \frac{1}{n} \sum_{i=1}^n \indicator[ V_i \leq t ] \geq 1 - \alpha \Bigg\} .
\end{align*}
Then we know the following upper bound and lower bound for $1-\alpha$:
\begin{align*}
( 1 - \alpha )
\leq 
\frac{1}{n} \sum_{i=1}^n \indicator[ V_i \leq \widehat Q_n(\alpha) ] , ~~~
( 1 - \alpha )
\geq 
\frac{1}{n} \sum_{i=1}^n \indicator[ V_i \leq \widehat Q_n(\alpha + 1 / n ) ] .
\end{align*}

Re-arranging (\ref{eq:abs_bound}) and using the above upper/lower bounds, with probability at least $1-\delta$, we have
\begin{align*}
&
( 1 - \alpha ) ( 1 - \varepsilon )
\leq 
\frac{1}{n} \sum_{i=1}^n \indicator[ V_i \leq Q(\alpha) ]
\leq 
( 1 - \alpha ) ( 1 + \varepsilon)
\\
\Leftrightarrow ~~~
& 
1 - ( \underbrace{ 1 - ( 1 - \alpha ) ( 1 - \varepsilon ) }_{ = \alpha' } )
\leq 
\frac{1}{n} \sum_{i=1}^n \indicator[ V_i \leq Q(\alpha) ]
\leq 
1 - ( \underbrace{ 1 - ( 1 - \alpha ) ( 1 + \varepsilon) }_{ = \alpha'' } )
\\
\Rightarrow ~~~ 
& 
\frac{1}{n} \sum_{i=1}^n \indicator[ V_i \leq \widehat Q_n( \alpha' + 1/n ) ]
\leq
\frac{1}{n} \sum_{i=1}^n \indicator[ V_i \leq Q(\alpha) ]
\leq 
\frac{1}{n} \sum_{i=1}^n \indicator[ V_i \leq \widehat Q_n( \alpha'' ) ] 
\\
\Leftrightarrow ~~~
&
\widehat Q_n(\alpha' + 1/n)
\leq 
Q(\alpha)
\leq 
\widehat Q_n(\alpha'') .
\end{align*}

Finally, we analyze $\alpha'$ and $\alpha''$ as follows
\begin{align*}
\alpha' 
=
1 - (1-\alpha) (1-\varepsilon)
=
\alpha + \varepsilon (1-\alpha)
=
\alpha + \sqrt{ 3 ( 1 - \alpha ) \log(2/\delta) / n }
=
\alpha + \tilde O(1/\sqrt{n}),
\\
\alpha''
=
1 - (1-\alpha) (1+\varepsilon)
=
\alpha - \varepsilon (1-\alpha)
=
\alpha - \sqrt{ 3 ( 1 - \alpha ) \log(2/\delta) / n }
=
\alpha - \tilde O(1/\sqrt{n}).
\end{align*}

Therefore, we have
\begin{align*}
\widehat Q_n(\alpha + \tilde O(1/\sqrt{n}))
\leq
Q(\alpha)
\leq
\widehat Q_n(\alpha - \tilde O(1/\sqrt{n})) .
\end{align*}
\end{proof}

\section{ADDITIONAL EXPERIMENTS AND IMPLEMENTATION DETAILS}

\textbf{Implementation details.}
Table \ref{tab:appendix_Acc_clean_adv} shows the testing accuracy of the different deep models using both standard training ($\sigma=0$) and Gaussian augmented training ($\sigma>0$). 
\begin{table*}[!h]
\centering
\begin{tabular}{|c|c|cc|cc|cc|}
\hline
\multirow{2}{*}{Architecture} & \multirow{2}{*}{Training} & \multicolumn{2}{c|}{CIFAR10}             & \multicolumn{2}{c|}{CIFAR100}            & \multicolumn{2}{c|}{ImageNet}            \\ \cline{3-8} 
                              &                                    & \multicolumn{1}{c|}{Clean(\%)} & Adv(\%) & \multicolumn{1}{c|}{Clean(\%)} & Adv(\%) & \multicolumn{1}{c|}{Clean(\%)} & Adv(\%) \\ \hline
\multirow{2}{*}{ResNet-110}   & $\sigma = 0.0$                                & \multicolumn{1}{c|}{89.99}     & 26.71   & \multicolumn{1}{c|}{71.12}     & 12.20   & \multicolumn{1}{c|}{-}         & -       \\ \cline{2-8} 
                              & $\sigma = 0.125$                              & \multicolumn{1}{c|}{81.70}     & 67.80   & \multicolumn{1}{c|}{58.11}     & 42.01   & \multicolumn{1}{c|}{-}         & -       \\ \hline
\multirow{2}{*}{VGG-19}       & $\sigma = 0.0$                                & \multicolumn{1}{c|}{93.10}     & 54.96   & \multicolumn{1}{c|}{72.22}     & 23.10   & \multicolumn{1}{c|}{-}         & -       \\ \cline{2-8} 
                              & $\sigma = 0.125$                              & \multicolumn{1}{c|}{86.50}     & 72.10   & \multicolumn{1}{c|}{55.12}     & 40.85   & \multicolumn{1}{c|}{-}         & -       \\ \hline
\multirow{2}{*}{DenseNet-161} & $\sigma = 0.0$                                & \multicolumn{1}{c|}{95.42}     & 23.28   & \multicolumn{1}{c|}{77.10}     & 04.30   & \multicolumn{1}{c|}{-}     & -   \\ \cline{2-8} 
                              & $\sigma = 0.125$                              & \multicolumn{1}{c|}{88.17}     & 73.15   & \multicolumn{1}{c|}{60.32}     & 46.91   & \multicolumn{1}{c|}{-}         & -       \\ \hline
\multirow{2}{*}{ResNet-50}    & $\sigma = 0.0$                                & \multicolumn{1}{c|}{-}         & -       & \multicolumn{1}{c|}{-}         & -       & \multicolumn{1}{c|}{75.69}     & 19.56   \\ \cline{2-8} 
                              & $\sigma = 0.250$                               & \multicolumn{1}{c|}{-}         & -       & \multicolumn{1}{c|}{-}         & -       & \multicolumn{1}{c|}{68.62}     & 56.15   \\ \hline
\end{tabular}
\caption{Testing accuracy of different deep models on clean and adversarial test examples (generated using the PGD attack algorithm) for all three data sets.}
\label{tab:appendix_Acc_clean_adv}
\end{table*}

\subsection{Case of Similar Noise Distribution for both Calibration  and Testing}
\noindent{\bf Performance evaluation with a fixed $s$ hyper-parameter and varying $\tilde{\alpha}$. }
We present in Figures \ref{C100_uni_cal_uni_eval_ratio_0.0PRCP_fixed_ns_cvg_Size_Both} and \ref{C10_uni_cal_uni_eval_ratio_0.0PRCP_fixed_ns_cvg_Size_Both}
the probabilistic robust coverage and prediction set size performance of aPRCP using \textit{the Uniform distribution as a noise distribution for both calibration and testing purposes} respectively for the CIFAR100 and CIFAR10 datasets with the three different models that are trained with clean data. Similarly, we present in Figures \ref{C100_gaussian_cal_gaussian_eval_ratio_0.0PRCP_fixed_ns_cvg_Size_Both} and \ref{C10_gaussian_cal_gaussian_eval_ratio_0.0PRCP_fixed_ns_cvg_Size_Both}
the probabilistic robust coverage and prediction set size performance of aPRCP using \textit{the Gaussian distribution as a noise distribution for both calibration and testing purposes}. For calibration, we sample $m_s = 128$ noisy data points from the surrounding of each data point ($||\epsilon||_2 \leq 0.125$). For testing, we sample $n_s = 128$ data points from the surrounding of each testing point ($||\epsilon||_2 \leq 0.125$). We observe that the probabilistic robust coverage for noisy data increases monotonically as we increase the quantile robust coverage for each ball from $1 - \tilde{\alpha} = 0.90$ to $1 - \tilde{\alpha} = 1.0$. These observations hold for both conformal scores (HPS and APS) and using different deep neural network models.

\begin{figure}[h!]
\centering
\includegraphics[width=.9\linewidth]{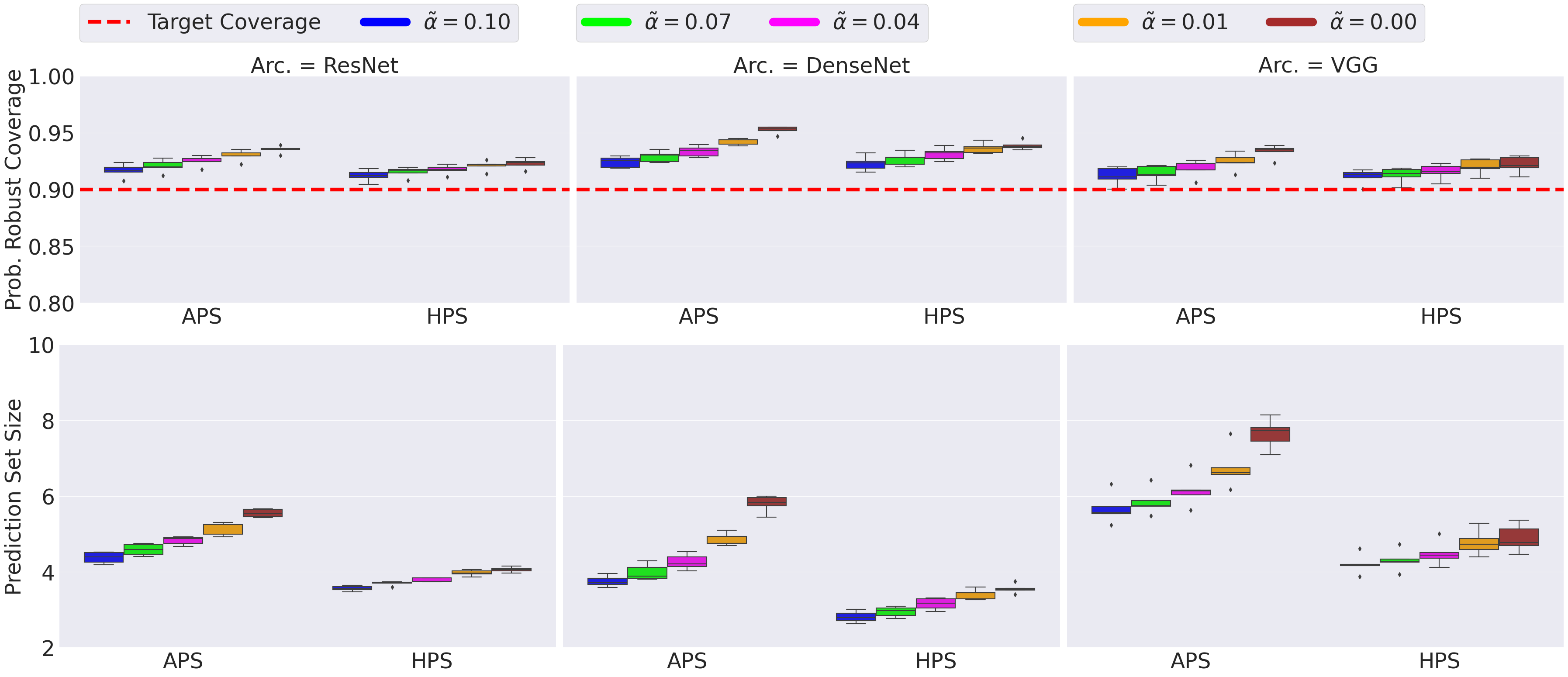}
\caption{Probabilistic robust coverage (top) and prediction set size (bottom) obtained by aPRCP$(\tilde{\alpha} = 0.10)$, aPRCP$(\tilde{\alpha} = 0.03)$, 
PRCP$(\tilde{\alpha} = 0.06)$, aPRCP$(\tilde{\alpha} = 0.09)$,
and aPRCP$(\tilde{\alpha} = 0.00)$, evaluated on CIFAR100 dataset for three different deep models. The target coverage is $90\%$. The results are shown over 50 different runs.}
\label{C100_uni_cal_uni_eval_ratio_0.0PRCP_fixed_ns_cvg_Size_Both}
\end{figure}

\begin{figure}[h!]
\centering
\includegraphics[width=.9\linewidth]{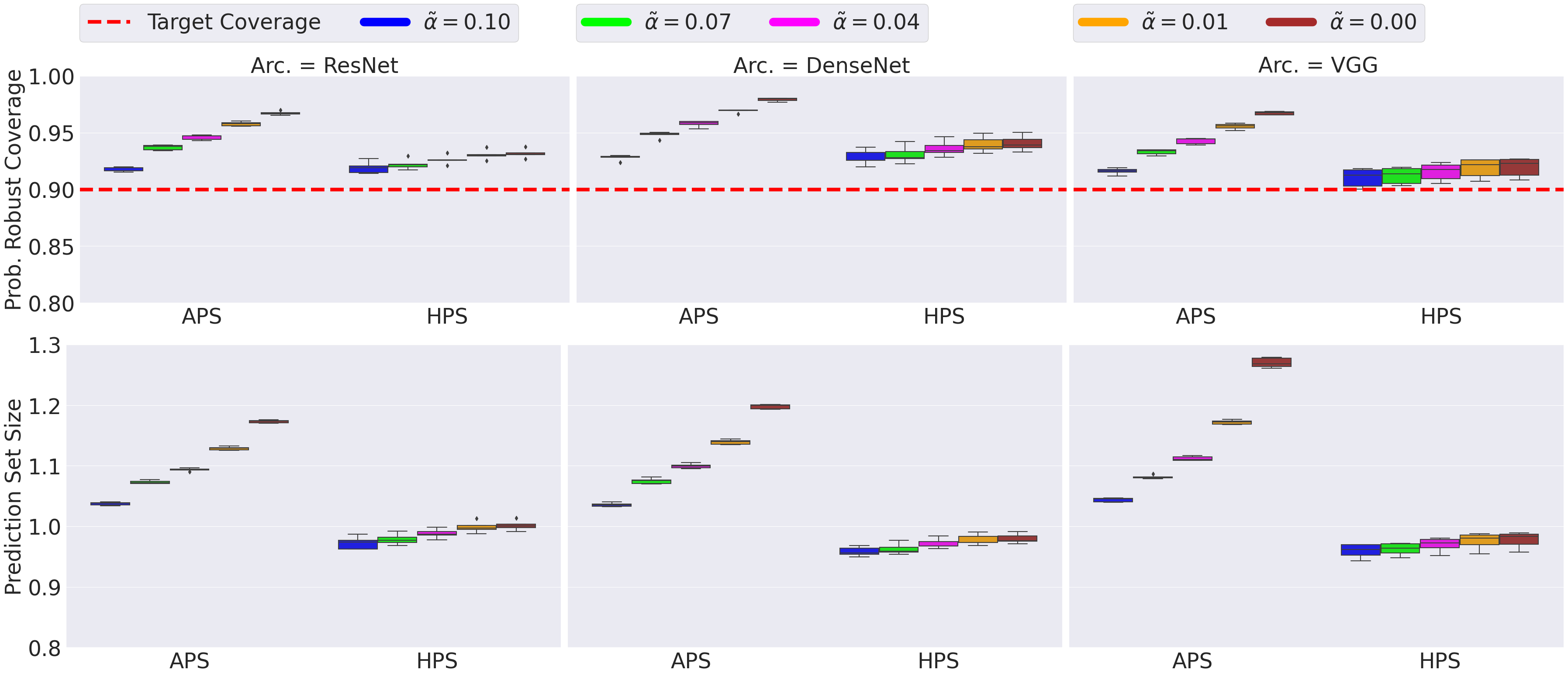}
\caption{Probabilistic robust coverage (top) and prediction set size (bottom) obtained by aPRCP$(\tilde{\alpha} = 0.10)$, aPRCP$(\tilde{\alpha} = 0.03)$, 
PRCP$(\tilde{\alpha} = 0.06)$, aPRCP$(\tilde{\alpha} = 0.09)$,
and aPRCP$(\tilde{\alpha} = 0.00)$, evaluated on CIFAR10 dataset for three different deep models. The target coverage is $90\%$. The results are shown over 50 different runs.}
\label{C10_uni_cal_uni_eval_ratio_0.0PRCP_fixed_ns_cvg_Size_Both}
\end{figure}

\begin{figure}[h!]
\centering
\includegraphics[width=.9\linewidth]{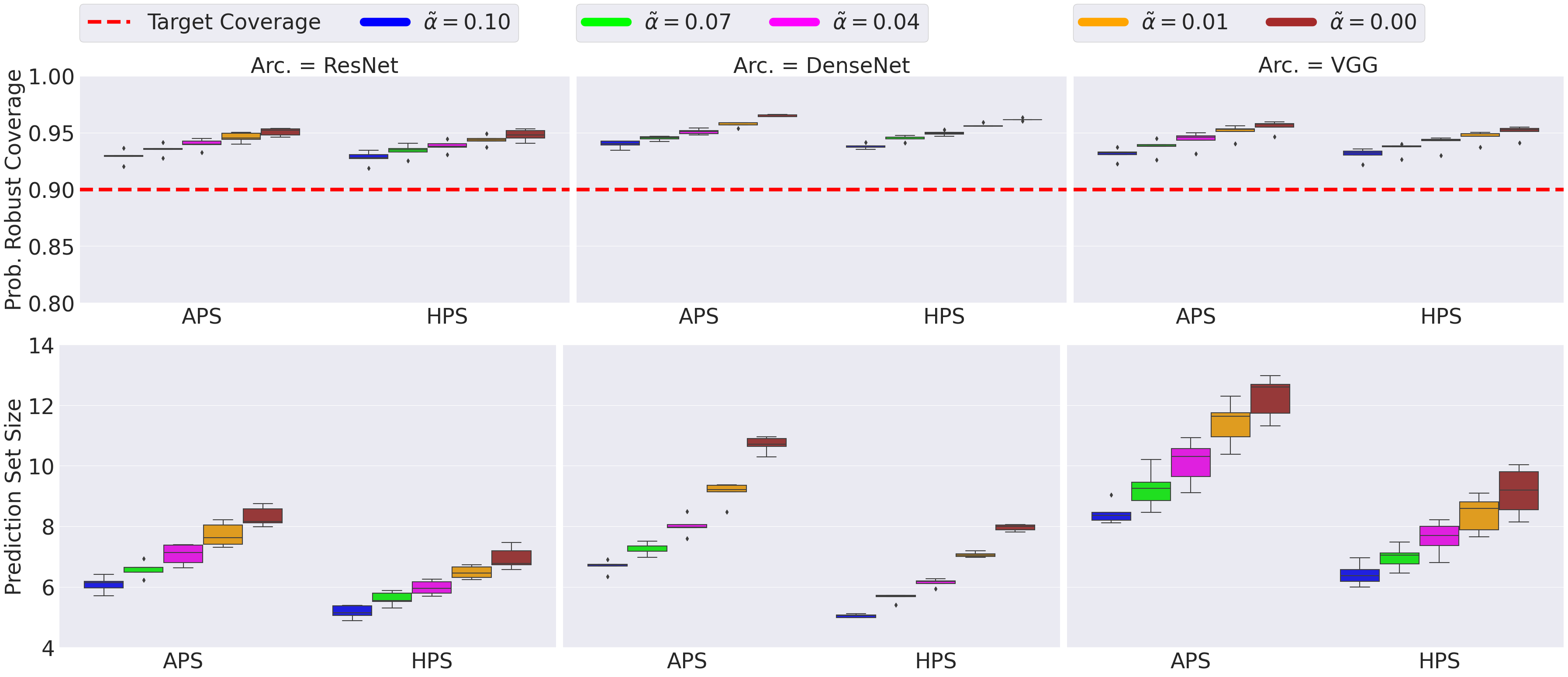}
\caption{Probabilistic robust coverage(top) and Prediction set size(bottom) obtained by aPRCP$(\tilde{\alpha} = 0.10)$, aPRCP$(\tilde{\alpha} = 0.03)$, 
PRCP$(\tilde{\alpha} = 0.06)$, aPRCP$(\tilde{\alpha} = 0.09)$,
and aPRCP$(\tilde{\alpha} = 0.00)$, evaluated on CIFAR100 dataset for three different deep models. The target coverage is $90\%$. The results are shown over 50 different runs.}
\label{C100_gaussian_cal_gaussian_eval_ratio_0.0PRCP_fixed_ns_cvg_Size_Both}
\end{figure}
\clearpage
\begin{figure}[h!]
\centering
\includegraphics[width=.9\linewidth]{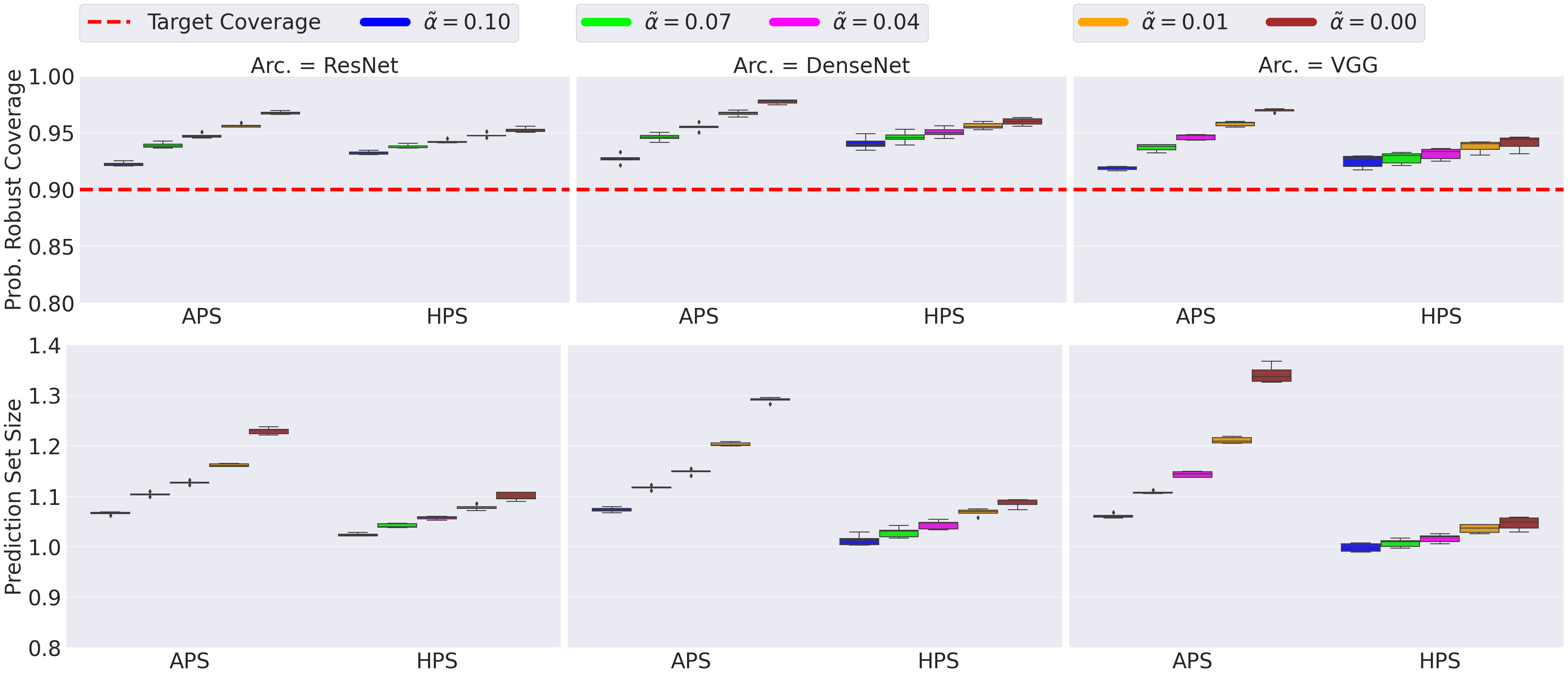}
\caption{Probabilistic robust coverage(top) and Prediction set size(bottom) obtained by aPRCP$(\tilde{\alpha} = 0.10)$, aPRCP$(\tilde{\alpha} = 0.03)$, 
PRCP$(\tilde{\alpha} = 0.06)$, aPRCP$(\tilde{\alpha} = 0.09)$,
and aPRCP$(\tilde{\alpha} = 0.00)$, evaluated on CIFAR10 dataset for three different deep models. The target coverage is $90\%$. The results are shown over 50 different runs.}
\label{C10_gaussian_cal_gaussian_eval_ratio_0.0PRCP_fixed_ns_cvg_Size_Both}
\end{figure}

\noindent{\bf Performance evaluation with a fixed $\tilde{\alpha}$ hyper-parameter and varying $s$.}

Figures \ref{C100_s_changes_ratio_0.0PRCP_fixed_ns_cvg_Size_Both} and \ref{C10_s_changes_ratio_0.0PRCP_fixed_ns_cvg_Size_Both}
show the probabilistic robust coverage and prediction set size respectively for the CIFAR100 and CIFAR10 datasets with three different deep models that are trained using standard training. For calibration, we sample $m_s = 128$ noisy data points using the uniform sampling distribution from the surrounding of each data point ($||\epsilon||_2 \leq 0.125$). For testing, we sample $n_s = 128$ data points uniformly from the surrounding of each testing point ($||\epsilon||_2 \leq 0.125$). We observe that the probabilistic robust coverage for noisy data increases as we increase the $s$ parameter value from $0.0$ to $0.09$. This observation matches our proposition as a higher $s$ value produces higher coverage. The above observations hold for both conformal scores (APS and HPS) using different deep neural network models. 

\begin{figure}[h!]
\centering
\includegraphics[width=.9\linewidth]{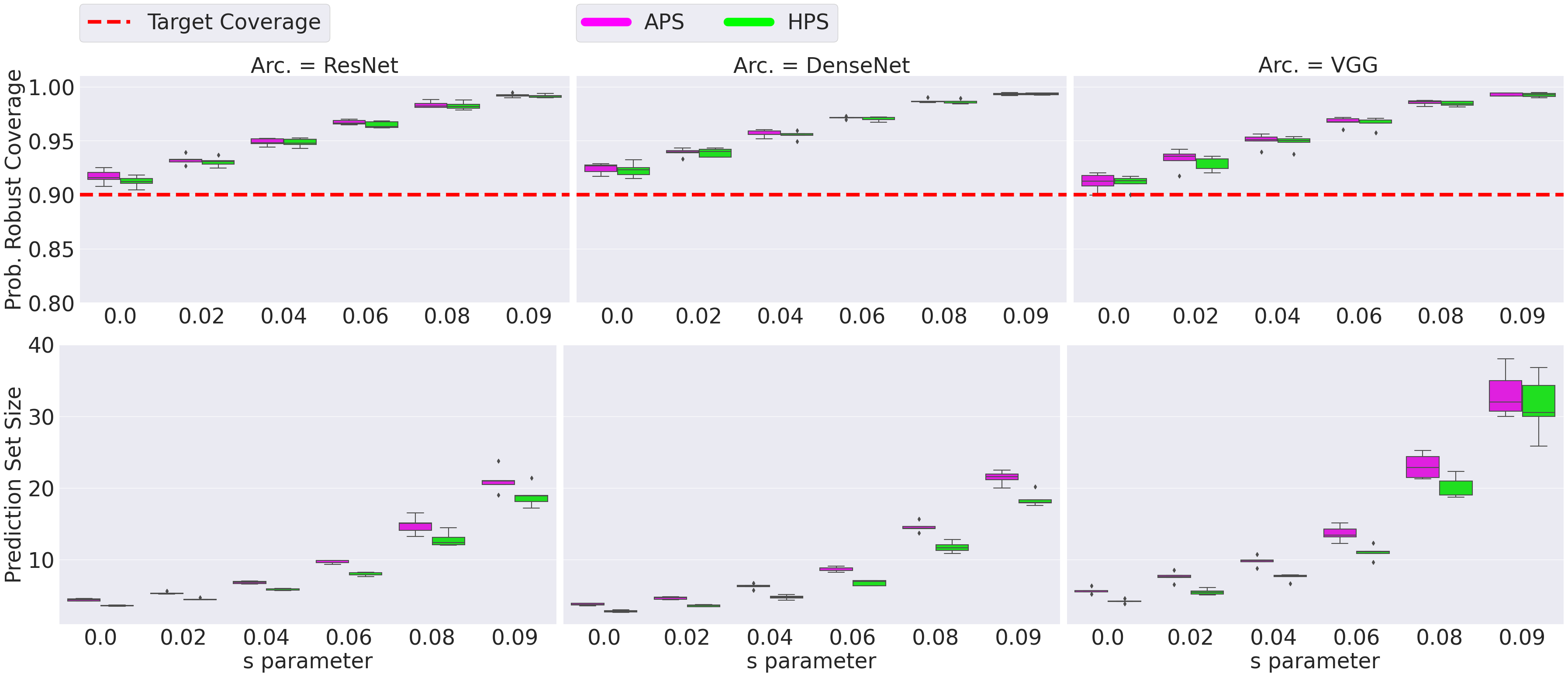}
\caption{Probabilistic robust coverage(top) and Prediction set size(bottom) obtained by aPRCP$(\tilde{\alpha} = 0.10)$ while varying the $s$ parameter, evaluated on CIFAR100 dataset for three different deep models. The target coverage is $90\%$. The results are shown over 50 different runs.}
\label{C100_s_changes_ratio_0.0PRCP_fixed_ns_cvg_Size_Both}
\end{figure}

\begin{figure}[h!]
\centering
\includegraphics[width=.9\linewidth]{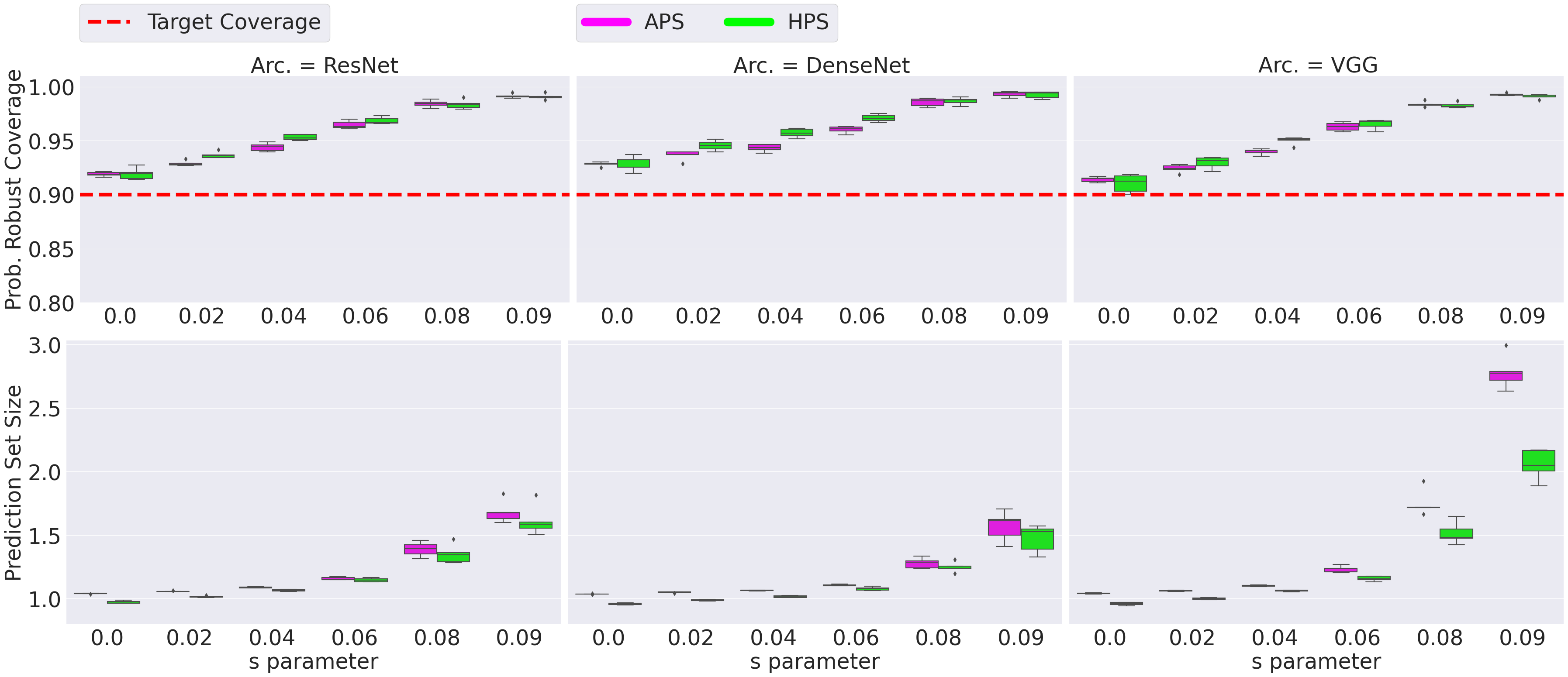}
\caption{Probabilistic robust coverage(top) and Prediction set size(bottom) obtained by aPRCP$(\tilde{\alpha} = 0.10)$ while varying the $s$ parameter, evaluated on CIFAR10 dataset for three different models. The target coverage is $90\%$. The results are shown over 50 runs forall three neural network models.}
\label{C10_s_changes_ratio_0.0PRCP_fixed_ns_cvg_Size_Both}
\end{figure}

\noindent{\bf Performance evaluation with fixed $s$ and $\tilde{\alpha}$ hyper-parameter and varying sampling radius ($||\epsilon||_2 \leq r$) around test samples.}
Figures \ref{C10_delta_changes_cvg} and \ref{C10_delta_changes_size}
present the probabilistic robust coverage and the prediction set size respectively for the CIFAR10 dataset. Similarly, figures \ref{C100_delta_changes_cvg} and \ref{C100_delta_changes_size}
present probabilistic robust coverage and prediction set size for the CIFAR100 dataset. We employ three different deep models that are trained with clean data. For calibration, we sample $m_s = 128$ noisy data points using the uniform sampling distribution from the surrounding of each data point ($||\epsilon||_2 \leq 0.125$), where $\epsilon$ is sampled uniformly over the segment $[0, 0.125]$. For testing, we sample $n_s = 128$ data points uniformly from the surrounding of each testing point ($||\epsilon||_2 \leq \{1.0, 2.0, 3.0\}$), where $\epsilon$ is uniformly sampled over the segment $[0, 1], [0, 2], [0, 3]$ respectively.  
We observe that the probabilistic robust coverage for noisy data decays as we increase the sampling radius. Additionally, we note that when we set the $d$ parameter to $0.1$ (accounting for the change in noise distribution between calibration and testing as per Theorem 2), we guarantee achieving the target coverage. These observations hold for both conformal scores (APS and HPS) using different deep neural network models.

\begin{figure}[h!]
\centering
\includegraphics[width=.9\linewidth]{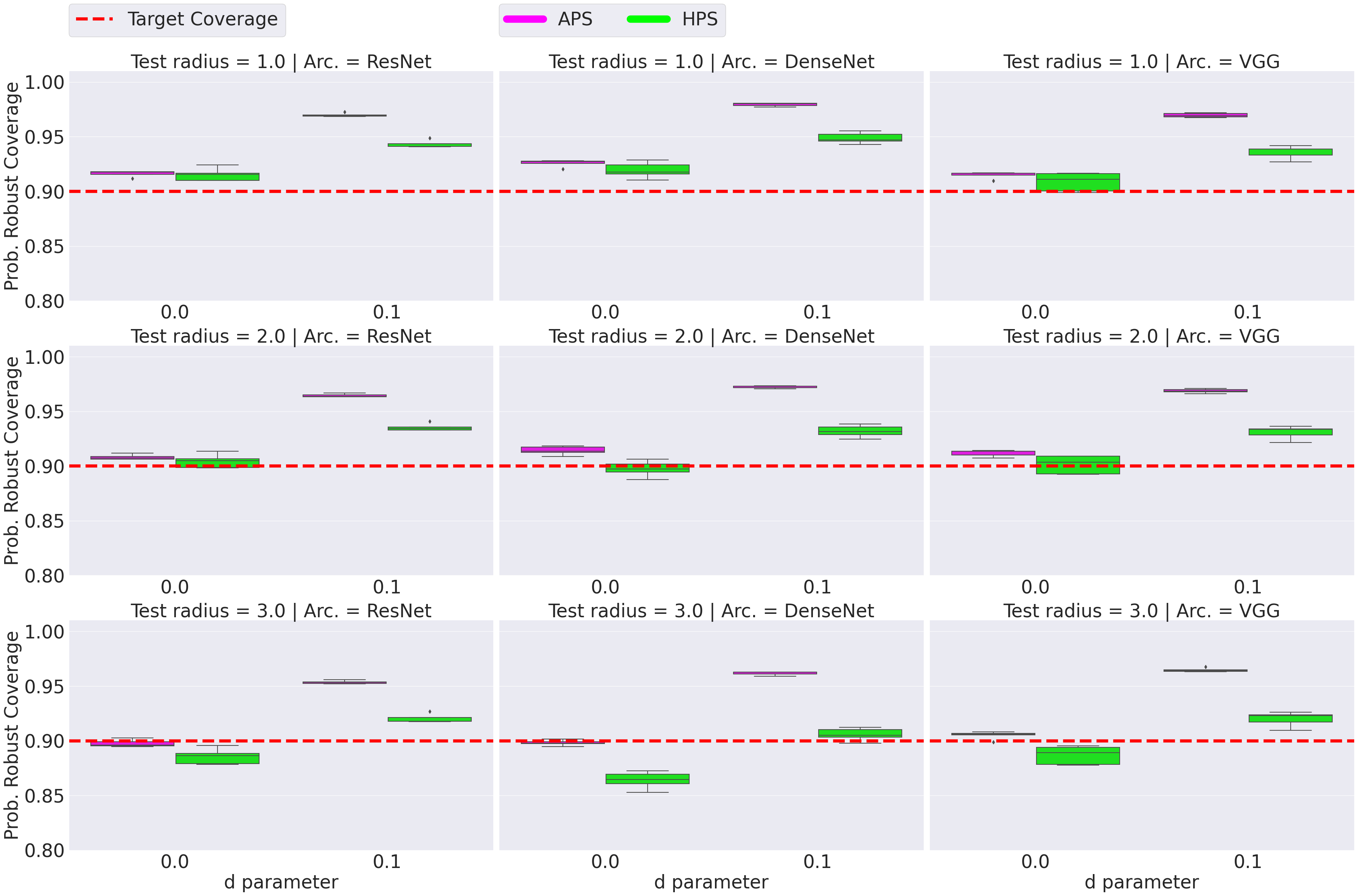}
\caption{Probabilistic robust coverage evaluated on CIFAR10 dataset for three different models. The target coverage is $90\%$. The results are shown over 50 runs forall three neural network models.}
\label{C10_delta_changes_cvg}
\end{figure}

\begin{figure}[h!]
\centering
\includegraphics[width=.9\linewidth]{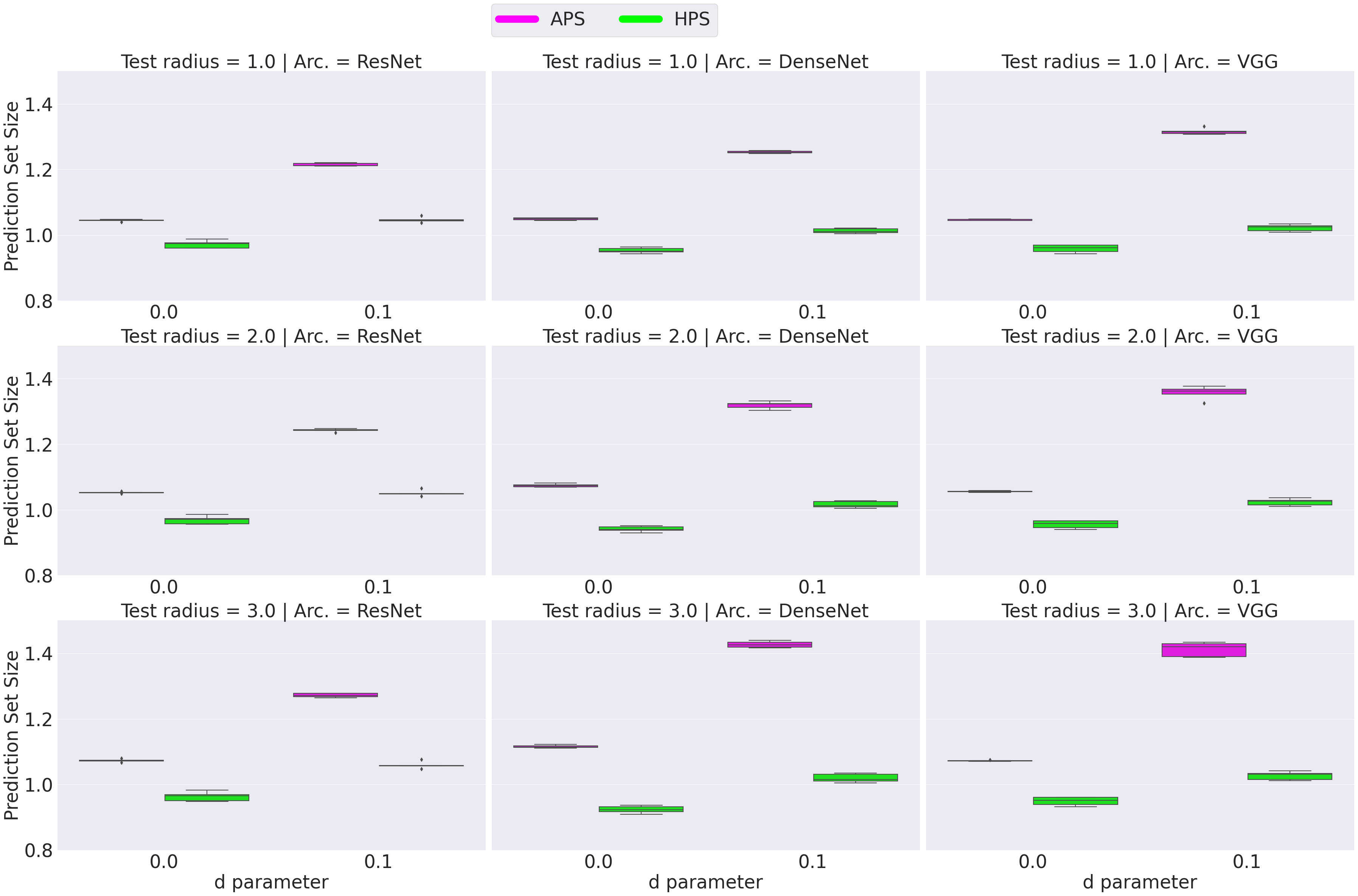}
\caption{Prediction set size evaluated on CIFAR10 dataset for three different deep models. The results are shown over 50 different runs.}
\label{C10_delta_changes_size}
\end{figure}

\begin{figure*}[h!]
\centering
\includegraphics[width=.9\linewidth]{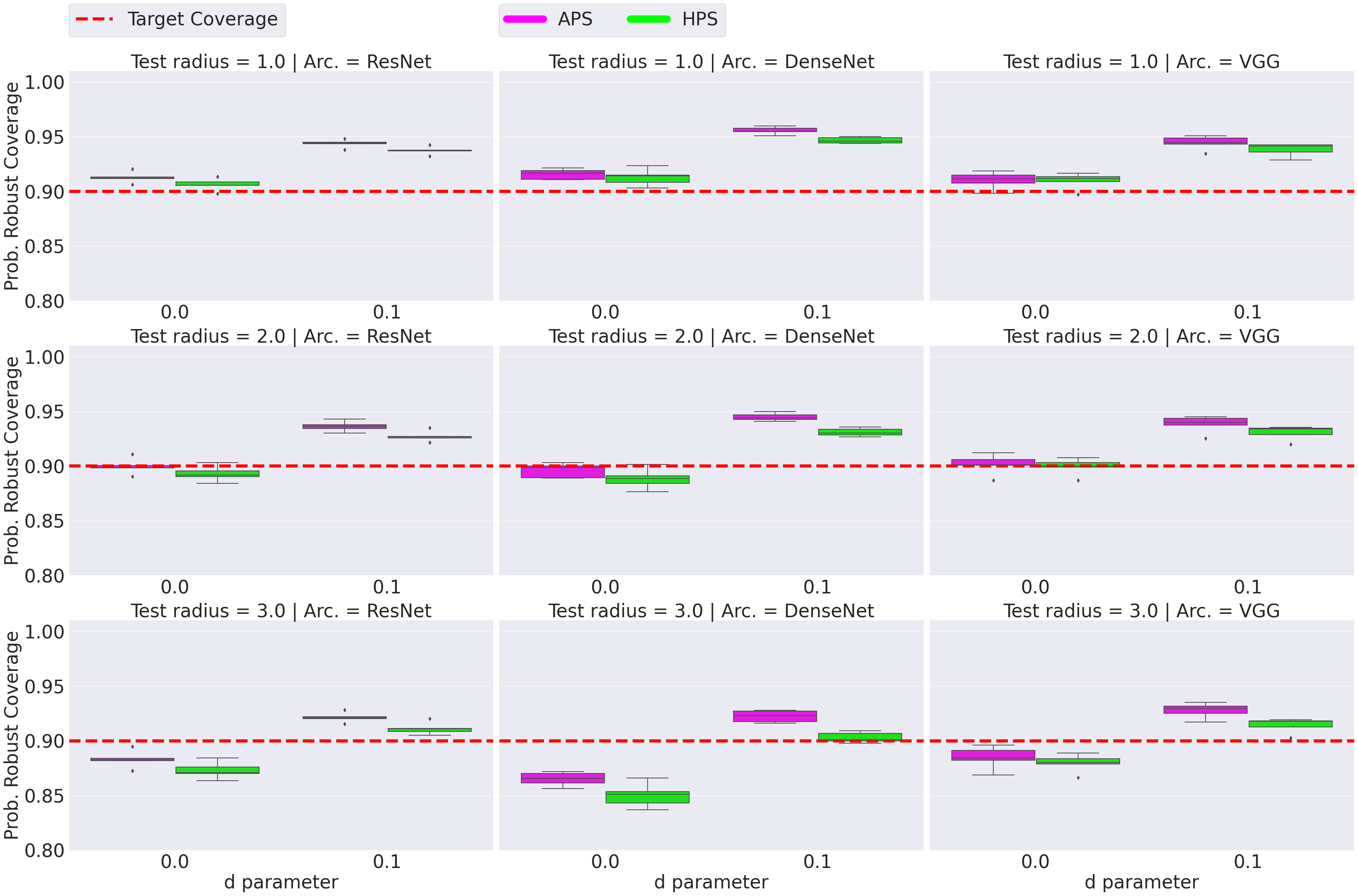}
\caption{Probabilistic robust coverage evaluated on CIFAR100 dataset for three different deep models. The target coverage is $90\%$. The results are shown over 50 different runs.}
\label{C100_delta_changes_cvg}
\end{figure*}

\clearpage

\begin{figure*}[h!]
\centering
\includegraphics[width=.9\linewidth]{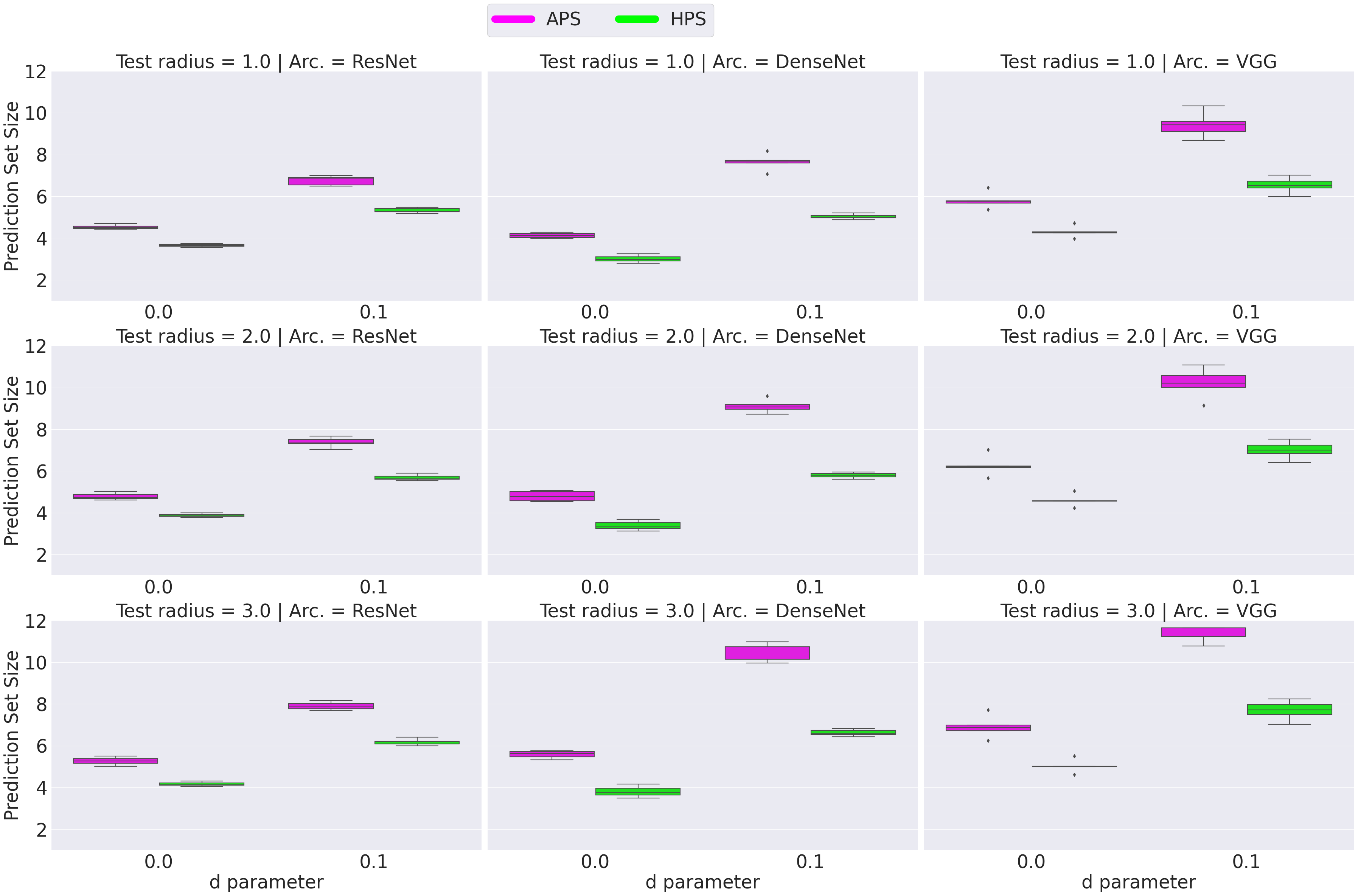}
\caption{Prediction set size evaluated on CIFAR100 dataset for three different deep models. The results are shown over 50 different runs.}
\label{C100_delta_changes_size}
\end{figure*}

\subsection{Case of Dissimilar Noise Distributions for Calibration and Testing}
\noindent{\bf Gaussian distribution for Calibration  and  Uniform distribution for Testing with a fixed $s$ hyper-parameter and varying $\tilde{\alpha}$.}
Figures \ref{C100_gaussian_cal_uni_eval_ratio_0.0PRCP_fixed_ns_cvg_Size_Both} and \ref{C10_gaussian_cal_uni_eval_ratio_0.0PRCP_fixed_ns_cvg_Size_Both} present probabilistic robust coverage and prediction set size respectively for the CIFAR100 and CIFAR10 datasets with  three different deep models that are trained with clean data. For calibration, we sample $m_s = 128$ data points using the Gaussian sampling distribution from the surrounding of each data point($||\epsilon||_2 \leq 0.125$). For testing, we sample $n_s = 128$ data points uniformly from the surrounding of each testing point($||\epsilon||_2 \leq 0.125$). We observe that the probabilistic robust coverage increased over the case of using the same distribution for sampling during the testing and calibration phases. 

\begin{figure}[h!]
\centering
\includegraphics[width=.9\linewidth]{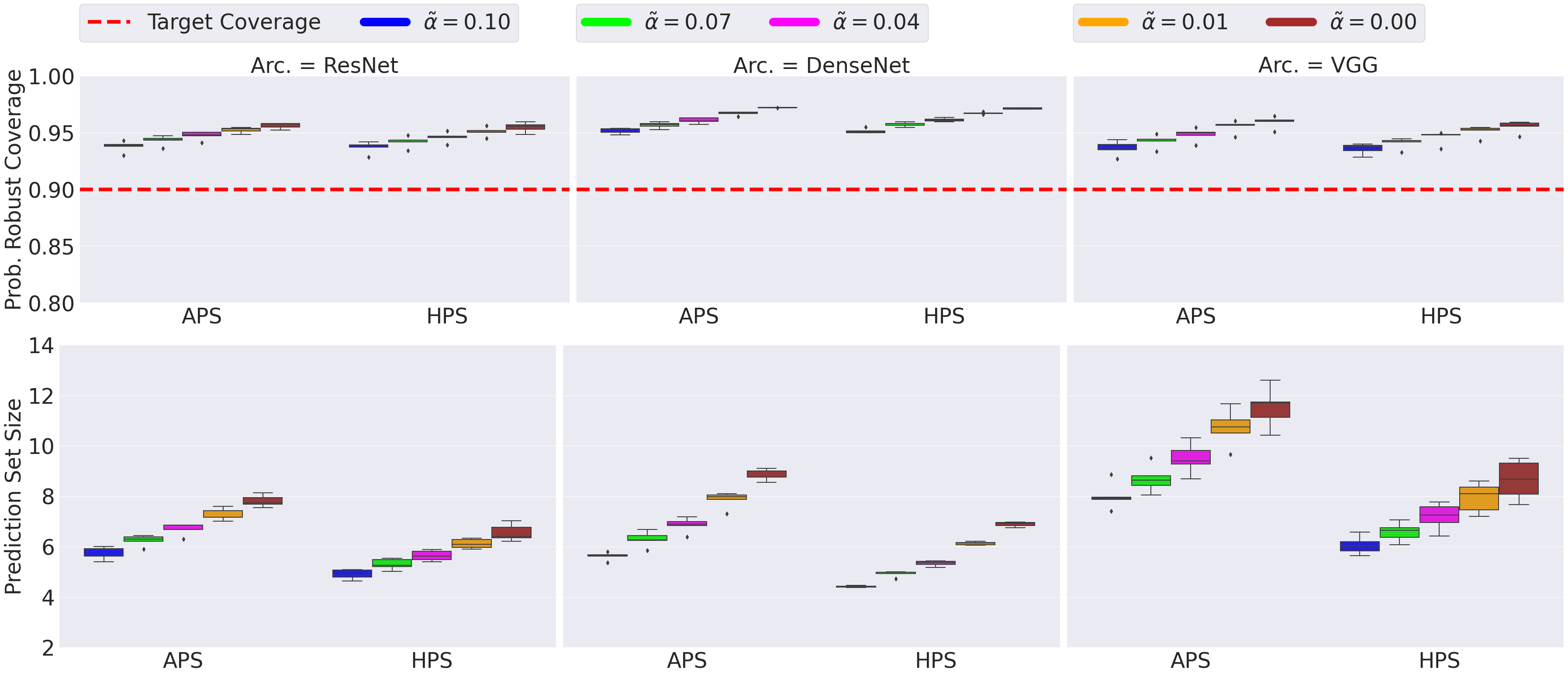}
\caption{Probabilistic robust coverage(top) and Prediction set size(bottom) obtained by aPRCP$(\tilde{\alpha} = 0.10)$, aPRCP$(\tilde{\alpha} = 0.03)$, 
PRCP$(\tilde{\alpha} = 0.06)$, aPRCP$(\tilde{\alpha} = 0.09)$,
and aPRCP$(\tilde{\alpha} = 0.00)$, evaluated on CIFAR100 dataset for three different deep models. The target coverage is $90\%$. The results are shown over 50 different runs.}
\label{C100_gaussian_cal_uni_eval_ratio_0.0PRCP_fixed_ns_cvg_Size_Both}
\end{figure}

\begin{figure}[h!]
\centering
\includegraphics[width=.9\linewidth]{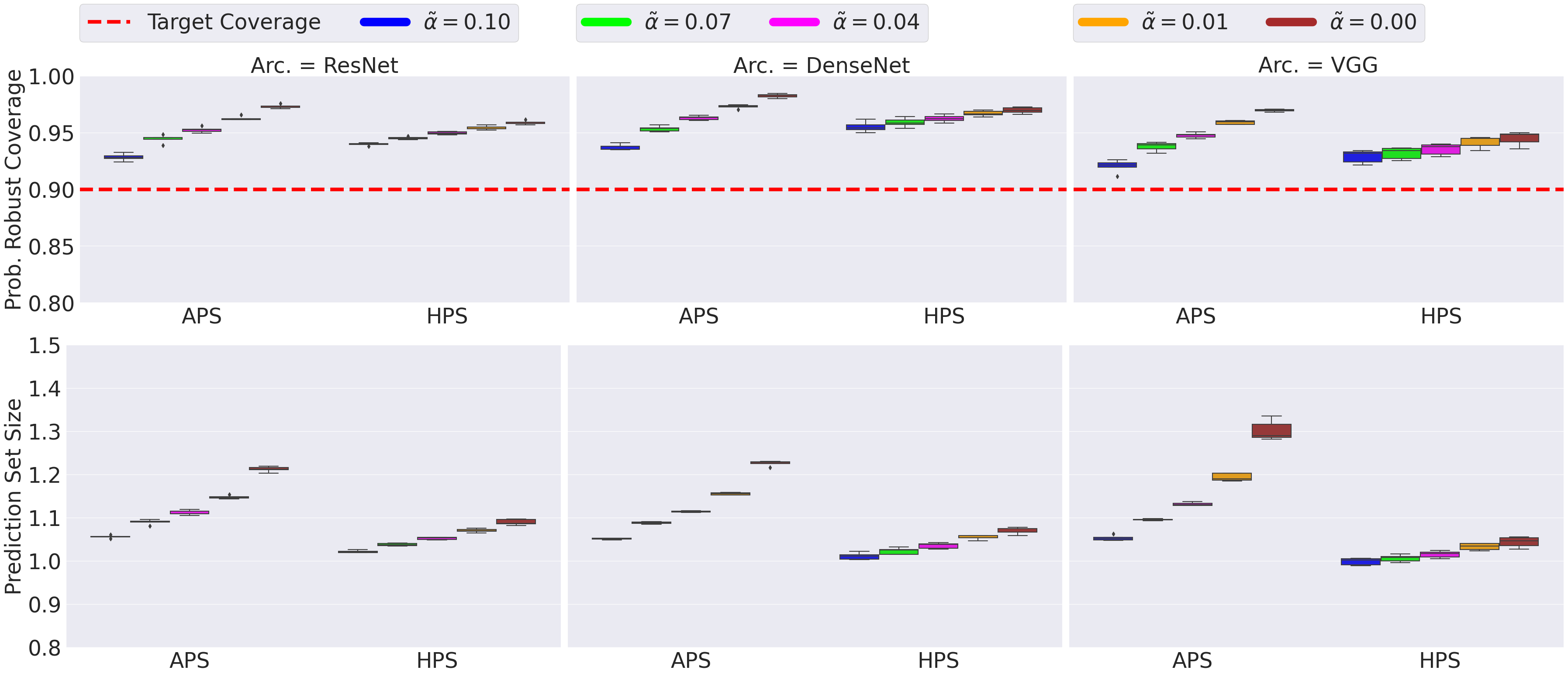}
\caption{Probabilistic robust coverage(top) and Prediction set size(bottom) obtained by aPRCP$(\tilde{\alpha} = 0.10)$, aPRCP$(\tilde{\alpha} = 0.03)$, 
PRCP$(\tilde{\alpha} = 0.06)$, aPRCP$(\tilde{\alpha} = 0.09)$,
and aPRCP$(\tilde{\alpha} = 0.00)$, evaluated on CIFAR10 dataset for three different deep models. The target coverage is $90\%$. The results are shown over 50 different runs.}
\label{C10_gaussian_cal_uni_eval_ratio_0.0PRCP_fixed_ns_cvg_Size_Both}
\end{figure}

\noindent{\bf Uniform distribution for Calibration and  Gaussian distribution for Testing with a fixed $s$ hyper-parameter and varying $\tilde{\alpha}$.}
Figures \ref{uni_cal_C100_fixed_ms_APS_HPS_sigma_0_0_cvg} and \ref{uni_cal_C100_fixed_ms_APS_HPS_sigma_0_0_size} present probabilistic robust coverage and prediction size for CIFAR100 and CIFAR10 datasets respectively with three different deep models that are trained with clean data. For calibration, we sample $m_s = 128$ data points using the Uniform sampling distribution from the surrounding of each data point ($||\epsilon||_2 \leq 0.125$). For testing, we sample $n_s = 128$ data points using Gaussian distribution from the surrounding of each testing point ($||\epsilon||_2 \leq 0.125$). We observe a slightly different performance of aPRCP compared to the case of using the same distribution for noise during the testing and calibration phases. This observation corroborate the statement of Theorem 2 and Remark 2 explaining the relation between the gap of the density probability between the calibration and testing noise distributions with the probabilistic robust coverage for aPRCP.

\begin{figure}[h!]
\centering
\includegraphics[width=.9\linewidth]{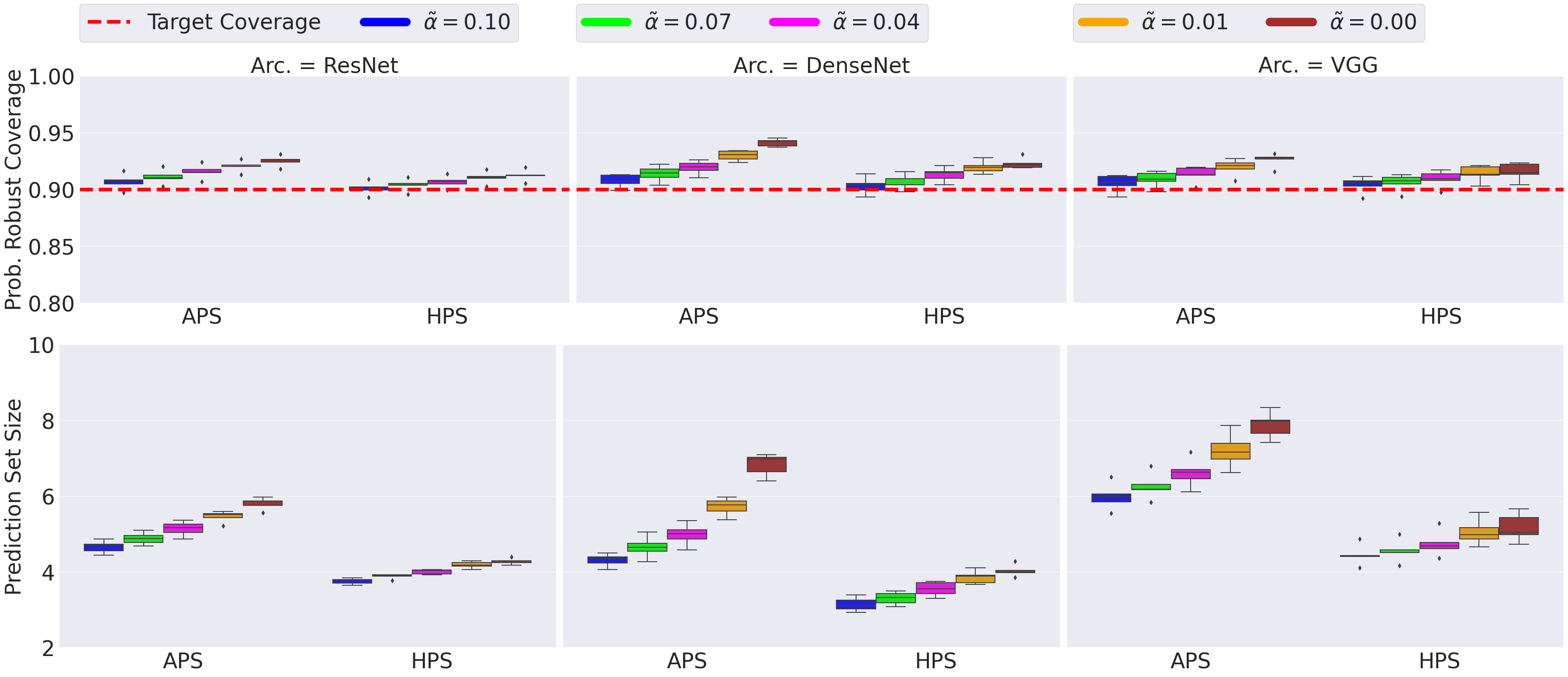}
\caption{Probabilistic robust coverage(top) and Prediction set size(bottom) obtained by aPRCP$(\tilde{\alpha} = 0.10)$, aPRCP$(\tilde{\alpha} = 0.03)$, 
PRCP$(\tilde{\alpha} = 0.06)$, aPRCP$(\tilde{\alpha} = 0.09)$,
and aPRCP$(\tilde{\alpha} = 0.00)$, evaluated on CIFAR100 dataset for three different deep models. The target coverage is $90\%$. The results are shown over 50 different runs.}
\label{uni_cal_C100_fixed_ms_APS_HPS_sigma_0_0_cvg}
\end{figure}

\begin{figure}[!h]
\centering
\includegraphics[width=.9\linewidth]{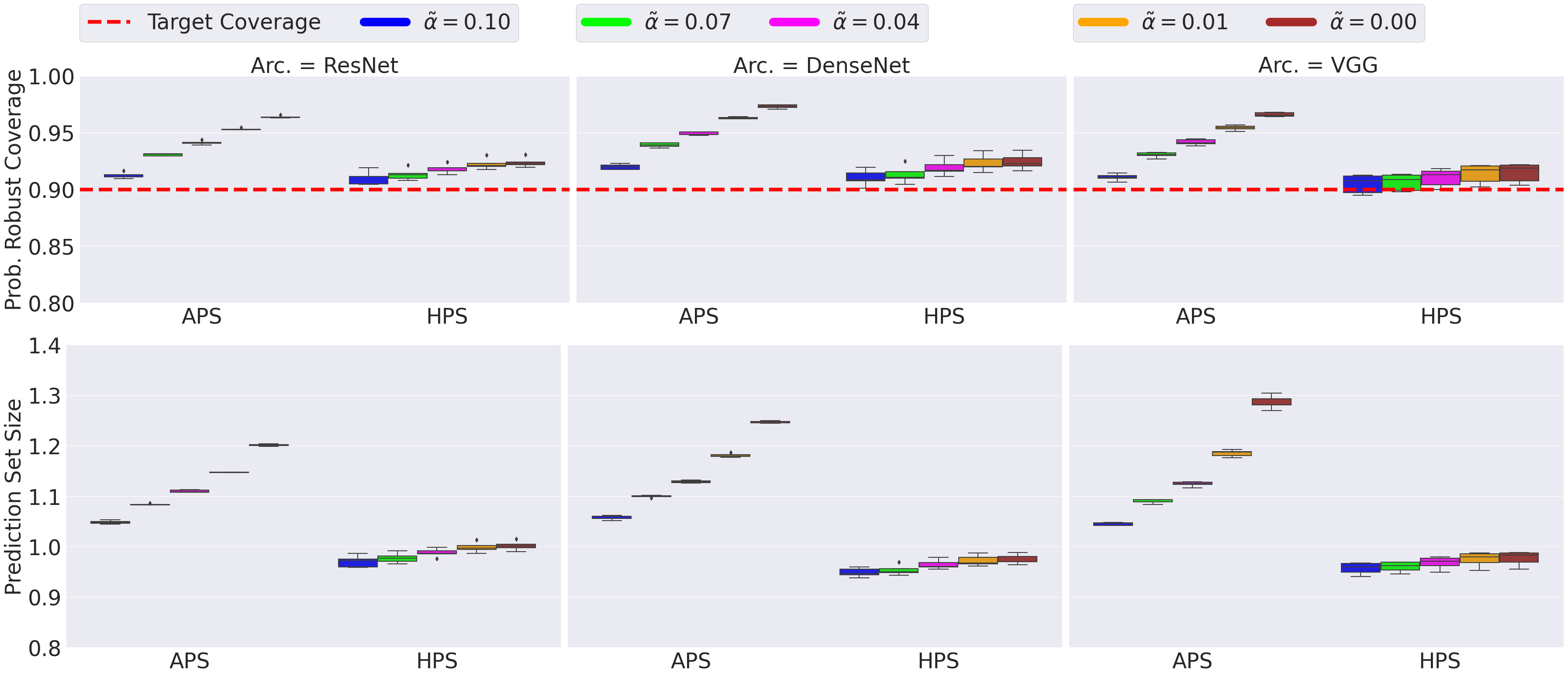}
\caption{Probabilistic robust coverage(top) and Prediction set size(bottom) obtained by aPRCP$(\tilde{\alpha} = 0.10)$, aPRCP$(\tilde{\alpha} = 0.03)$, 
PRCP$(\tilde{\alpha} = 0.06)$, aPRCP$(\tilde{\alpha} = 0.09)$,
and aPRCP$(\tilde{\alpha} = 0.00)$, evaluated on CIFAR10 dataset for three different deep models. The target coverage is $90\%$. The results are shown over 50 different runs.}
\label{uni_cal_C100_fixed_ms_APS_HPS_sigma_0_0_size}
\end{figure}

\clearpage

\subsection{Performance of \texttt{aPRCP(worst-adv)} with varying $m_s$}
Figures \ref{C10_ms_APS_HPS_sigma_0.25} and \ref{C100_ms_APS_HPS_sigma_0.25} show the performance of aPRCP with three different deep models when varying $m_s$ (number of noisy samples for calibration) for CIFAR10 and CIFAR100 datasets respectively. We show the robust coverage and prediction set size for both \texttt{APS} and \texttt{HPS} conformity scores. Both figures show that the \texttt{aPRCP(worst-adv)} reported performance is consistent for different values of $m_s$.

We show in Figure \ref{C100_ms_APS_HPS_ARCPworstadv_RSCP} the comparison of the prediction set size and the coverage between \texttt{RSCP} and \texttt{aPRCP(worst-adv)} using both \texttt{APS} and \texttt{HPS}. We employ ResNet110 model trained with Gaussian augmented data ($\sigma = 0.125$). We observe that \texttt{RSCP} is more conservative compared to our method \texttt{aPRCP(worst-adv)} for both \texttt{APS} and \texttt{HPS} conformity scores.

We show in Figure \ref{C10_ms_APS_ARCPworstadv_RSCP_different_sigma} and \ref{C100_ms_APS_ARCPworstadv_RSCP_different_sigma} the comparison of the  prediction set size and coverage between \texttt{RSCP} and \texttt{aPRCP(worst-adv)} for two different deep models trained with Gaussian augmented data ($\sigma = 0.0625$ and $\sigma = 0.125$). We observe that \texttt{aPRCP(worst-adv)} produces smaller prediction sets than \texttt{RSCP}.

\begin{figure}[h!]
\centering
\includegraphics[width=\linewidth]{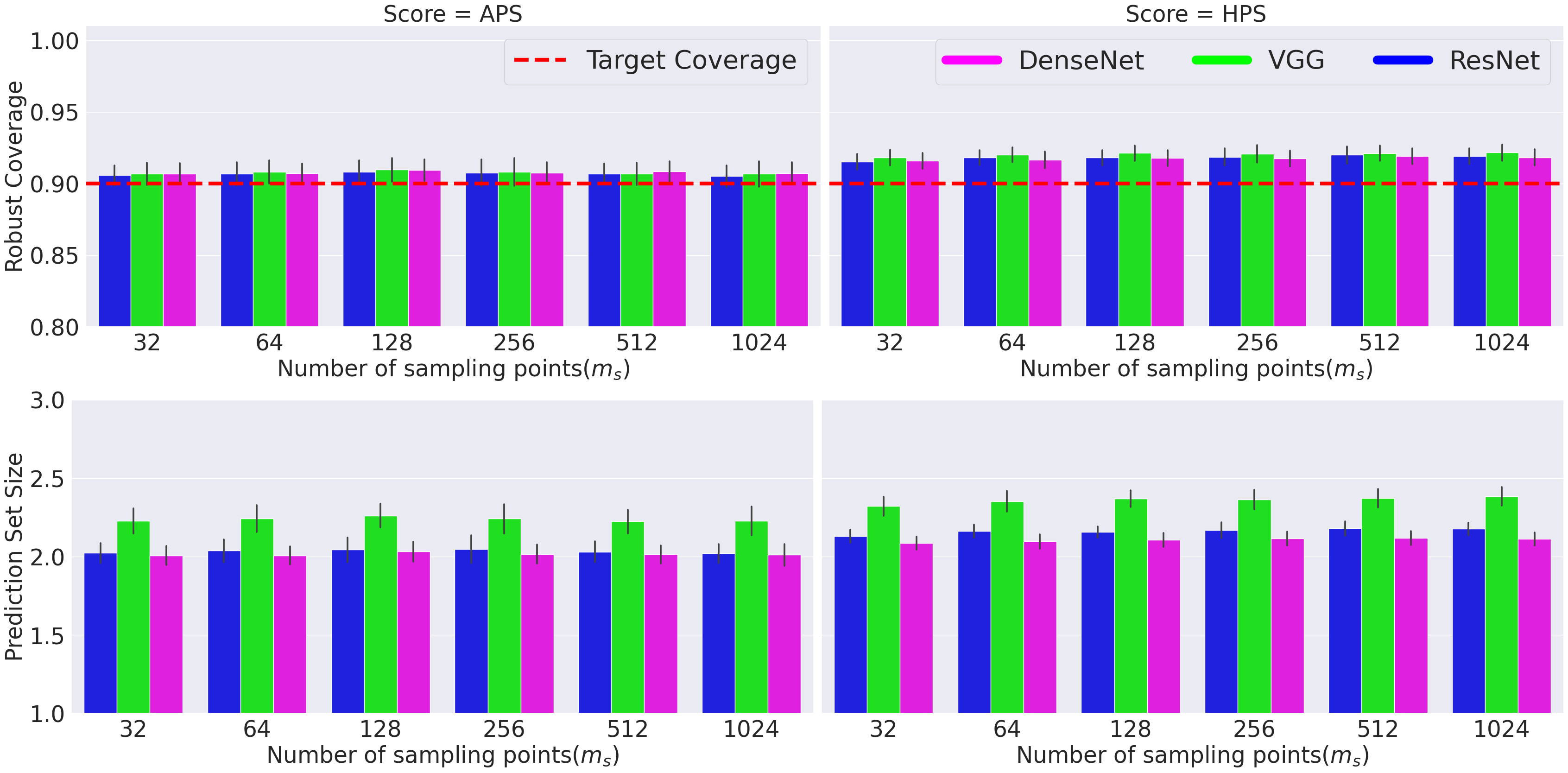}
\caption{Robust coverage (top) and prediction set size (bottom) performance of two conformity scores (APS and HPS) for different deep models with varying $m_s$ samples on calibration data for CIFAR10 dataset. The results are reported over 50 different runs. We use all models trained with Gaussian augmented data using standard deviation $\sigma = 0.25$.}
\label{C10_ms_APS_HPS_sigma_0.25}
\end{figure}

\begin{figure}[h!]
\centering
\includegraphics[width=\linewidth]{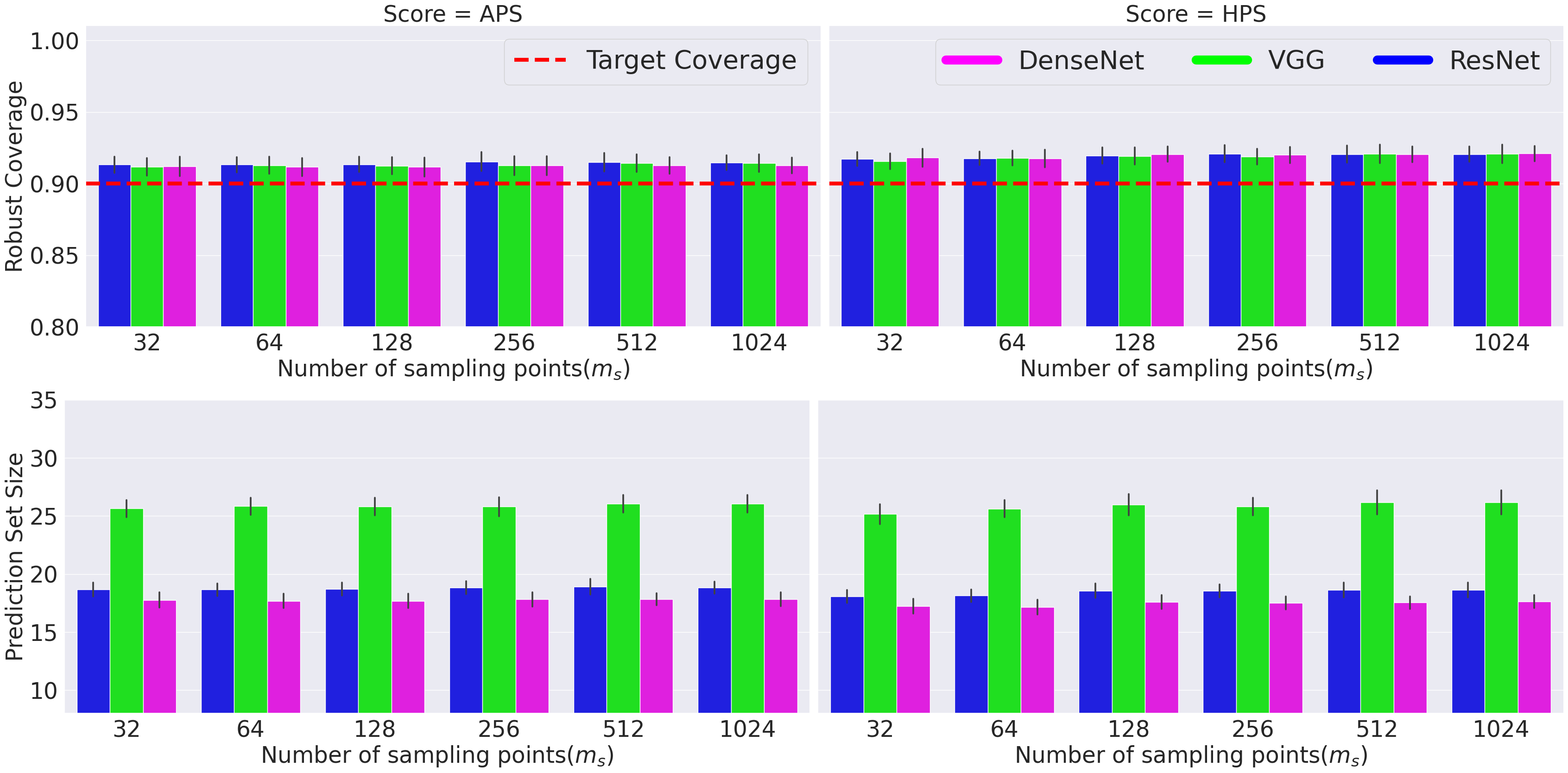}
\caption{Robust coverage (top) and prediction set size (bottom) performance of two scores for different deep models with varying $m_s$ samples on calibration data for CIFAR100 dataset. The results are reported over 50 different runs. We use all models trained with Gaussian augmented data with standard deviation $\sigma = 0.25$.}
\label{C100_ms_APS_HPS_sigma_0.25}
\end{figure}

\begin{figure}[h!]
\centering
\includegraphics[width=\linewidth]{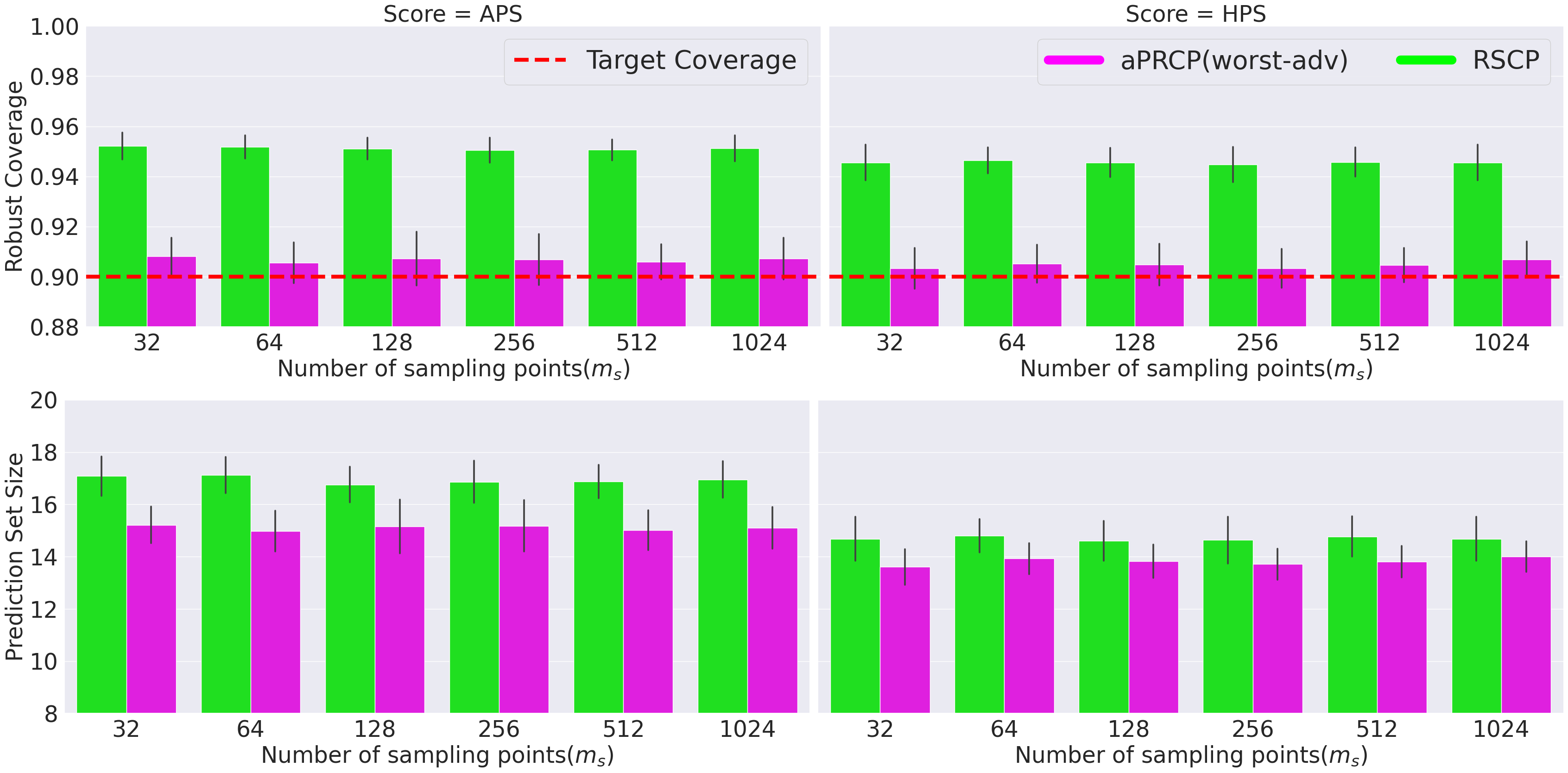}
\caption{Robust coverage (top) and prediction set size (bottom) performance of two methods, namely, aPRCP(worst-adv) and RSCP, with varying $m_s$ samples on calibration data for CIFAR100 dataset. The results are reported over 50 different runs. We use all models trained with Gaussian augmented data of standard deviation $\sigma = 0.125$.}
\label{C100_ms_APS_HPS_ARCPworstadv_RSCP}
\end{figure}

\begin{figure}[h!]
\centering
\includegraphics[width=\linewidth]{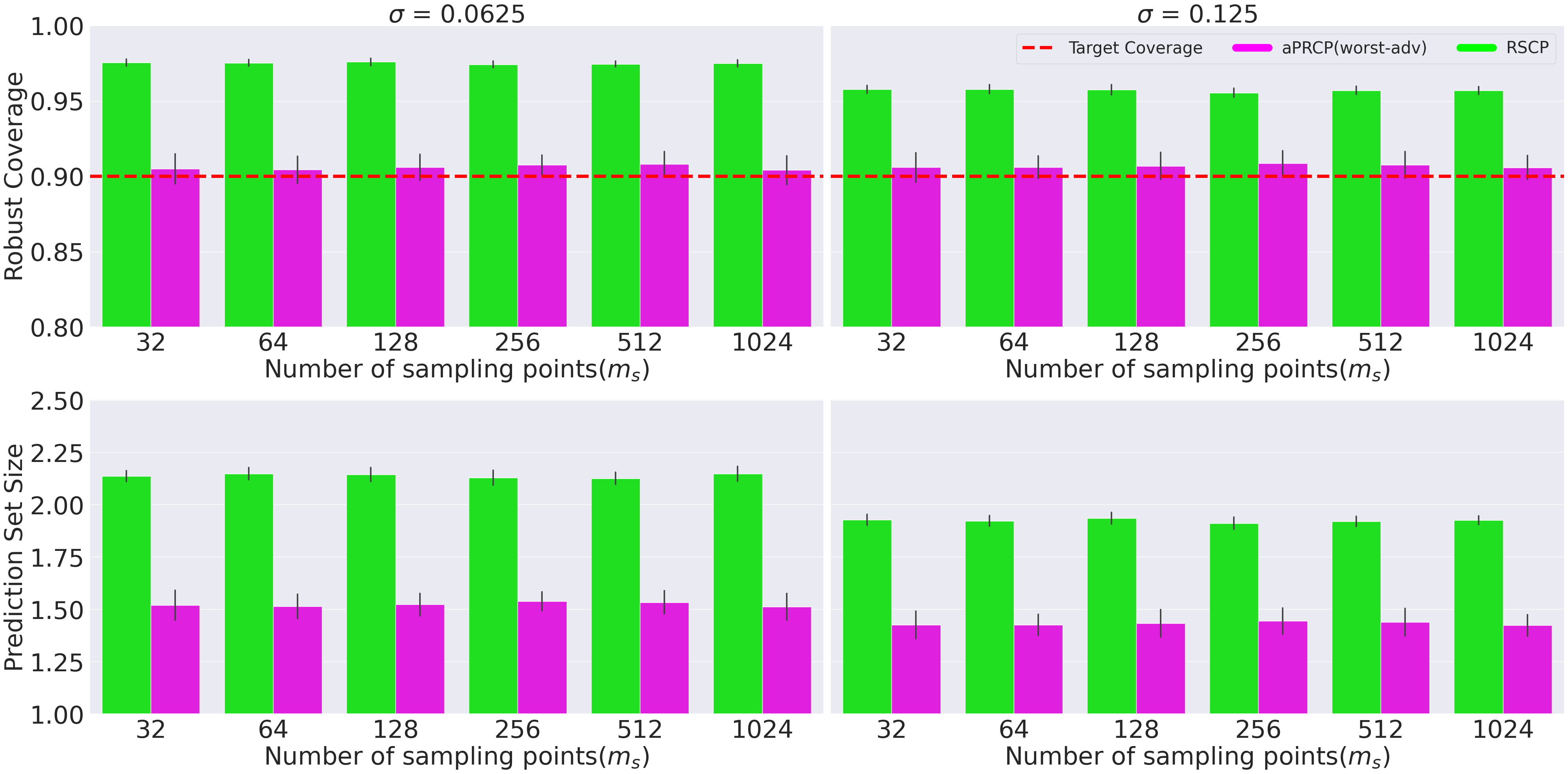}
\caption{Robust coverage (top) and prediction set size (bottom) performance of two different models trained with Gaussian augmented data using standard deviation $\sigma = 0.0625$ and $\sigma = 0.125$ with varying $m_s$ samples on calibration data for CIFAR10 dataset. The results are reported over 50 different runs.}
\label{C10_ms_APS_ARCPworstadv_RSCP_different_sigma}
\end{figure}

\begin{figure}[h!]
\centering
\includegraphics[width=\linewidth]{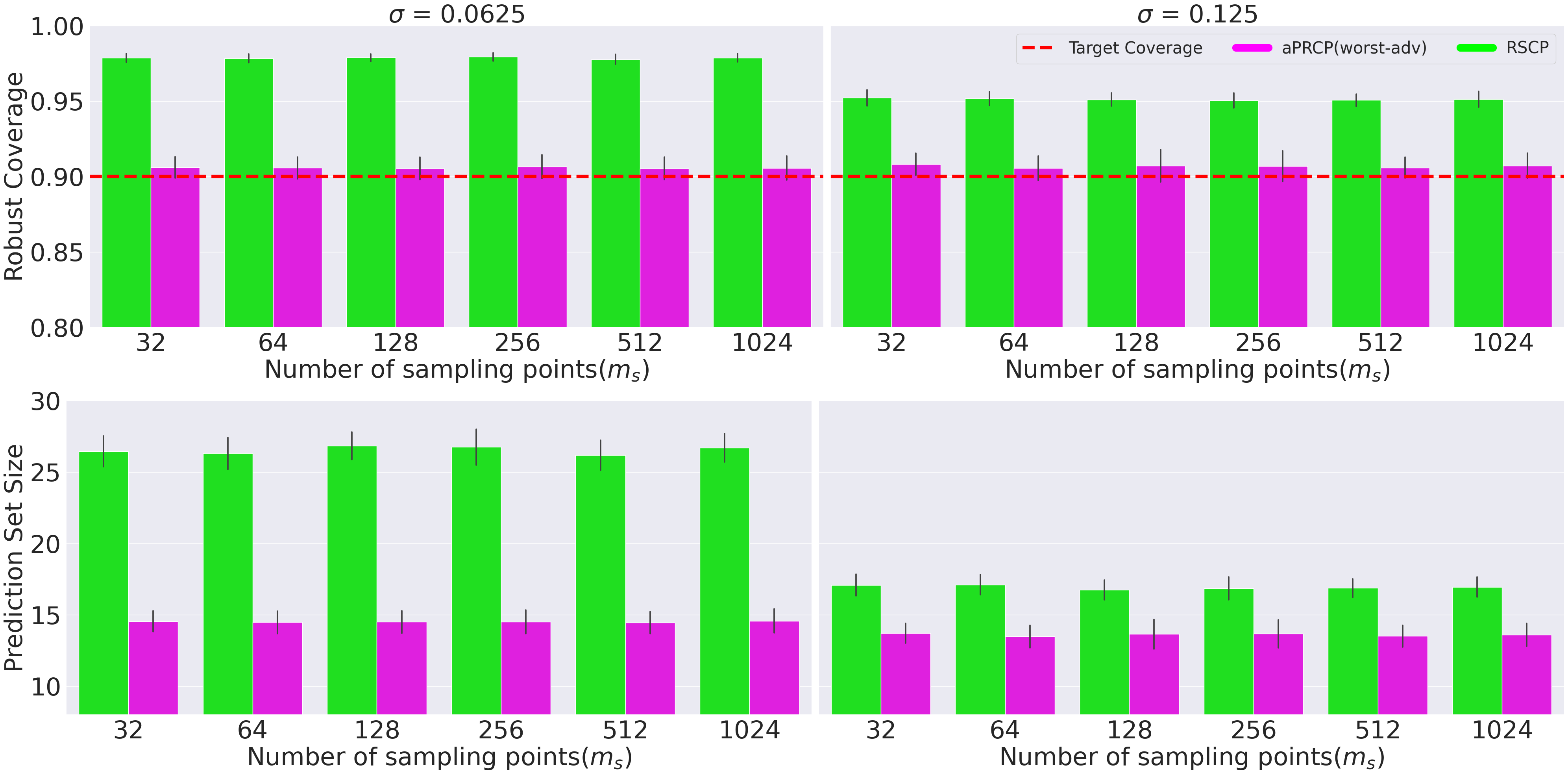}
\caption{Robust coverage (top) and prediction set size (bottom) performance of two different models trained with Gaussian augmented data using standard deviation $\sigma = 0.0625$ and $\sigma = 0.125$ with varying $m_s$ samples on calibration data for CIFAR100 dataset. The results are reported over 50 different runs.}
\label{C100_ms_APS_ARCPworstadv_RSCP_different_sigma}
\end{figure}

\clearpage
\subsection{The effect of Varying $||\epsilon||_2 \leq r$ during calibration}
We show in Figure \ref{fig:C100_radius_changes_worst_adv} 
the robust coverage and the prediction set size achieved by aPRCP(worst-adv) on CIFAR100 with a ResNet model that is trained with Gaussian augmented data ($\sigma = 0.125$). For calibration, we sample $m_s = 128$ noisy data points using the uniform sampling distribution from the surrounding of each data point ($||\epsilon||_2 \leq r$), where $r = \{0.125, 0.250, 1.0\}$. For testing, we generate data using an adversarial attack algorithm of energy $0.125$. We observe that the effect of the small changes in the sampling radius is negligible.
\begin{figure*}[!h]
    \centering
    \begin{minipage}{\linewidth}   
        \hfill
        \begin{minipage}{\linewidth}
        \centering
            \includegraphics[width=0.6\linewidth]{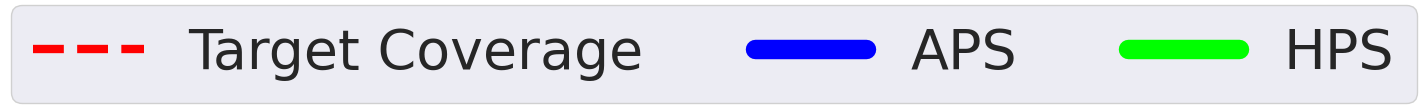}
        \end{minipage}
        \hfill       
        \begin{minipage}{\linewidth}
            \includegraphics[width=\linewidth]{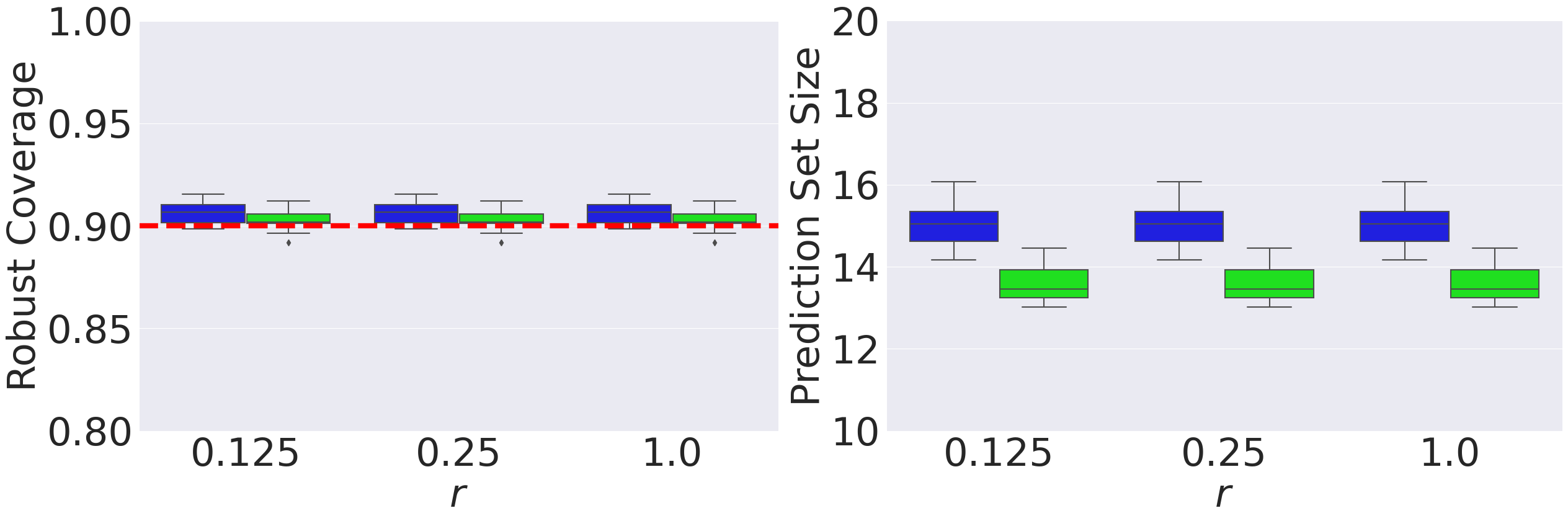}
        \end{minipage}
        \hfill
    \end{minipage}
    \caption{Robust coverage (left) and prediction set size (bottom) performance of the ResNet model trained with Gaussian augmented data using standard deviation $\sigma = 0.125$ with varying radius of robust quantile balls during calibration for CIFAR100 dataset. The results are reported over 50 different runs.}
    \label{fig:C100_radius_changes_worst_adv}
\end{figure*}

\subsection{Performance of aPRCP(worst-adv) with different Deep models}
Figure \ref{arcp_model_changes_C10_C100} shows the performance of our \texttt{aPRCP}(worst-adv) using DenseNet\citep{iandola2014densenet} and VGG\citep{simonyan2014very} models on the CIFAR10 and CIFAR100 datasets. We use the same adversarial attack algorithm for test examples with a magnitude of $r = 0.125$. During calibration, we sample $m_s = 128$ noisy samples ($r = 0.125$) for each calibration example. We observe that the robust coverage is achieved on all three deep models with small prediction sets.

\begin{figure*}[!h]
    \centering
    \begin{minipage}{\linewidth}
    \hfill
        \begin{minipage}{\linewidth}
        \centering
            \includegraphics[width=0.6\linewidth]{Figures1/legend_clean.png}
        \end{minipage}
        \begin{minipage}{.48\linewidth}
            \centering
            (a) CIFAR10
        \end{minipage}
        \hfill
        \begin{minipage}{.48\linewidth}
            \centering
            (b) CIFAR100
        \end{minipage} 
        \begin{minipage}{.48\linewidth}
            \includegraphics[width=\linewidth]{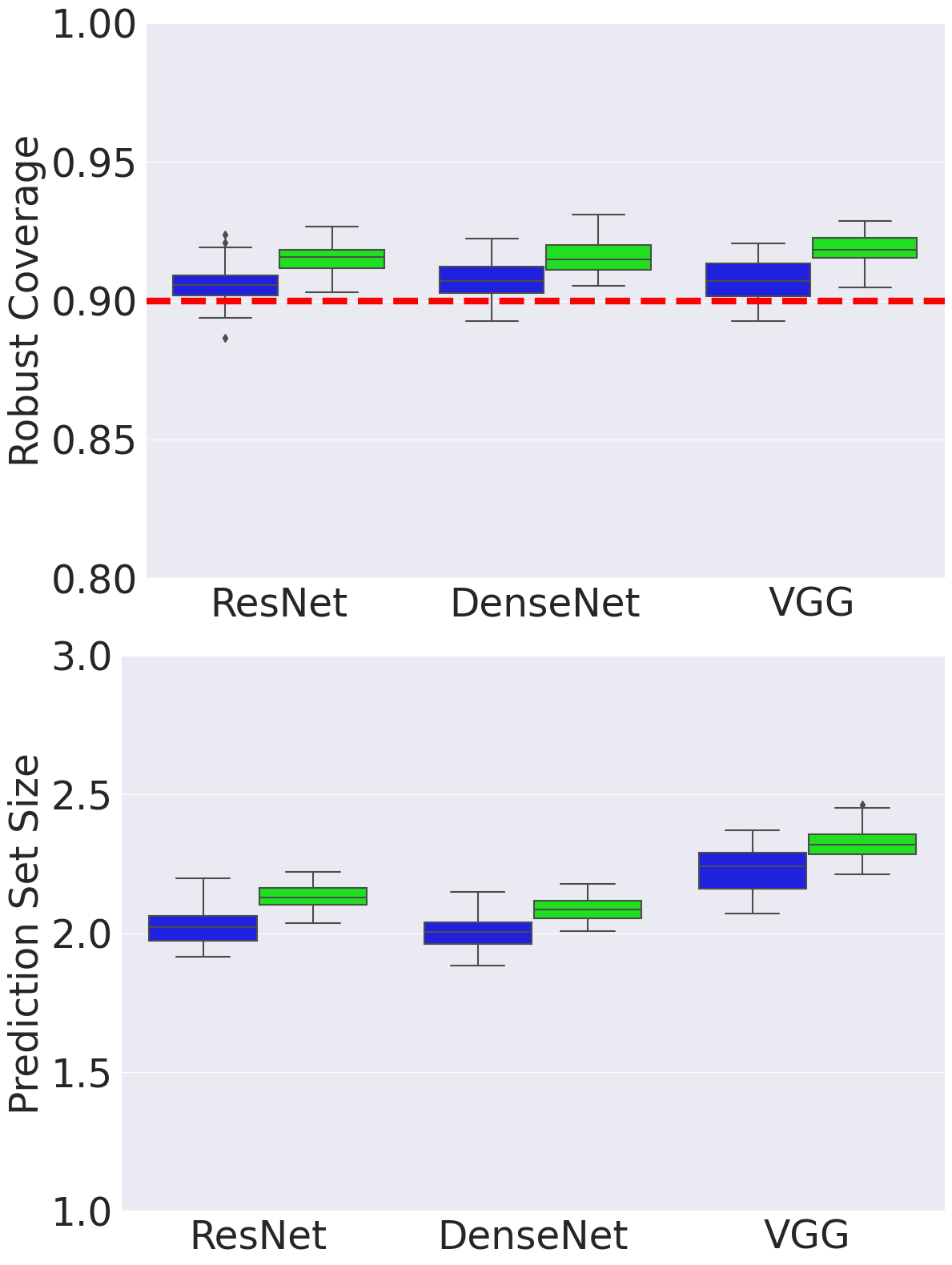}
        \end{minipage}
        \hfill       
        \begin{minipage}{.48\linewidth}
            \includegraphics[width=\linewidth]{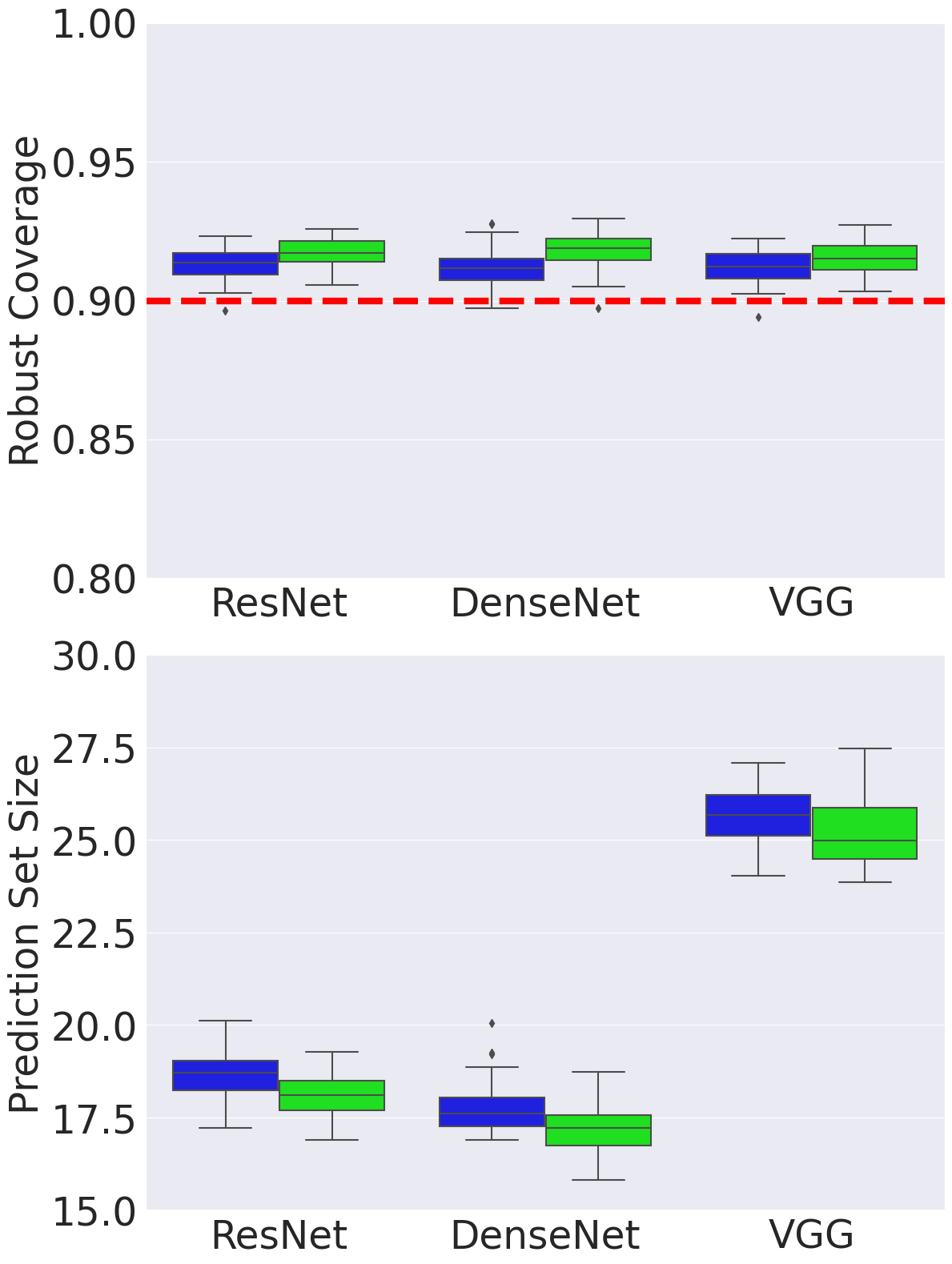}
        \end{minipage}
        \hfill

    \end{minipage}
    \caption{Robust coverage (top) and prediction set size (bottom) constructed by \texttt{aPRCP(worst-adv)} method for CIFAR10 (left) and CIFAR100 (right) datasets. The neural network models used are trained with Gaussian augmented data using standard deviation $\sigma = 0.25$. The results are reported  over 50 different runs. As can be seen, all models guarantee target coverage and VGG produces larger prediction sizes compared to other models.}
    \label{arcp_model_changes_C10_C100}
\end{figure*}

\clearpage 

\subsection{Results on Adversarial Examples Generated from a probability density distribution}
We evaluate the performance of aPRCP with a  different adversarial attack algorithm, namely NATTACK \citep{Black_box}. This attack algorithm generates a probability density distribution centered around an input from which adversarial examples can be sampled.
We employ this algorithm using an adversarial magnitude $||\epsilon||_2 \leq r = 0.125$ to generate adversarial examples for the test data of CIFAR10 and CIFAR100 on three different deep models trained with Gaussian augmented data ($\sigma = 0.125$). In all our experiments, we set $T = 1000$ as the number of maximum iterations, and a learning rate $\eta = 0.008$.

Both Figures \ref{All_blackBox_C10} and \ref{All_blackBox_C100} show that aPRCP is the only algorithm that can guarantee the adversarial robust coverage. This can be explained by the fact that RSCP requires the design of a specialized scoring function to guarantee coverage while aPRCP uses a quantile-of-quantile design and can employ any existing score function.

\begin{figure*}[!h]
    \centering
    \begin{minipage}{.98\linewidth}
        \begin{minipage}{\linewidth}
            \centering
            \includegraphics[width=.5\linewidth]{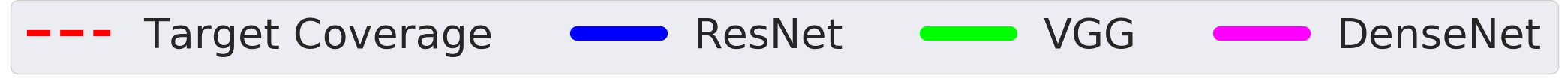}
        \end{minipage}     
        \begin{minipage}{\linewidth}
            \includegraphics[width=\linewidth]{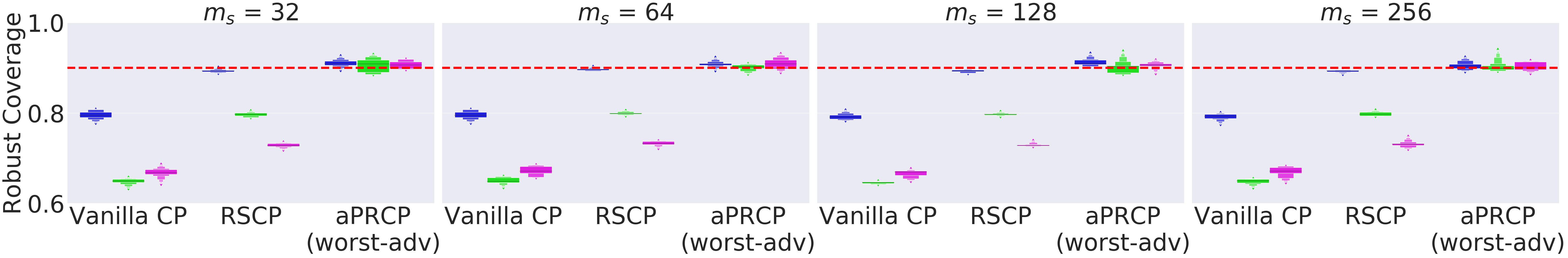}
        \end{minipage}
        \hfill
        \begin{minipage}{\linewidth}
            \includegraphics[width=\linewidth]{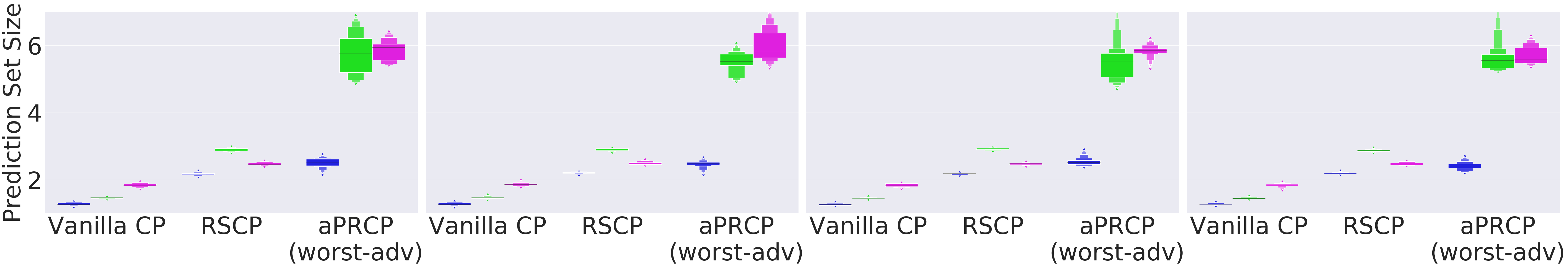}
        \end{minipage}
    \end{minipage}
    \caption{Robust coverage (top) and prediction set size (bottom) constructed by three different CP methods. The target coverage is $90\%$. The results are reported over 50 different runs for the CIFAR10 data set.}
    \label{All_blackBox_C10}
\end{figure*}

\begin{figure*}[!h]
    \centering
    \begin{minipage}{.98\linewidth}
        \begin{minipage}{\linewidth}
            \centering
            \includegraphics[width=.5\linewidth]{MainPaper/legend3.png}
        \end{minipage}     
        \begin{minipage}{\linewidth}
            \includegraphics[width=\linewidth]{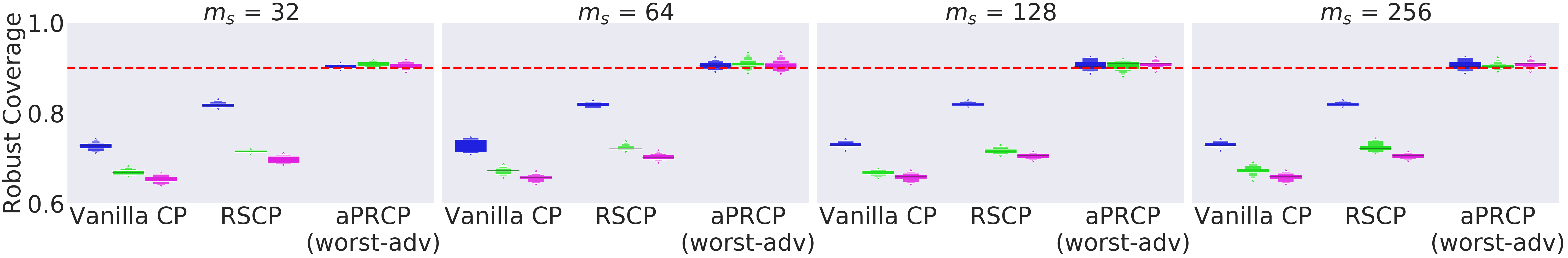}
        \end{minipage}
        \hfill
        \begin{minipage}{\linewidth}
            \includegraphics[width=\linewidth]{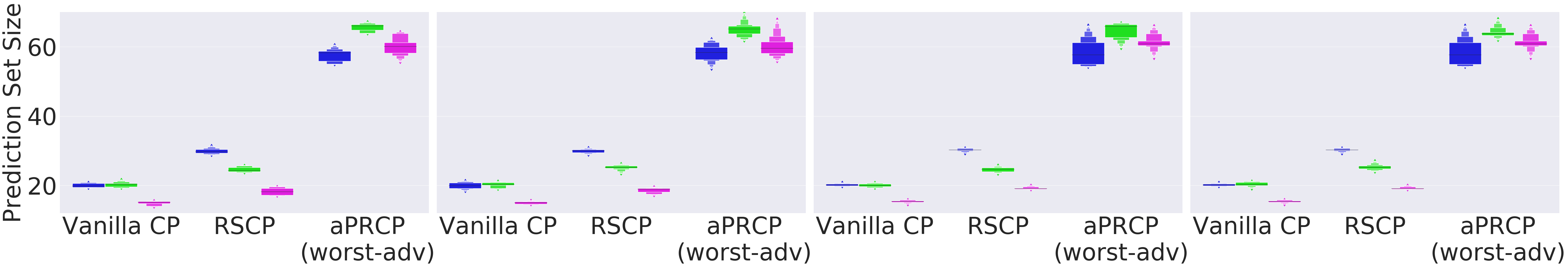}
        \end{minipage}
    \end{minipage}
    \caption{Robust coverage (top) and prediction set size (bottom) constructed by three different CP methods. The target coverage is $90\%$. The results are reported over 50 different runs for the CIFAR100 data set.}
    \label{All_blackBox_C100}
\end{figure*}

\subsection{Importance of Gaussian Augmented Training}
While aPRCP can work without any assumption on the base classifier, Figure \ref{Why_gaussian_training_C10} shows the importance of the model robustness to produce smaller prediction sets. Both \texttt{RSCP} and \texttt{aPRCP}(worst-adv) construct prediction sets that are larger when the base model is not adversarially robust.
\begin{figure*}[!h]
    \centering
    \begin{minipage}{.98\linewidth}     
        \begin{minipage}{\linewidth}
            \includegraphics[width=\linewidth]{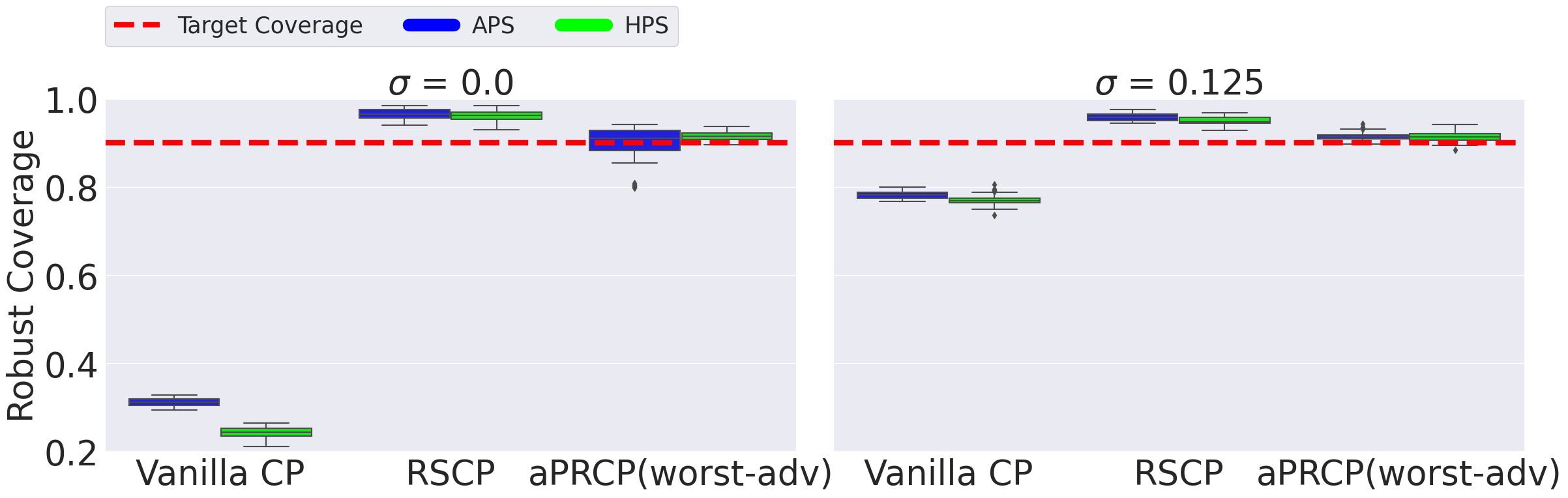}
        \end{minipage}
        \hfill
        \begin{minipage}{\linewidth}
            \includegraphics[width=\linewidth]{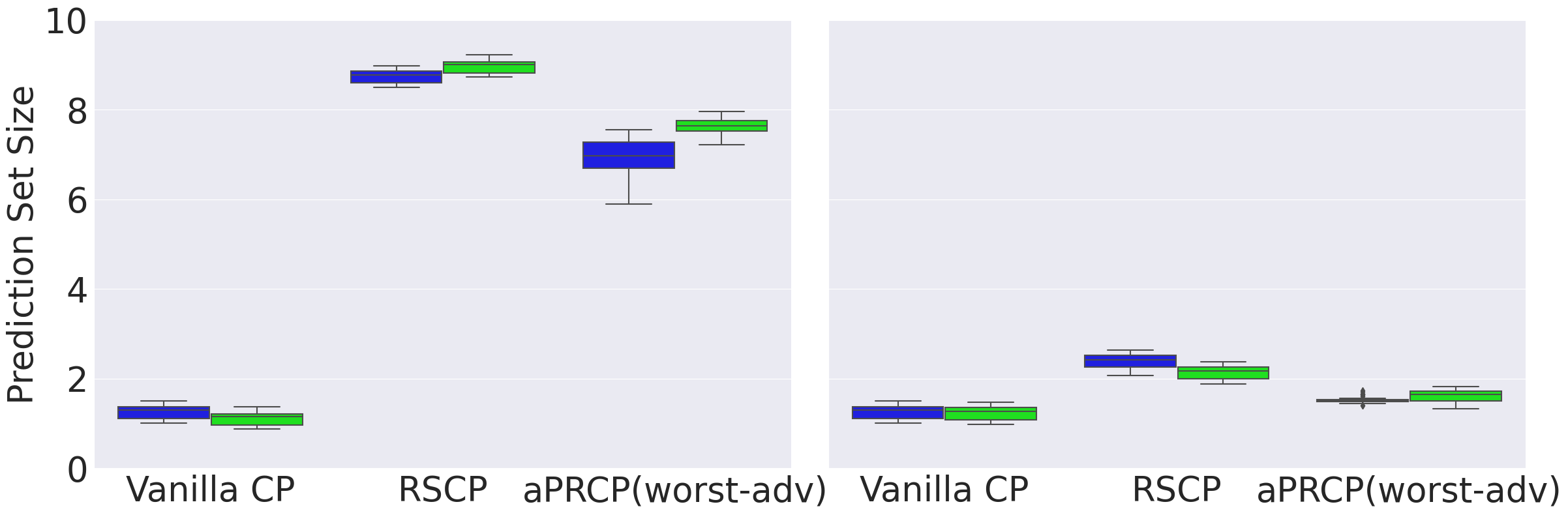}
        \end{minipage}
    \end{minipage}
    \caption{Robust coverage (top) and prediction set size (bottom) constructed by three different CP methods. The target coverage is $90\%$. The results are reported over 50 different runs for the CIFAR10 data set.}
    \label{Why_gaussian_training_C10}
\end{figure*}

\subsection{aPRCP nominal performance}
Figure \ref{Clean_data_results} shows a comparison of the nominal performance (evaluation on only clean inputs) on CIFAR10 and CIFAR100 datasets. We employ $m_s = 128$ for calibration and standard training to train the base model. We can observe that aPRCP achieves better trade-off between the nominal performance (evaluation on clean inputs) and the robust performance (evaluation on perturbed inputs). For both datasets, aPRCP achieves a tighter empirical coverage (closer to 90\%) with smaller prediction sets than RSCP.

\begin{figure*}[!h]
    \centering
    \begin{minipage}{.98\linewidth}
        \begin{minipage}{\linewidth}
            \centering
            \includegraphics[width=.6\linewidth]{Figures1/legend_clean.png}
        \end{minipage}     
        \begin{minipage}{.49\linewidth}
            \centering
            (a) CIFAR10
        \end{minipage}
        \hfill
        \begin{minipage}{.49\linewidth}
            \centering
            (b) CIFAR100
        \end{minipage} 
        \hfill
        \begin{minipage}{.49\linewidth}
            \includegraphics[width=\linewidth]{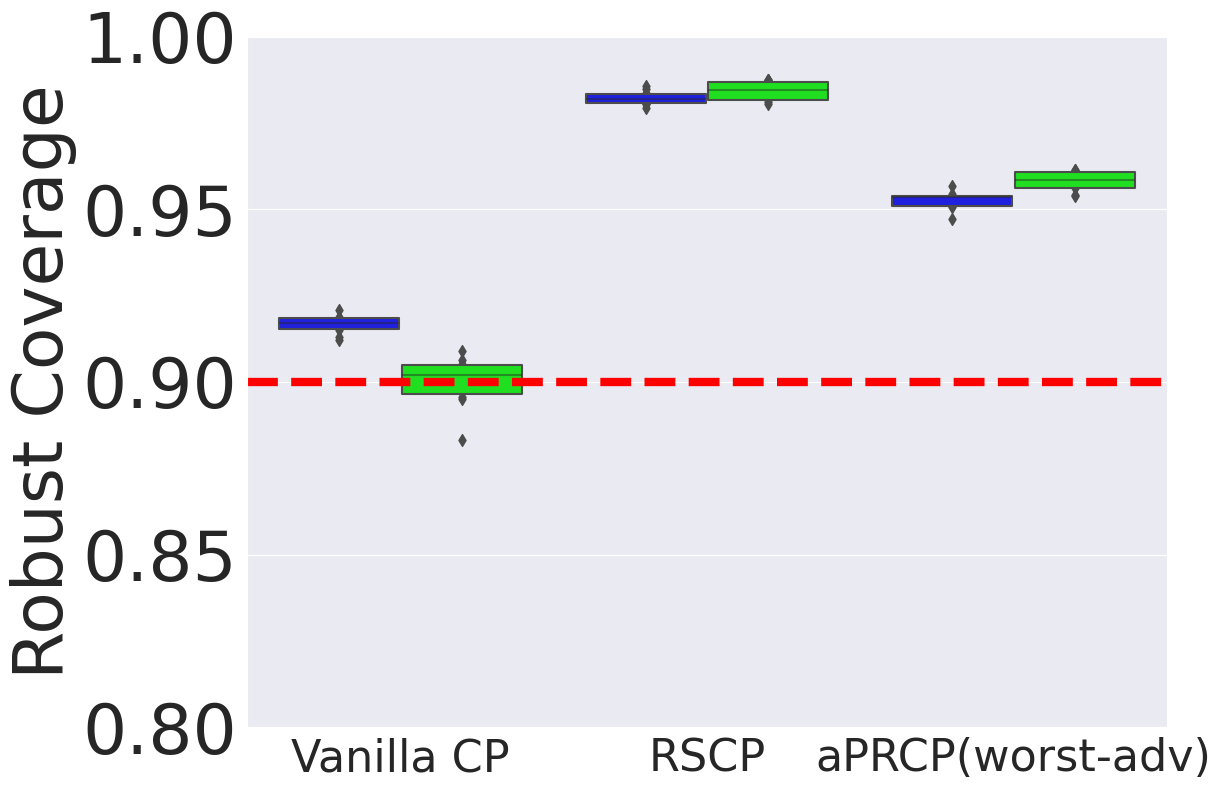}
        \end{minipage}
        \hfill
        \begin{minipage}{.49\linewidth}
            \centering
            \includegraphics[width=\linewidth]{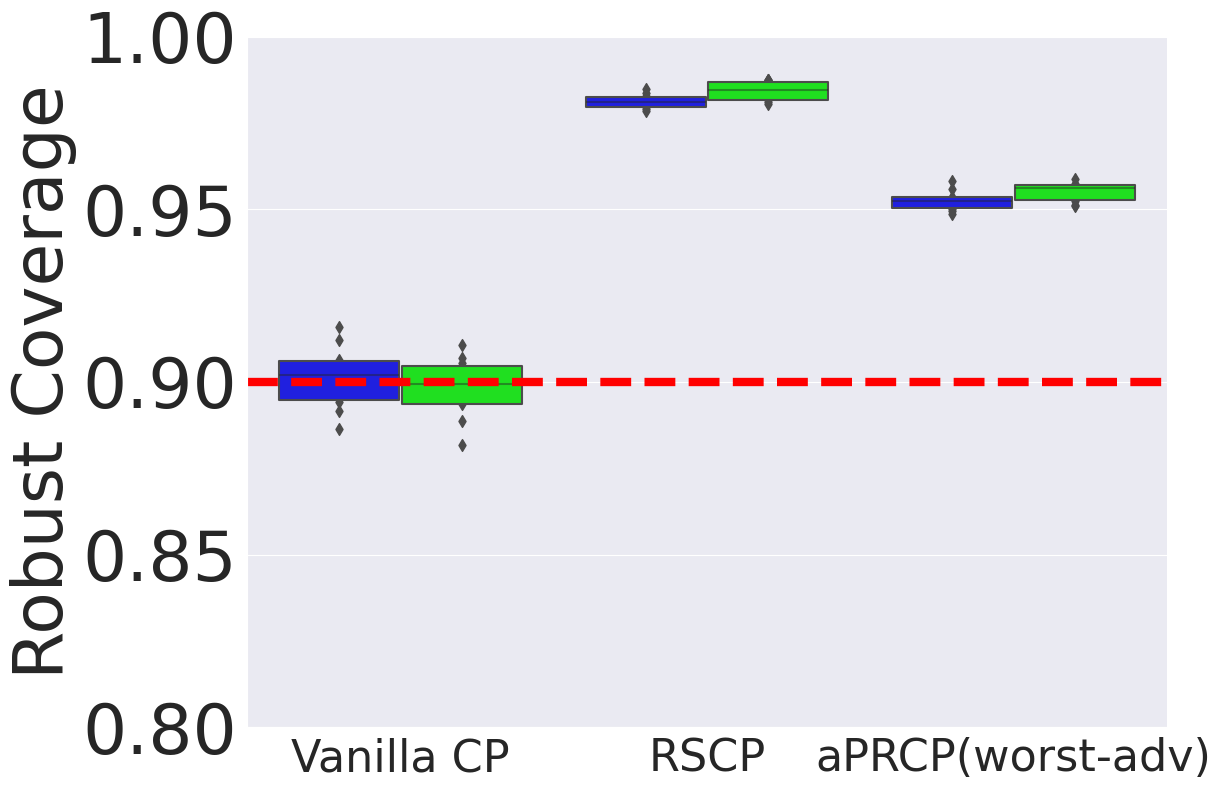}
        \end{minipage}    
        \hfill
        \begin{minipage}{.49\linewidth}
            \includegraphics[width=\linewidth]{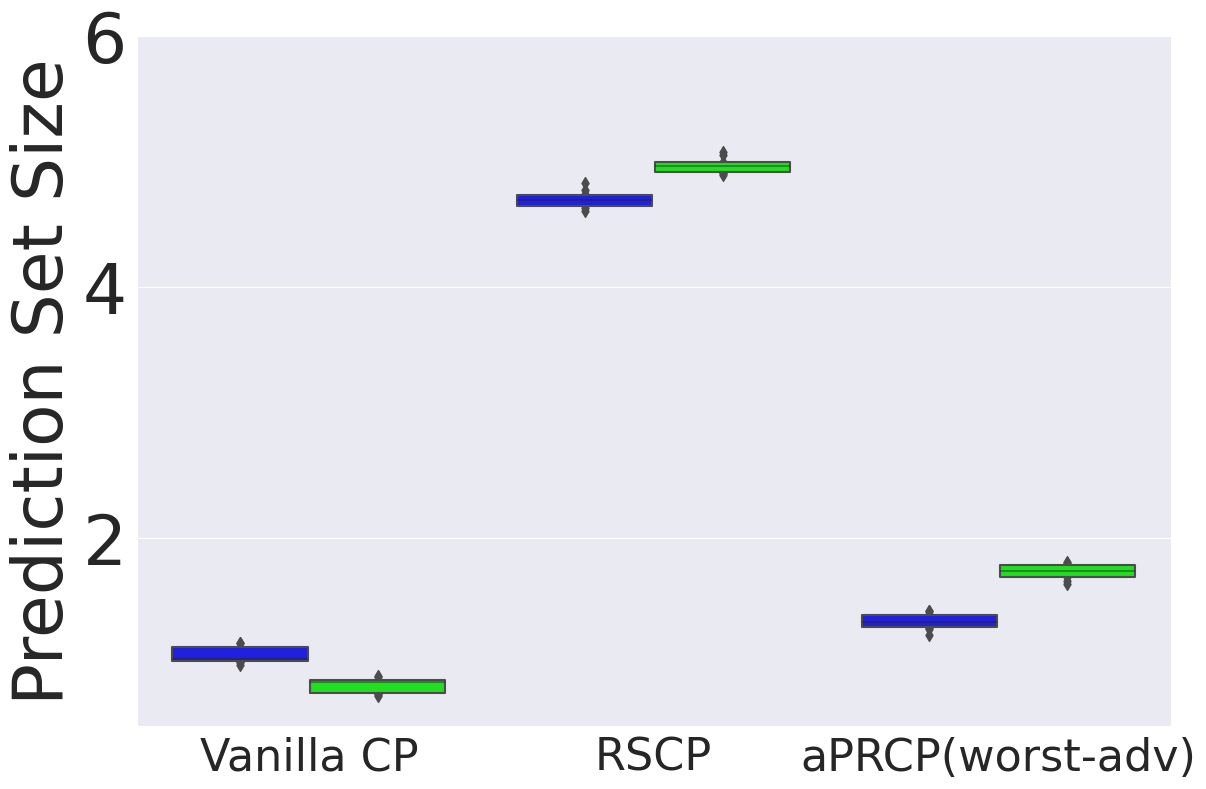}
        \end{minipage}
        \hfill
        \begin{minipage}{.49\linewidth}
            \centering
            \includegraphics[width=\linewidth]{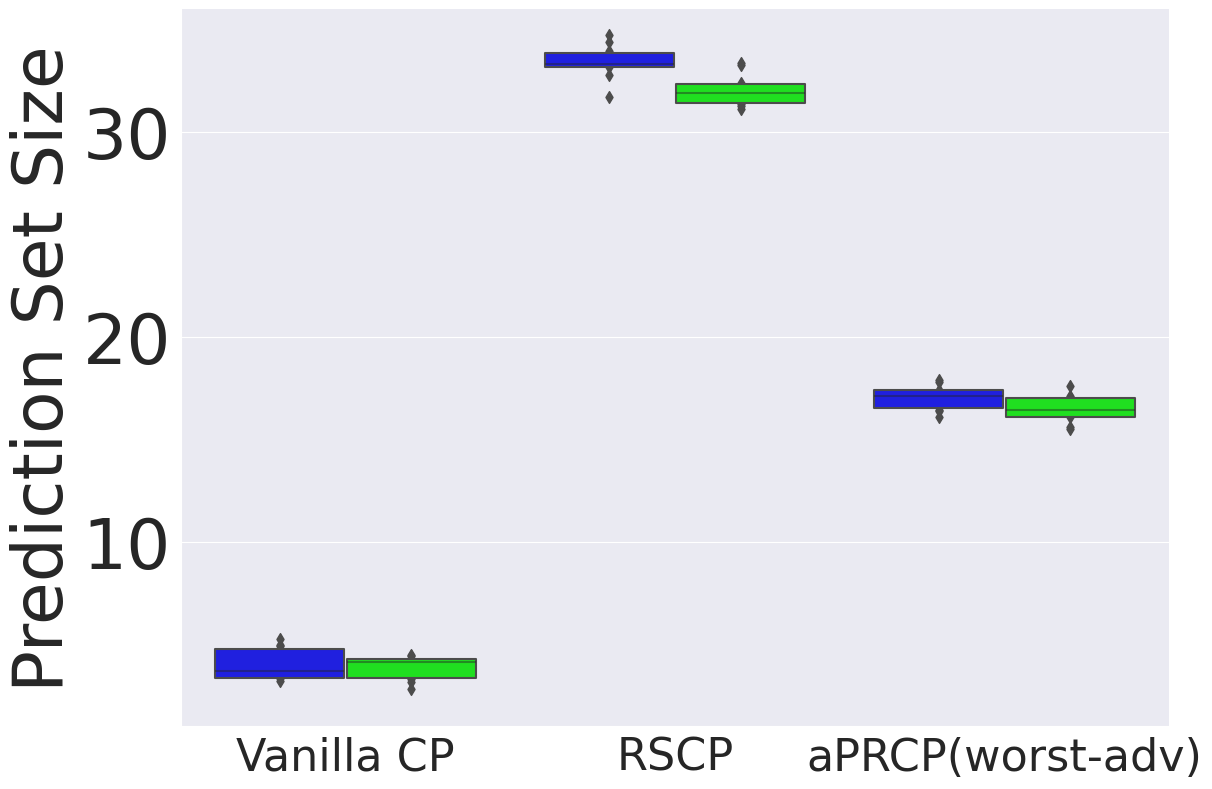}
        \end{minipage}    
    \end{minipage}
    \caption{Robust coverage (top) and prediction set size (bottom) constructed by Vanilla CP, RSCP, and aPRCP(worst-adv) using HPS and APS conformity scoring functions (target coverage is $90\%$) for the CIFAR10 and CIFAR100 data sets. Results are averaged over 50 different runs.}
    \label{Clean_data_results}
\end{figure*}

\end{document}